\documentclass{article}


\PassOptionsToPackage{square,numbers}{natbib}

\usepackage[final]{neurips_2023}



\usepackage[utf8]{inputenc} 
\usepackage[T1]{fontenc}    
\usepackage{hyperref}       
\usepackage{url}            
\usepackage{booktabs}       
\usepackage{amsfonts}       
\usepackage{nicefrac}       
\usepackage{microtype}      
\usepackage{color}         
\usepackage{xspace}
\usepackage{ colortbl}
\definecolor{LightCyan}{rgb}{0.8,1,1}
\usepackage{graphicx}
\usepackage{subfig}
\usepackage{amsmath}
\usepackage{multirow}
\usepackage{listings}
\usepackage[table]{xcolor}
\usepackage{array}

\lstdefinestyle{myStringStyle}{
  basicstyle=\ttfamily\color{blue},
  stringstyle=\ttfamily\color{blue},
  breaklines=true,
  breakatwhitespace=true,
  showspaces=false,
  showstringspaces=false,
  breakindent=0pt,
}
\newtheorem{theorem}{Theorem}

\newtheorem{proof}{Proof}

\newcolumntype{H}{>{\setbox0=\hbox\bgroup}c<{\egroup}@{}}

\newcommand{\modelname}{\textsc{gimlet}\xspace}

\definecolor{ForestGreen}{RGB}{34,139,34}

\newcommand{\finished}[1]{\textcolor{red}{}}


\title{\modelname: A Unified Graph-Text Model for Instruction-Based Molecule Zero-Shot Learning}


%


\author{Haiteng Zhao$^1$, Shengchao Liu$^2$, Chang Ma$^3$, Hannan Xu$^4$,\\ \textbf{Jie Fu$^5$, Zhi-Hong Deng$^1$, Lingpeng Kong$^3$, Qi Liu$^3$}\\
$^1$ Peking University $^2$ Mila $^3$ The University of Hong Kong \\$^4$ University of Oxford $^5$ 	Hong Kong University of Science and Technology \\}

\begin{document}

\maketitle

\begin{abstract}
Molecule property prediction has gained significant attention in recent years.
The main bottleneck is the label insufficiency caused by expensive lab experiments.
In order to alleviate this issue and to better leverage textual knowledge for tasks, 
this study investigates the feasibility of employing natural language instructions to accomplish molecule-related tasks in a zero-shot setting. We discover that existing molecule-text models perform poorly in this setting due to inadequate treatment of instructions and limited capacity for graphs. 
To overcome these issues, we propose \modelname, which unifies language models for both graph and text data. By adopting generalized position embedding, our model is extended to encode both graph structures and instruction text without additional graph encoding modules. \modelname also decouples encoding of the graph from tasks instructions in the attention mechanism, enhancing the generalization of graph features across novel tasks. We construct a dataset consisting of more than two thousand molecule tasks with corresponding instructions derived from task descriptions. We pretrain \modelname on the molecule tasks along with instructions, enabling the model to transfer effectively to a broad range of tasks. Experimental results demonstrate that \modelname significantly outperforms molecule-text baselines in instruction-based zero-shot learning, even achieving closed results to supervised GNN models on tasks such as toxcast and muv.  \footnote{The code, model, and data are available at https://github.com/zhao-ht/GIMLET.}

\end{abstract}

\section{Introduction}

Molecule machine learning has gained significant attention in recent years \citep{chithrananda2020chemberta,rong2020self,zeng2022deep,xia2023mole}, including tasks like property prediction~\citep{duvenaud2015convolutional,gilmer2017neural},\finished{todo(zht): this is not the correct citation style somehow [1] instead of (1), please fix} molecule design \citep{you2018graph,liu2018constrained,jin2020hierarchical,irwin2022chemformer}, and others, which have broad applications in biomedical research. The primary method of molecule tasks is graph machine learning \citep{gilmer2017neural}, where graphs are employed to represent molecule topology. Presently, graph machine learning approaches for molecules mainly follow the pretraining and finetuning paradigm \citep{hu2019strategies,sun2020infograph,liu2021pre,zaidi2022pre}. After the pretraining on large molecule corpus, models are able to encode informative molecule representations and generalize to downstream tasks by supervised finetuning.

One limitation of the supervised finetuning approach is the requirement on labeled data to acquire task-specific features for prediction on downstream tasks. This requirement poses a challenge in scenarios where obtaining labeled data is arduous or costly, especially in the field of molecules, \finished{this is particularly a problem in molecule discovery domain i assume? if so we can mention that} and the training cost also restricts the efficiency of downstream tasks\finished{why? in what way}. Additionally, the supervised method exclusively relies on labeled data for acquiring task information, and is therefore unable to incorporate other additional information about the task. For instance, there exists a wealth of information regarding molecule assay tasks often provided in textual descriptions. Unfortunately, the supervised method fails to leverage such information, thereby restricting its flexibility.\finished{i don't feel quite comfortable call it ``finetuning'' method -- maybe just say previous method, fine-tuning can have a very board meaning (a lot of things can be called fine-tuning in general). I think you can discuss the disadvantage of finetuning but just don't refer to those things as ``fine-tuning methods''.}

In this work, we propose to investigate the feasibility of employing instructions to accomplish molecule-related tasks in a zero-shot setting. The instructions, also referred to as prompts \citep{raffel2020exploring,brown2020language}, constitute natural language descriptions of the task to be executed. 
This is inspired by recent progress in natural language processing, where the large language models possess strong generalization performance to unseen tasks by following instructions \citep{wang2022super, ouyang2022training}.
This approach eliminates the need for labeled data and leverages the textual knowledge available for downstream tasks.

Several studies have explored the molecule-text language models, employing either autoregressive pretraining using language models on both the text and molecule strings (SMILES) \citep{edwards2022translation,zeng2022deep,taylor2022galactica}, or contrastive pretraining of molecule and text where molecules are encoded by Graph Neural Networks (GNN) \citep{edwards2021text2mol,su2022molecular,liu2022multi, seidl2023enhancing}. 
However, these works lack the investigation into instruct-based zero-shot for downstream tasks, and our experiments have empirically demonstrated their limited performance.

We conjecture that the constraints of current molecule-text language models are mainly imposed by 
their inadequate treatment of instructions during pretraining and their limited capacity to represent molecule structures. First, the pretraining corpora of these models lack task description about descriptions for abundant molecule tasks as well as the supervised signal. They mainly address the capacity of general molecule-related text understanding, which may be insufficient to capture details of complex instructions to molecule properties.
Second, these models' capacity is limited in representing graph structures of molecules. Molecular tasks heavily rely on understanding the structure information,  while existing multimodal methods either encode SMILES sequence representation of molecules by language model or encode molecular graphs into dense vector representation using GNN, which are not conducive for language models to gain a deep understanding of graph features.


To facilitate natural language instruction-based graph learning for zero-shot molecule tasks, we propose an innovative structured language model called \modelname (\textbf{G}raph \textbf{I}nstruction based \textbf{M}olecu\textbf{L}e z\textbf{E}ro-sho\textbf{T} learning). First, \modelname extends large language models for both graph and text data by applying transformer mechanisms with generalized position embedding, ensuring strong capacity for both the learning of graph structure representations and the executing of instruction texts without introducing additional graph encoding modules. 
We further enhance the model by decoupling the encoding of the graph from specific tasks via employing attention masks. This allows for improved generalization of graph features across novel tasks. 

In addition to the advanced model architecture, our approach incorporates instruction-based pretraining on an extensive dataset of molecule tasks with textual instructions. We construct a dataset comprises of 2K tasks accompanied by corresponding instructions tailored for instruction-based molecule zero-shot learning framework. Throughout the pretraining process, molecule graphs are paired with natural language instructions for molecule-related tasks, and the supervision signal is provided to train the model in executing various molecule tasks. This methodology empowers \modelname to comprehend specific instructions for downstream tasks expressed in natural language and transfer acquired knowledge to a broad range of tasks effectively.

Our experimental results show that \modelname outperforms molecule-text baselines by a substantial margin in instruction-based zero-shot learning, which is closed to supervised GNN on tasks like muv and toxcast. Additionally, \modelname also exhibits impressive performance in
few-shot learning and demonstrates robustness to instruction.
In summary, our contributions are as follows:



\begin{figure*}[htbp]
    \centering
    \subfloat{\includegraphics[width=0.9\linewidth]{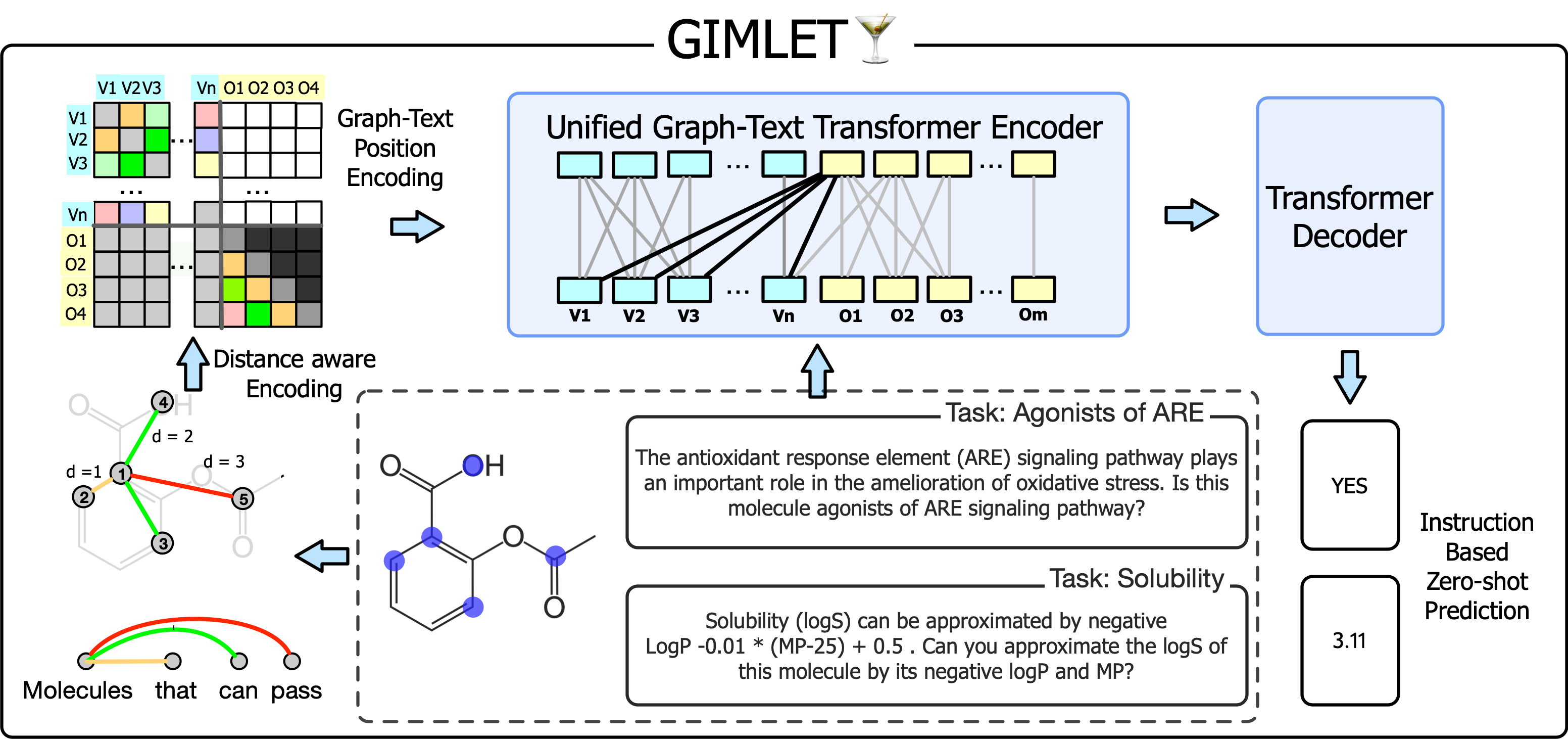}}
    \caption{Our framework handles molecule tasks in the zero-shot fashion by natural language instruction. Within \modelname, we employ distance-based joint position embedding to encode graphs and instruction texts. Additionally, we utilize attention masks to decouple the graph encoding process.}
    \label{framework}
    \vspace{-5mm}
\end{figure*}

\begin{itemize}



\item We present comprehensive investigations of natural language instruction-based graph zero-shot learning for molecule tasks, to reduce reliance on labeled data and leverage task textual knowledge. To accomplish this framework, we construct a molecule dataset consisting of two thousand tasks with instructions derived from task descriptions, which is open-sourced.

\item We propose \modelname, which extends language models to handle graph and text data. By applying the transformer mechanism with generalized position embedding and decoupled attention, our model learns graph structure representations and executes instruction texts without additional graph encoding modules. Instruction-based pretraining is applied for \modelname to comprehend specific instructions
expressed in natural language and transfer to a broad range of zero-shot tasks.


\item Our experiment results outperform other molecule-text models by a substantial margin and demonstrate promising results in instruction-based zero-shot molecule tasks, demonstrating the viability of this framework. Additionally, \modelname exhibits impressive performance in few-shot learning and demonstrates robustness to instruction.







\end{itemize}

\section{Related Work}




\textbf{Molecule Representation Learning}
One approach for molecular representation learning is utilizing language models to acquire molecular representations based on Simplified Molecular Input Line Entry System (SMILES) strings \citep{wang2019smiles,chithrananda2020chemberta}. Toward stronger capability to incorporate pertinent substructure information, Graph Neural Networks (GNNs) are proposed to model molecules as graphs \cite{gilmer2017neural,you2020graph,hu2019strategies,liu2019hyperbolic}. Existing GNNs follow the message-passing paradigm and suffer from problems like long-range dependency vanishing and over-smoothing. Recently, graph transformer~\citep{rong2020self,ying2021transformers} has been proposed to better encode structures of graphs, illustrating effectiveness in molecule tasks \citep{maziarka2020molecule,kreuzer2021rethinking,chen2022structure,zhaomore}.

\textbf{Molecule Pretraining} 
To fully explore the inherent structural information of molecules on a large scale, significant efforts have been made to molecular pretraining. Supervised pretraining is commonly used for learning useful representations \citep{hu2019strategies,ying2021transformers,sun2022does}. As for unsupervised pretraining, one approach involved the generative strategy on molecular SMILES strings \citep{wang2019smiles,honda2019smiles,chithrananda2020chemberta,bagal2021molgpt,ross2022molformer} and graph \citep{hu2019strategies,liu2019n,liu2023molecular,rong2020self,zhang2021motif}, which was followed by recent works adopting the contrastive paradigm to distinguish augmented views of the same graph and other graphs \citep{velickovic2019deep, sun2020infograph,hassani2020contrastive,you2020graph,you2021graph,suresh2021adversarial,xu2021infogcl,fang2022molecular,sun2021mocl,wang2021molecular,xia2022simgrace,wang2022improving,liu2021pre}.

Besides the structure-only pretraining, a few recent works incorporate natural language into molecule pretraining. One class of method is the SMILES-based language model, including KVPLM \citep{zeng2022deep} and MolT5 \citep{edwards2022translation}, which use SMILES strings and text for joint representation and translation. Another work Galactica \citep{taylor2022galactica} explored the multi-task molecule task learning with instruction. Some other works acquire advanced representations for molecules by GNN, such as Text2Mol \citep{edwards2021text2mol}, MoMu \citep{su2022molecular}, MoleculeSTM \citep{liu2022multi}, and CLAMP \citep{seidl2023enhancing}, trained by contrastive learning between molecule graph and text description for molecule retrieval and caption tasks. MoleculeSTM and CLAMP explored molecule editing and property prediction with text. However, previous works lack the investigation into instruct-based zero-shot scenarios for complex molecule tasks like property prediction.

\textbf{Instruction-based Zero-Shot Learning} Instruction-based zero-shot learning is an innovative approach that leverages natural language instructions to enable neural models to solve a variety of tasks \citep{raffel2020exploring,brown2020language,schick2021exploiting,efrat2020turking,zhong2021adapting,mishra2021reframing,mishra2022cross,parmar2022boxbart}. To enhance the model's ability to follow instructions, some researchers have employed instruction-based pretraining techniques \citep{sanh2021multitask,wang2022super, chung2022scaling,ouyang2022training}, which explicitly train language models to solve tasks with instructions. Besides natural language processing, instruction-based zero-shot learning is also studied in multimodal domains like images \citep{bao2021beit,chen2022pali,alayrac2022flamingo,li2023blip}.

\section{Method}



\subsection{Problem Formulation and Framework Overview}





The molecule task is denoted as $\tau$ and consists of a set of graph data and their corresponding labels ${(G_i, y^{\tau}_i)}$. A molecule graph $G=(N, E, v, e)$ includes nodes $N=\{1,\dots,n\}$ and edges $E \subset N \times N$, while $v$ and $e$ represent node features and edge features, respectively. The label $y^{\tau}$ can take various forms, such as class labels or numerical values, depending on the type of task. The instruction of $\tau$ is denoted as $T^\tau$, which is a sentence $[o_1,\dots,o_m]$ describing the task.

In the supervised approach, a model $F_{\theta}$ predicts the label $\hat{y}$ given a graph $G$, i.e., $\hat{y}=F_{\theta^\tau}(G)$, by supervised finetuning individually for each downstream task $\tau$.  The limitation is that it relies on labeled data to learn the corresponding parameters and output modules, and finetuning does not enable the model to effectively utilize extra task-related knowledge.



To overcome these limitations, our framework leverages natural language instructions $T^{\tau}$ to provide the model with task information, as illustrated in Figure \ref{framework}. Our zero-shot graph learning framework incorporates molecule graphs $G$ and task instructions $T^\tau$ into a graph-text language model \modelname and decodes the output as text uniformly for different tasks,  i.e. $\hat{y}_{\operatorname{str}}=\operatorname{\modelname}(G, T^{\tau})$, where $\hat{y}_{\operatorname{str}}$ is the label string. This description-based instruction framework empowers the model to handle a wide range of molecule tasks as long as they can be described in natural language, and the uniform text decoding approach accommodates various types of tasks including classification or regression.

Previous pretrained molecule-text models perform poorly in our framework due to the inadequate treatment of instructions and limited capacity for graphs. Our method addresses these from two aspects: First, we propose a unified language model \modelname for both graph and text, towards the stronger capacity to represent molecules and instructions; Second, we adopt instruction-based pretraining to \modelname, enabling generalization to new tasks based only on the instructions.

\subsection{Unified Graph-Text Transformer}


    

The common method for multimodal language models is obtaining the feature of the other modality by applying an additional encoding module, then integrating it into the language model, as in other molecule language models with GNN encoder \citep{edwards2021text2mol,su2022molecular,seidl2023enhancing}.
The individual encoding module benefits for decoupling the graph feature encoding from instructions, i.e., the features of graph data can be independent of task instructions in the early encoding stage, helping for generalization when the distribution of tasks changes.

However, for the molecule-text model, individual pre-encoding modules present problems. First, graph learning relies on structure information, but the dense vectors encoded by GNN have a limited capacity to carry structure information. Furthermore, training the additional module is difficult due to the increased layers, since deep transformers have vanishing gradients in early layers \citep{liu2020understanding,bachlechner2021rezero}. Lastly, the additional modules increase parameters and training costs.

To overcome  these issues, we propose a novel approach \modelname which not only directly unifies the standard language model for graph and text \textit{without} introducing additional graph encoder module, but also remains the decoupled graph encoding for better generalization.


Given graph  $G$ and text input $T$, to utilize \modelname, we represent the graph nodes and text tokens as tokens. The resulting hidden states are denoted as $H=[h_1,\dots,h_n,h_{n+1},\dots,h_{n+m}]^T\in \mathbb{R}^{(n+m) \times d_h}$ for corresponding $n$ graph nodes and $m$ text tokens. 


In this study, we choose T5 \citep{raffel2020exploring} as the backbone language model, due to the encoder-decoder architecture suitable for non-sequential encoding and text output. It utilizes the relative position embedding method \citep{shaw2018self} to represent sequential structure. In attention layer ${\operatorname{Attn}}$ with parameter $W^V,W^Q,W^K \in \mathbb{R}^{d_h \times d_k}$ and $W^O \in \mathbb{R}^{d_k \times d_h}$, relative position embedding for i-th and j-th token is 
formalized as 
\begin{small}
\begin{equation} \label{eq:attention}
\begin{aligned}
\hat{A}_{i j}=\frac{\left(h_i W^Q\right)\left(h_j W^K\right)^T}{\sqrt{d_k}}+b\left(i, j\right),
 A=\operatorname{softmax}(\hat{A}), \quad 
{\operatorname{Attn}}(H)= AH W^{V}W^O,
\end{aligned}
\end{equation}
\end{small}
where $b(i,j)$ is embedding of the relative distance between $i$ and $j$, i.e. $i-j$ for sequence. 
For graph-text joint data,
we construct the position embedding $b(i,j)$ in E.q. \ref{eq:attention} by the conjunction of different types of distances.
For the relative position of graph nodes, we adopt the graph shortest distance, which has been widely used in the literature of graph transformer \citep{ying2021transformers,choromanski2022block,park2022grpe}.
We also use unidirectional constraint in attention to decouple the graph encoding from the instruction. The overall form of position embedding for 
\modelname is:
\begin{small}
\begin{equation}
\begin{aligned}
b(i,j)=b^D_{\operatorname{\textsc{POS}}\left(i, j\right)}+
b^M_{i,j}+
\mathop{\operatorname{Mean}}\limits_{ k \in \operatorname{\textsc{SP}}(i,j)} b^E_{e_k},
\end{aligned}
\end{equation}
\end{small}
where $b^{D}$, $b^{M}$, and $b^E$  are position bias, masking, and path embedding, individually. The relative position $\operatorname{\textsc{POS}}$ in $b^D_{\operatorname{\textsc{POS}}\left(i, j\right)}$ is defined as the conjunction of different types of distances between tokens, which allows for the effective encoding of both graph and text data, as well as their interaction:
\begin{small}
\begin{equation}
\begin{aligned}
&\operatorname{\textsc{POS}}\left(i, j\right)=
\begin{cases}
        i-j & \text{if } n+1 \le i,j \le n+m\\
        \operatorname{\textsc{Graph Shortest Distance}}(i,j) & \text{if } 1 \le i,j \le n\\
        \operatorname{<\textsc{Cross>}} & \text{otherwise}
        \end{cases},
\end{aligned}
\end{equation}
\end{small}
where $\operatorname{\textsc{<Cross>}}$ is a special distance token held out for cross distance between graph and text tokens. $b^M_{i,j}$ aims to represent the cross mask used to decompose graph encoding from text instructions. It imposes a unidirectional constraint from the graph to the text:
\begin{small}
\begin{equation}
\begin{aligned}
&b^M_{i,j}=
        -\infty \text{ if } i\le n \text{ and } j>n \quad \text{otherwise } 0 
\end{aligned}
\end{equation}
\end{small}
With the unidirectional constraint, graph tokens are limited to attending only to other graph tokens. On the other hand, instruction tokens have the ability to receive information from both instructions and graphs. This approach allows us to separate the encoding of graphs from instructions, enabling instructions to selectively utilize graph features for various downstream tasks.

Finally, $\mathop{\operatorname{Mean}}_{ k \in \operatorname{\textsc{SP}}(i,j)} b^E_{e_k}$ is the mean pooling of the edge features $b^E_{e_k}$ in the shortest path $\operatorname{SP}(i,j)$ between node $i$ and $j$, which is only defined between graph node tokens, as used along with graph shortest distance in \citep{ying2021transformers}.


The generalized position embedding is applied to the encoder transformer. During decoding, the decoder generates the answer based on the text features outputted by the encoder.

\modelname unifies graph and text data by a single language model, which has the following merits: \textbf{(i)} In comparison to additional encoding module methods, \modelname not only avoids the challenges of training additional front layers but also provides a stronger capacity for handling graph data by introducing graph inductive bias to the whole transformer. \textbf{(ii)} Our method leverages both the existing knowledge within the language model and facilitates learning to execute instruction-based graph tasks through standard instruction-based learning on the language model. This approach eliminates the need for additional modeling costs. \textbf{(iii)} The decomposed encoding of graph data retains the advantages of the individual encoding module, reducing task disturbance and ensuring that data features remain independent of task instructions. This enhances generalization for novel instructions. We validate these claims in experiments Subsection \ref{section model ablation}.

\subsection{Pretraining and Datasets}

\begin{figure*}[htbp]
    \centering    
    \subfloat{\includegraphics[width=0.33\linewidth]{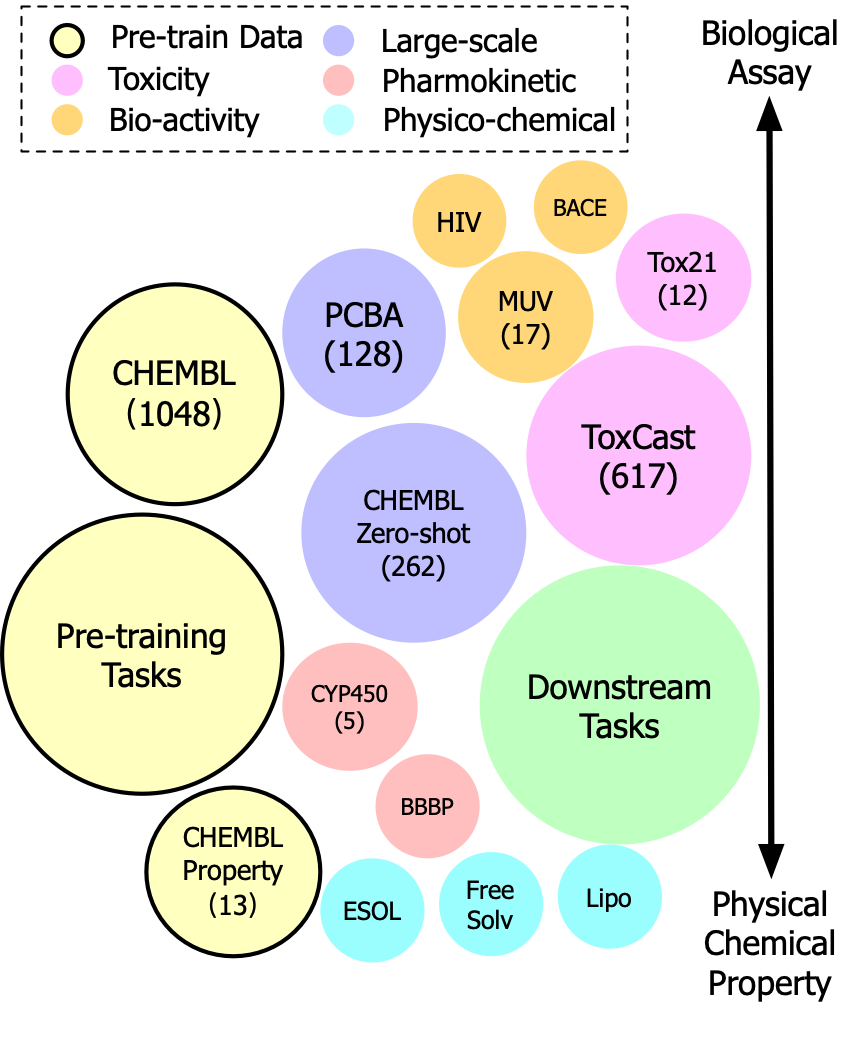}}
    \subfloat{\includegraphics[width=0.66\linewidth]{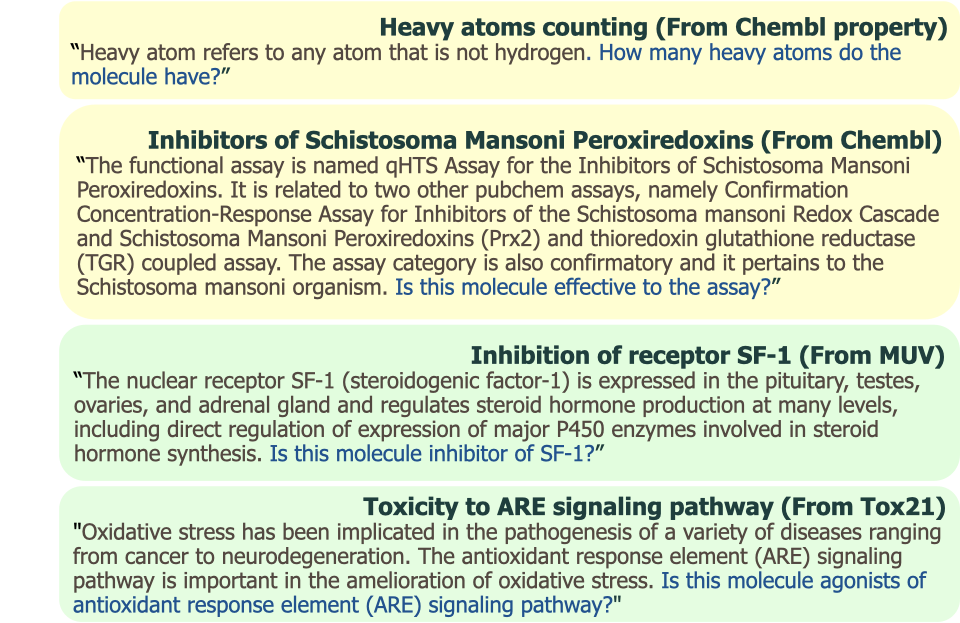}}
    \caption{(Left) Illustration of datasets. Circle size corresponds to task number. Tasks are organized by category. Tasks on the top are more related to biological assay, on the bottom need more chemical and physical properties. \modelname is trained on pretraining tasks, then tested on downstream tasks in the zero-shot setting. (Right) Our task instructions contain task explanations and questions.}
    \label{dataset}
    \vspace{-3mm} 
\end{figure*}

\textbf{Pretraining}
Our approach leverages the generalization capabilities acquired through learning from the instructions provided during pretraining, where comprehensive linguistic information and knowledge related to molecular tasks are learned from the provided instructions.
The pretraining process is conducted in a supervised manner using task labels. 
The loss function for supervised pretraining can be formalized as follows:
\begin{small}
\begin{equation}
\begin{aligned}
L=\frac{1}{\sum_\tau |\tau |}\sum_{\tau} \sum_{(G_i,y^{\tau}_{str,i}) \in \tau} - \frac{1}{|y^{\tau}_{str,i}|} \operatorname{log} P_{\operatorname{\modelname}}(y^{\tau}_{str,i}|G_i,T^{\tau}),
\end{aligned}
\end{equation}
\end{small}
where $P_{\operatorname{\modelname}}$ is the model likelihood for the label string,   $|y^{\tau}_{str,i}|$ represents the label string token number aiming to normalize the loss of long sequences, and $|\tau |$ is the task sample number.
The details of pretraining and downstream zero-shot testing are in Appendix. 

\textbf{Pretraining Dataset} To effectively pretrain \modelname for downstream tasks, it is crucial to include a large number of tasks to provide the model with an extensive task corpus. To this end, we select Chembl \citep{gaulton2012chembl} as the pretraining dataset, which is widely used for supervised graph pretraining \citep{hu2019strategies,sun2022does}. It consists of 1,310 prediction target labels from biological assays for drug discovery. We divided $80\%$ of the tasks and $80\%$ of the molecules in random for pretraining, while the remaining non-overlapping tasks were reserved for zero-shot testing. Additionally, we constructed the Chembl-property dataset, which encompasses various physical and chemical properties available in the Chembl database \citep{gaulton2012chembl} like molecule weights, log p values, hydrogen bound acceptor numbers, etc. Full details are in Appendix. We validate the effect of Chembl biological tasks and Chembl-property physico-chemical tasks in pretraining in Subsection \ref{section ablation pretraining}.

\textbf{Downstream Dataset} We target a large range of molecule tasks, as shown in Figure \ref{dataset}. First, we include large-scale datasets PCBA \citep{wang2012pubchem}, which contains 128 biological assay tasks with abundant molecules. We also include the hold-out zero-shot set of Chembl, noted as Chembl Zero-Shot. These two datasets form a large-scale dataset. We also target tasks from MoleculeNet \citep{wu2018moleculenet}, a popular benchmark for molecule properties prediction. We adopt the dataset categorization method in \citep{shen2021out} which classifies MoleculeNet datasets into four categories: Physico-chemical tasks, Bio-activity tasks, Toxicity tasks, and Pharmacokinetic tasks. Additional dataset CYP450 \citep{li2018prediction} is also included in the Pharmacokinetic tasks. These tasks cover diverse aspects of molecule properties, posing a significant challenge for a unified model to simultaneously handle them in a zero-shot fashion.


\textbf{Instructions} To provide essential background information and context for each task, we include task explanations and descriptions in our instructions. The description covers a wide range of aspects, including the family, function, and mechanism of the assay target, the assay experiment setting, the approximation method used for determining the property, and others. See examples in Figure \ref{dataset}. Instructions for all tasks are available in the code file. The task explanation is primarily sourced from websites and databases that introduce and compile the respective datasets, or relevant papers. Details are in Appendix. The descriptions are then concatenated with relevant questions as instructions. These instructions are subsequently reviewed and validated by a professional biology Ph.D. student.

\section{Experiment}


In the experiments, we investigate the following inquiries:
(\textbf{i}) Can \modelname effectively handle zero-shot molecule property tasks by instructions?
(\textbf{ii}) Can \modelname performs better by few-shot learning?
(\textbf{iii}) What impact does model architecture have on the performance of \modelname?
(\textbf{iv}) How does pretraining affect the performance of \modelname?
(\textbf{v}) How does the form of instruction influence \modelname for molecule zero-shot learning?


\subsection{Instruction-Based Zero-Shot Learning}

\textbf{Baselines} In the zero-shot setting, we compare \modelname with three molecule-text models:  SMILES-base language model KVPLM \citep{zeng2022deep}, Galactica \citep{taylor2022galactica}, and GNN-language model MoMu \citep{su2022molecular}. Other molecule-text models \citep{edwards2022translation,edwards2021text2mol,liu2022multi} either haven't released parameters or are difficult to handle zero-shot setting. Notably, the pretraining of Galactica includes the MoleculeNet datasets, which is thus not strictly zero-shot on some of our tasks. We report the result of all the baselines with our zero-shot learning framework and instructions. The details of baseline evaluation are in Appendix.



To establish upper bounds for task performance, we also illustrate the supervised results of popular graph models. For GNNs, we includes GCN \citep{kipf2016semi}, GAT \citep{velivckovic2017graph}, and GIN \citep{xu2018powerful}. We also include the graph transformer Graphormer \citep{ying2021transformers}. To mitigate the impact of our pretraining, we additionally perform supervised pretraining on Graphormer using our pretraining datasets, referred to as Graphormer-p.


\textbf{Settings} Following the standard supervised setting in previous studies \citep{hu2019strategies}, we adopt the Scaffold split \citep{ramsundar2019deep} with a ratio of 0.8, 0.1, 0.1 for all the datasets, and report results on the testing sets, ensuring the comparability of our results to previous works. For classification tasks, we employ ROC-AUC as the evaluation metric, while for regression tasks, we utilize RMSE.


\begin{table}[h!]
\centering
\caption{Zero-shot performance (ROC-AUC) over Bio-activity, Toxicity, and Pharmacokinetic tasks.}
\label{table:performance_comparison}
\resizebox{\linewidth}{!}{
\setlength{\tabcolsep}{1.0 mm}{
\begin{tabular}{llc|cccc|ccc|ccc}
\hline
Method & \#Param & Type & bace & hiv & muv & Avg. bio & tox21 & toxcast & Avg. tox & bbbp & cyp450 & Avg. pha \\ 
\hline
KVPLM & 110M &  \multirow{5}{*}{Zero Shot} & 0.5126 & 0.6120 & 0.6172 & 0.5806 & 0.4917 & 0.5096 & 0.5007 & 0.6020 & 0.5922 & 0.5971 \\
MoMu & 113M &  & 0.6656 & 0.5026 & 0.6051 & 0.5911 & 0.5757 & 0.5238 & 0.5498 & 0.4981 & 0.5798 & 0.5390 \\
Galactica-125M & 125M &  & 0.4451 & 0.3671 & 0.4986 & 0.4369 & 0.4964 & 0.5106 & 0.5035 & \textbf{0.6052} & 0.5369 & 0.5711 \\
Galactica-1.3B & 1.3B &  & 0.5648 & 0.3385 & 0.5715 & 0.4916 & 0.4946 & 0.5123 & 0.5035 & 0.5394 & 0.4686 & 0.5040 \\
\rowcolor{LightCyan} \modelname (Ours) & 64M &  & \textbf{0.6957} & \textbf{0.6624} & \textbf{0.6439} & \textbf{0.6673} & \textbf{0.6119} & \textbf{0.5904} & \textbf{0.6011} & 0.5939 & \textbf{0.7125} & \textbf{0.6532} \\ 
\hline
GCN & 0.5M & \multirow{5}{*}{Supervised}  & \textit{0.736} & \textit{0.757} & \textit{0.732} & \textit{0.742} & \textit{0.749} & \textit{0.633} & \textit{0.691} & \textit{0.649} & 0.8041 & 0.7266 \\
GAT & 1.0M &  & \textit{0.697} & \textit{0.729} & \textit{0.666} & \textit{0.697} & \textit{0.754} & \textit{0.646} & \textit{0.700} & \textit{0.662} & 0.8281 & 0.7451 \\
GIN & 1.8M &    & \textit{0.701} & \textit{0.753} & \textit{0.718} & \textit{0.724} & \textit{0.740} & \textit{0.634} & \textit{0.687} & \textit{0.658} & 0.8205 & 0.7392 \\
Graphormer & 48M &  & 0.7760 & 0.7452 & 0.7061 & 0.7424 & 0.7589 & 0.6470 & 0.7029 & 0.7015 & 0.8436 & 0.7725 \\
Graphormer-p & 48M &  & 0.8575 & 0.7788 & 0.7480 & 0.7948 & 0.7729 & 0.6649 & 0.7189 & 0.7163 &  0.8877 & 0.8020 \\ 
\hline
\end{tabular}
}
}
\end{table}

\begin{minipage}{\textwidth}

\begin{minipage}[b]{0.4\textwidth}
\makeatletter\def\@captype{table}
\centering
\caption{Zero-shot performance (ROC-AUC) over large scale molecule tasks.}
\label{table:performance_largescale}
\resizebox{\linewidth}{!}{
\setlength{\tabcolsep}{2.0 mm}{
\begin{tabular}{lcc}
\hline
Method & Chembl Zero-Shot & PCBA \\ \hline
KVPLM & 0.4155 & 0.4811\\ 
MoMu & 0.5002 & 0.5150 \\ 
Galactica-125M & 0.6461	& 0.4800 \\
Galactica-1.3B & 0.4818	& 0.5202
 \\
\rowcolor{LightCyan} \modelname (Ours) & \textbf{0.7860} & \textbf{0.6211} \\ 
\hline
\end{tabular}
}}
\par\vspace{0pt}
\end{minipage}
\begin{minipage}[b]{0.6\textwidth}
\makeatletter\def\@captype{table}
\centering
\caption{Zero-Shot performance (RMSE) on Physical-chemical datasets.}
\label{table:performance_repression}
\resizebox{\linewidth}{!}{
\setlength{\tabcolsep}{2.5 mm}{
\begin{tabular}{l c c c c c}
\hline
Method & Type & ESOL & Lipophilicity & FreeSolv & Avg. phy \\
\hline
KVPLM & \multirow{3}{*}{Zero Shot} & - & - & - &-\\
MoMu &  & - & - & - & -\\
 \rowcolor{LightCyan}\modelname (Ours) &  & 1.132 & 1.345 & 5.103 & 2.527\\
\hline
GCN & \multirow{5}{*}{Supervised} & 1.331 & 0.760 & 2.119 & 1.403\\
GAT &  & 1.253 & 0.770 & 2.493 & 1.505\\
GIN &  & 1.243 & 0.781 & 2.871 & 1.632\\
Graphormer &  & 0.901 & 0.740 & 2.210 & 1.284\\
Graphormer-p &  & 0.804 & 0.675 & 1.850 & 1.110\\
\hline
\end{tabular}
}}
\par\vspace{0pt}
\end{minipage}
\end{minipage}

\textbf{Results} We report the result of different types of downstream tasks in Table \ref{table:performance_comparison}. The result of GIN, GCN and GAT for MoleculeNet are from \citep{hu2019strategies} which we mark by italic. We observe that in the zero-shot setting, \modelname outperforms most baselines on the majority of datasets, except for bbbp where \modelname also performs comparably to the baselines. In terms of the average performance across task categories, \modelname outperforms all baselines, demonstrating the effectiveness of our method for instruction-based zero-shot molecule tasks. It is worth noting that some of the baselines also achieve results on certain tasks. For example, KVPLM works on hiv, muv and bbbp, and MoMu works on tox21 and bace, showing our instruction-based molecule zero-shot learning method is a general framework to probe knowledge in molecule-text models.

Comparing our zero-shot performance to the supervised results, we observe that \modelname achieves performance close to those of GNNs on several datasets, like bace, muv, toxcast, and bbbp. This demonstrates that \modelname is able to solve molecule tasks in the zero-shot fashion nicely.

The results for large-scale molecule tasks are presented in Table \ref{table:performance_largescale}. As depicted, the baselines struggle to handle these tasks. Our \modelname not only successfully transfers to the Chembl Zero-Shot splits, where both the tasks and graphs were unseen during pretraining, but also demonstrates strong generalization performance on the PCBA benchmark.

The results of regression tasks are shown in Table \ref{table:performance_repression}. The scatter plots of the regression are in Appendix. It is worth noting that regression tasks pose greater challenges in the zero-shot setting than classification tasks, because it is difficult to determine unseen physico-chemical properties using only natural language, and the output space is also vast. Notably, the zero-shot baselines fail to perform the regression tasks due to their inability to output correctly formatted
numbers. In contrast, \modelname generates correctly formatted numbers for over 98\% regression testing samples in all the tasks and showcases the potential of zero-shot regression tasks.

\subsection{Instructions-Based Few-Shot Finetuning}

We apply few-shot instruction-based tuning on the downstream tasks, to examine whether \modelname exhibits improved performance in the presence of low-resource data. Notably, for datasets with more than one task, we do few-shot learning for every single task individually.  We first split datasets into training, validation, and testing sets in the same as the zero-shot setting. Then $K$ samples for each class are randomly sampled from the training set as the few-shot examples, where $K$ is the few-shot number. We report the result of the best validation model on the testing set. The input is the same as the zero-shot setting, including molecule data and instructions. We only tune the last linear layer, to inspect whether the feature is discriminative and linear separable. Linear tuning is also low resource costing and avoids overfitting.  The last linear mapping to vocabulary is tuned for \modelname and KVPLM. For MoMu, we tune the last linear layer of the projection head for features. 

The result is shown in Figure \ref{few_shot}, and plots for each dataset are in Appendix. \modelname outperforms other baselines consistently, exhibiting improved performance with an increasing number of few-shot samples. The performance of \modelname also approaches the supervised GIN with only limited samples. Baselines also achieve some performance gain in the few-shot training but remain beaten by \modelname, and the improvements are not stable, fluctuating with the few-shot number. The few-shot results demonstrate the high-quality representation learned by \modelname, which is well-suited for specific tasks, highlighting the potential of \modelname in scenarios beyond zero-shot learning.

\begin{figure*}[htbp]
    \centering
    \subfloat{\includegraphics[width=0.25\linewidth]{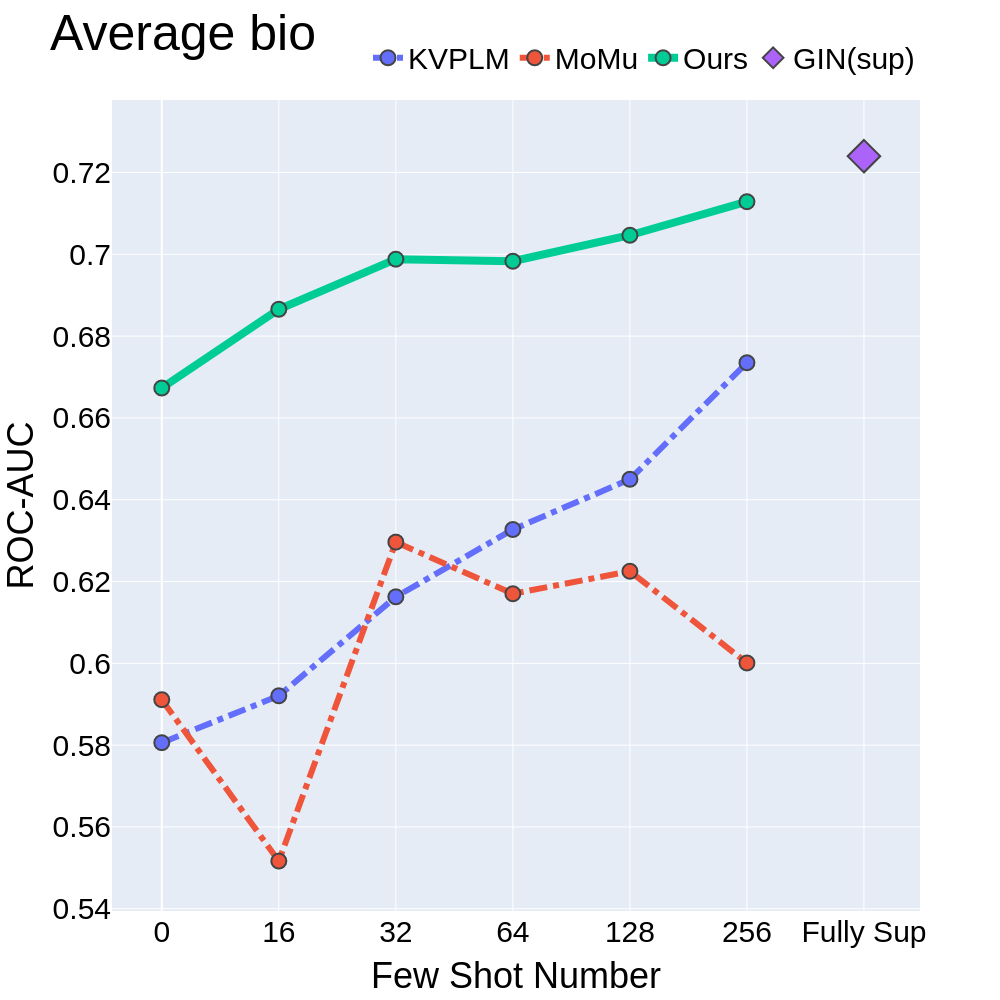}}
    \subfloat{\includegraphics[width=0.25\linewidth]{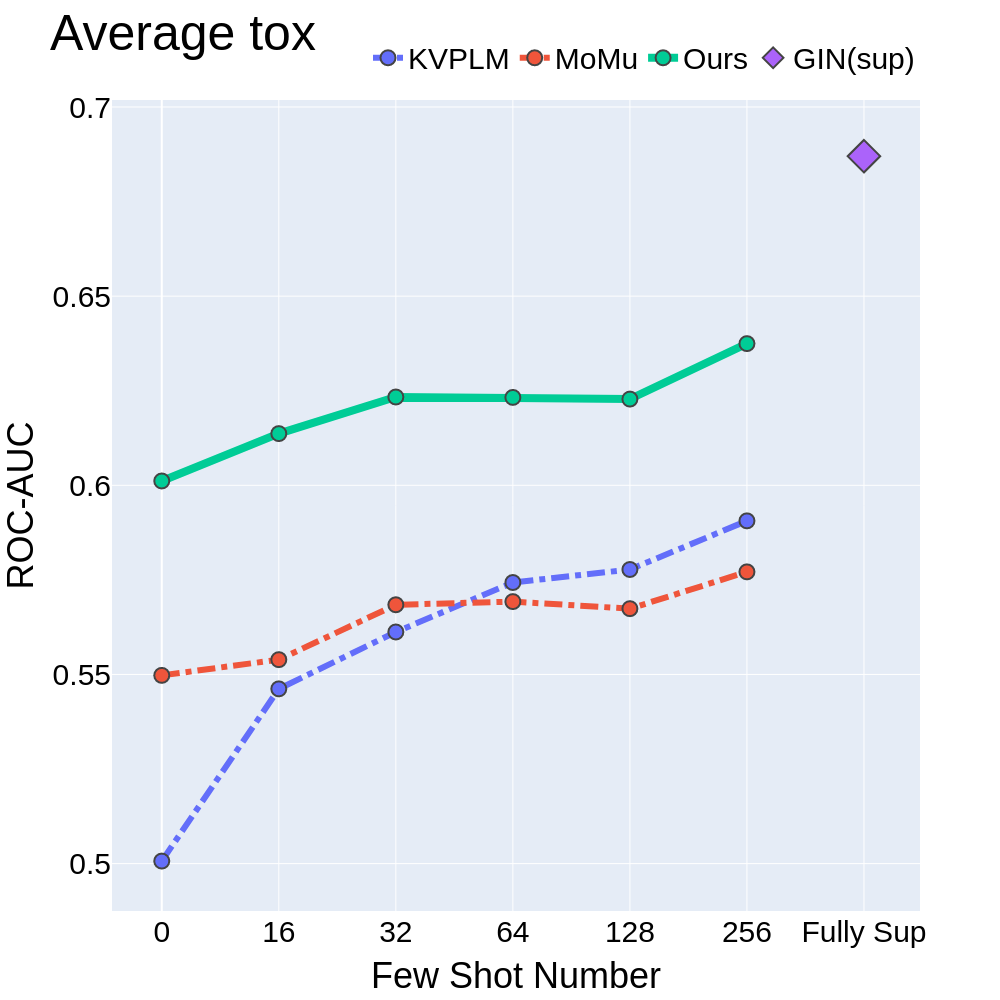}} 
    \subfloat{\includegraphics[width=0.25\linewidth]{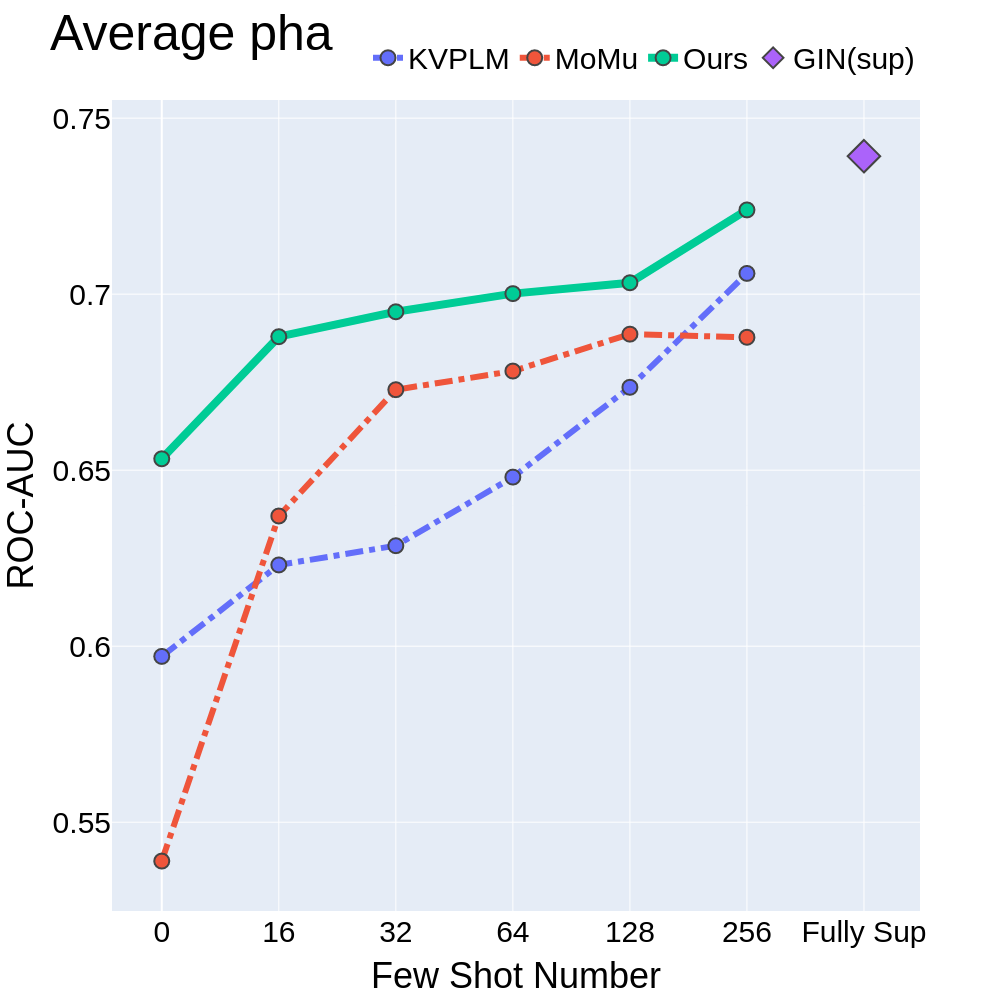}}
    \subfloat{\includegraphics[width=0.25\linewidth]{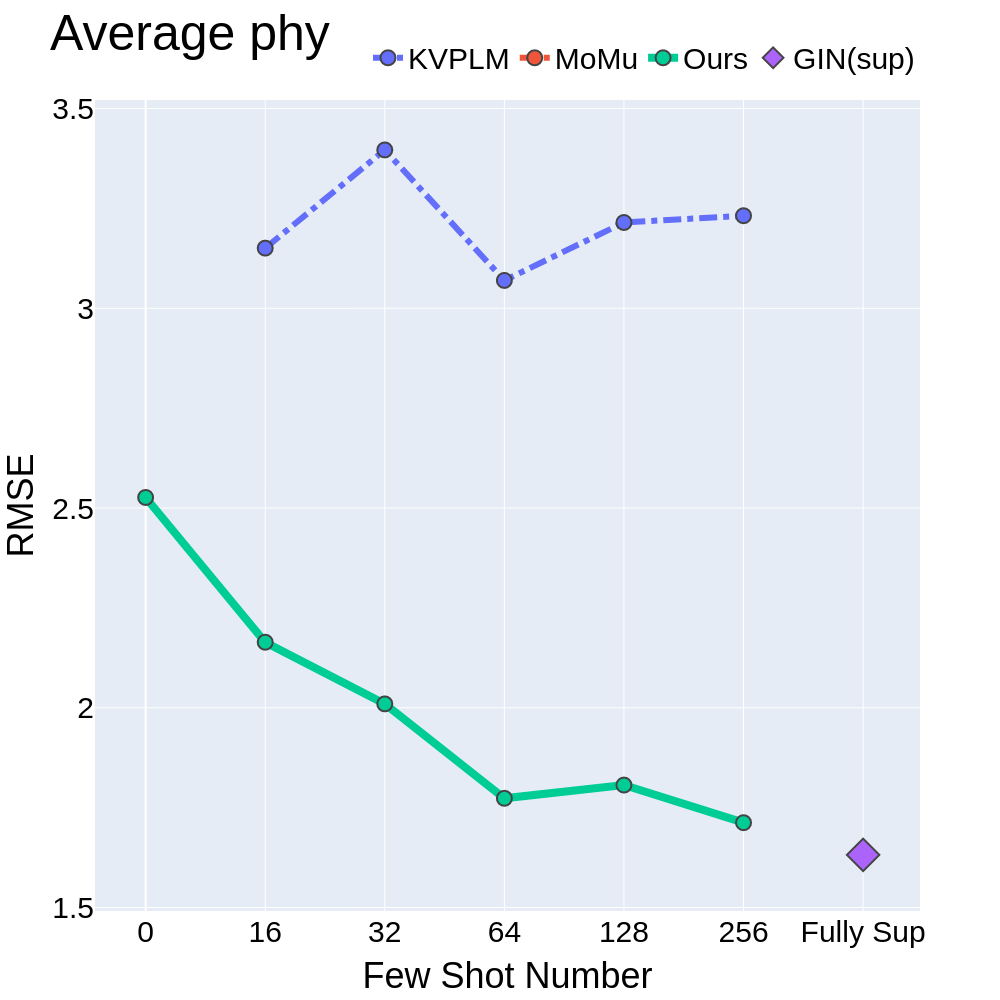}}
    \caption{Few shot performance. Higher is better for bio, tox, and pha, and lower is better for phy. }
    \label{few_shot}
\end{figure*}




\subsection{Ablation and Exploration Studies}
\textbf{Effect of Model Design} \label{section model ablation} We investigate components of \modelname to highlight their benefits. Specifically, we focus on two aspects: \textbf{(a)} To measure the effectiveness of the unified transformer method, we compare it to the variant version which obtains graph embedding by an individual GIN as a token in text, a common method in multimodal transformers. \textbf{(b)} We ablate graph decoupling encoding by complete global attention. We conduct pretraining and downstream zero-shot testing in the same setting as our method.


\begin{table}[htbp]
\centering
\caption{Ablation study on \modelname module.}
\resizebox{\linewidth}{!}{
\setlength{\tabcolsep}{1.5 mm}{
\begin{tabular}{l|cccc|ccc|ccc}
\hline
Method & bace & hiv & muv & Avg. bio & tox21 & toxcast & Avg. tox & bbbp & cyp450 & Avg. pha \\ 
\hline
w.o. unifying & 0.4319	& 0.6133 &	0.6067 &	0.5506 &	0.5922 &	0.5537 &	0.5730 &	0.5309 &	0.6206 &	0.5758 \\
w.o. decoupling & 0.6458 &	0.6406 &	0.5421 &	0.6095 &	\textbf{0.6306} &	\textbf{0.5954} &	\textbf{0.6130} &	0.5666 &	0.6320 &	0.5993 \\
\rowcolor{LightCyan} \modelname & \textbf{0.6957} & \textbf{0.6624} & \textbf{0.6439} & \textbf{0.6673} & 0.6119 & 0.5904 & 0.6011 & \textbf{0.5939} & \textbf{0.7125} & \textbf{0.6532}\\
\hline
\end{tabular}%
}
}
\label{ablation module}%
\end{table}%

The comparison is shown in Table \ref{ablation module}. Compared to \modelname, w.o. unifying perform worse on all the datasets, especially on  Bio-activity and Pharmacokinetic Tasks. Next, the w.o. decoupling variation performs worse than \modelname on most datasets. The decoupling significantly improves performance on Bio-activity and Pharmacokinetic tasks and is also comparable on Toxicity tasks. This proves our claims that the unified graph-text transformer not only avoids additional modules but also has a strong capacity for encoding graph data and generalizing across tasks.







\textbf{Effect of Pretraining}\label{section ablation pretraining} Our pretraining includes two large types of pretraining tasks, including Chembl bioactivity tasks and Chembl Property physico-chemical tasks. To validate the influence of each task type, we performed ablation experiments where each of them was excluded separately. The results are shown in Table \ref{table:ablation pretraining}, and the detailed result is in Appendix. Unsurprisingly, Chembl is essential for downstream molecule assay tasks, and Chembl property plays a crucial role in Physico-chemical tasks. However, the results also reveal that Chembl positively affects downstream Physico-chemical tasks, and Chembl property benefits Bio-activity tasks, Toxicity tasks, and Pharmacokinetic tasks. The results demonstrate the positive transfer of pretraining tasks on a diverse range of downstream tasks, spanning various types and domains.

\begin{table}[htbp]
\centering
\caption{Ablation study on \modelname pretraining.}
\resizebox{0.65\linewidth}{!}{
\setlength{\tabcolsep}{1.5 mm}{
\begin{tabular}{lcccc}
\hline
Method & Avg. bio $\uparrow$ & Avg. pha $\uparrow$ & Avg. tox $\uparrow$ & Avg. phy $\downarrow$  \\
\hline
bioactivity assay only & 0.6402 & 0.6071 & 0.5676 & - \\
physico-chemical only  & 0.4894 &	0.5454 &	0.4748 & 2.6178 \\ 
\rowcolor{LightCyan} both & \textbf{0.6673} & \textbf{0.6532} & \textbf{0.6011} & \textbf{2.5266} \\
\hline
\end{tabular}%
}
}
\label{table:ablation pretraining}%
\end{table}%


\textbf{Robustness to Instruction}
We explore whether \modelname is robust to the instructions. We rephrase our instructions by GPT-3.5-turbo for testing. Each instruction is rephrased using four types of requests: rewriting, detailing, expanding, and shortening. The prompts and examples are provided in Appendix. We plot the performance for each type of augmentation, as well as the average standard variation for each task in Figure \ref{fig:robustness}. 
As shown, \modelname is more robust than baselines on most tasks. This shows that our instruction-based pretaining enables \modelname to focus on the task rather than the specific language form.




\begin{figure*}[htbp]
    \centering
    \subfloat{\includegraphics[width=0.25\linewidth]{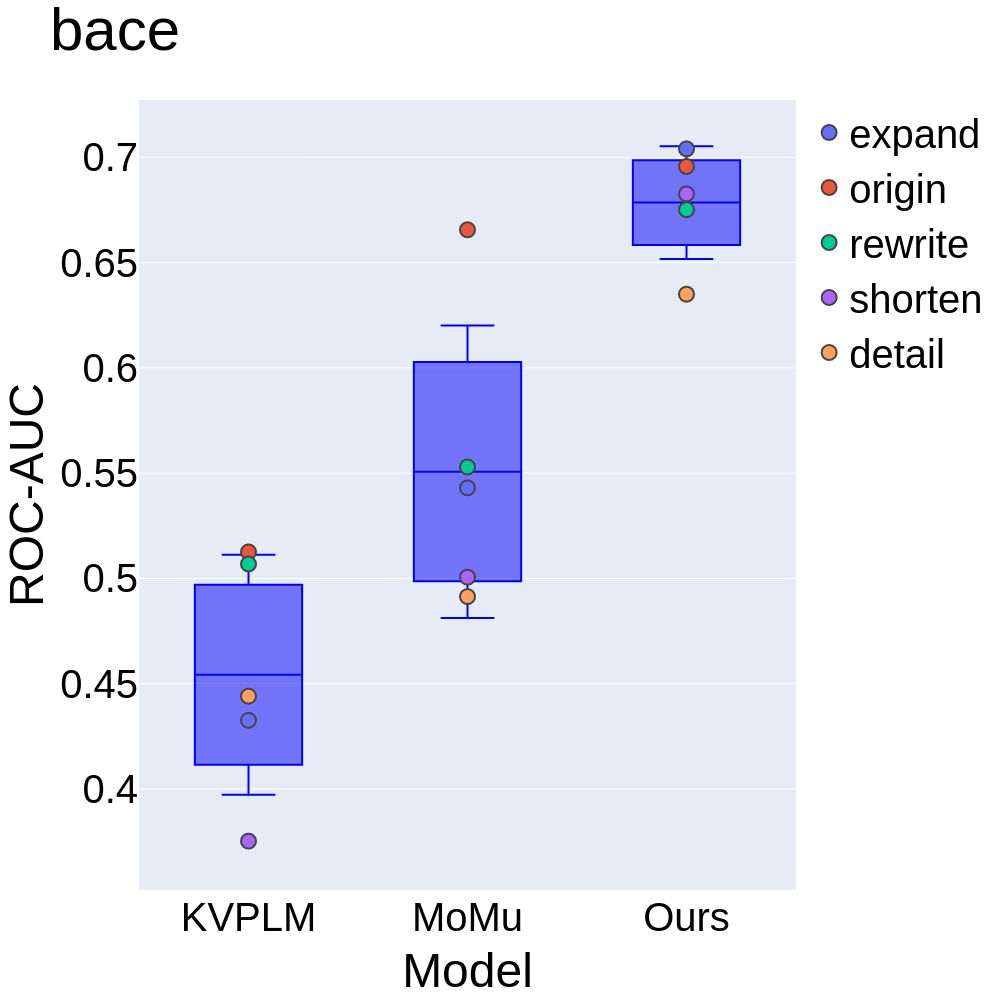}}
    \subfloat{\includegraphics[width=0.25\linewidth]{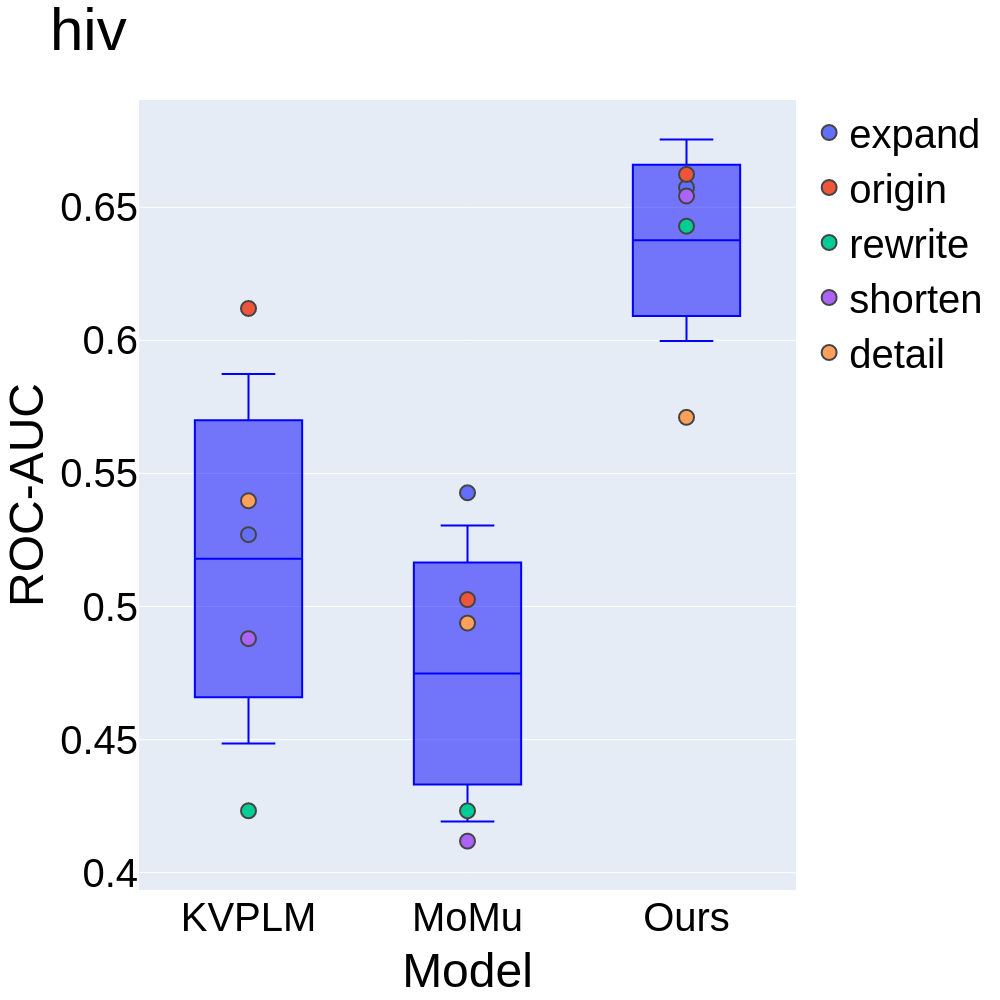}}
    \subfloat{\includegraphics[width=0.25\linewidth]{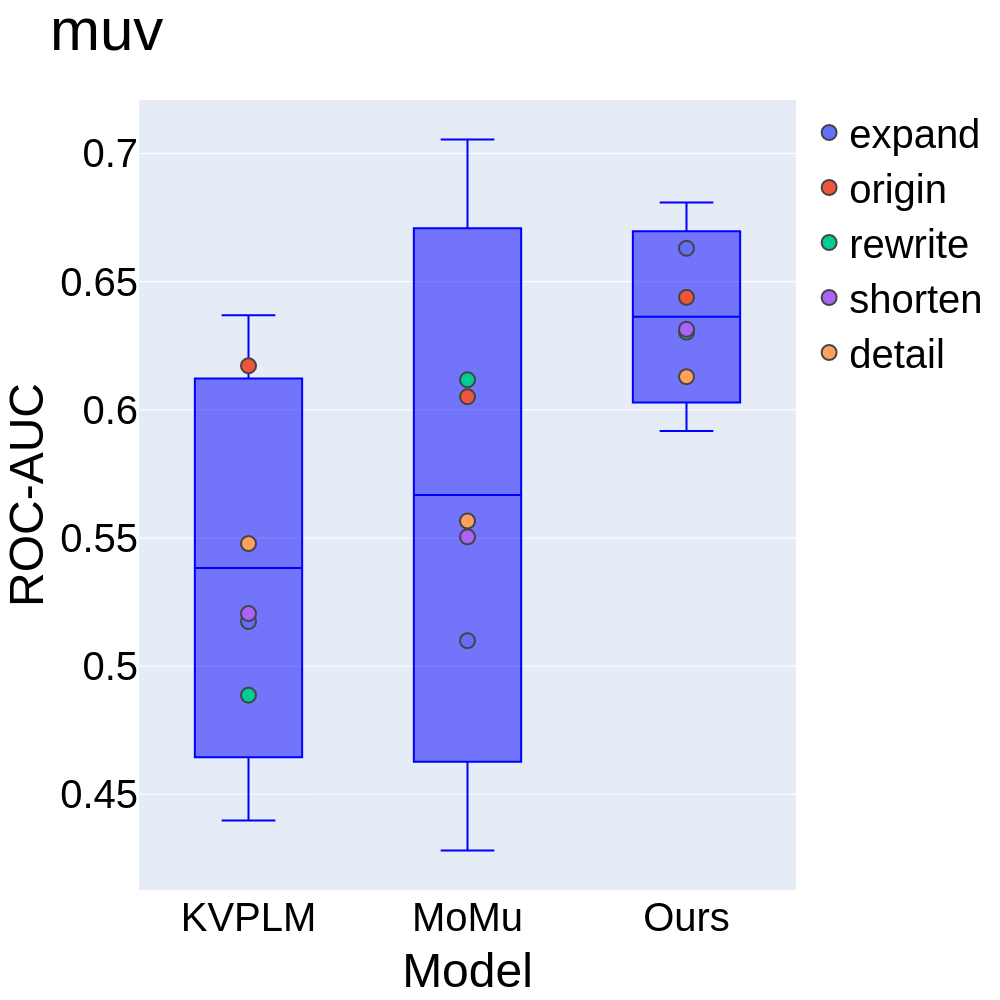}}\\
    \subfloat{\includegraphics[width=0.25\linewidth]{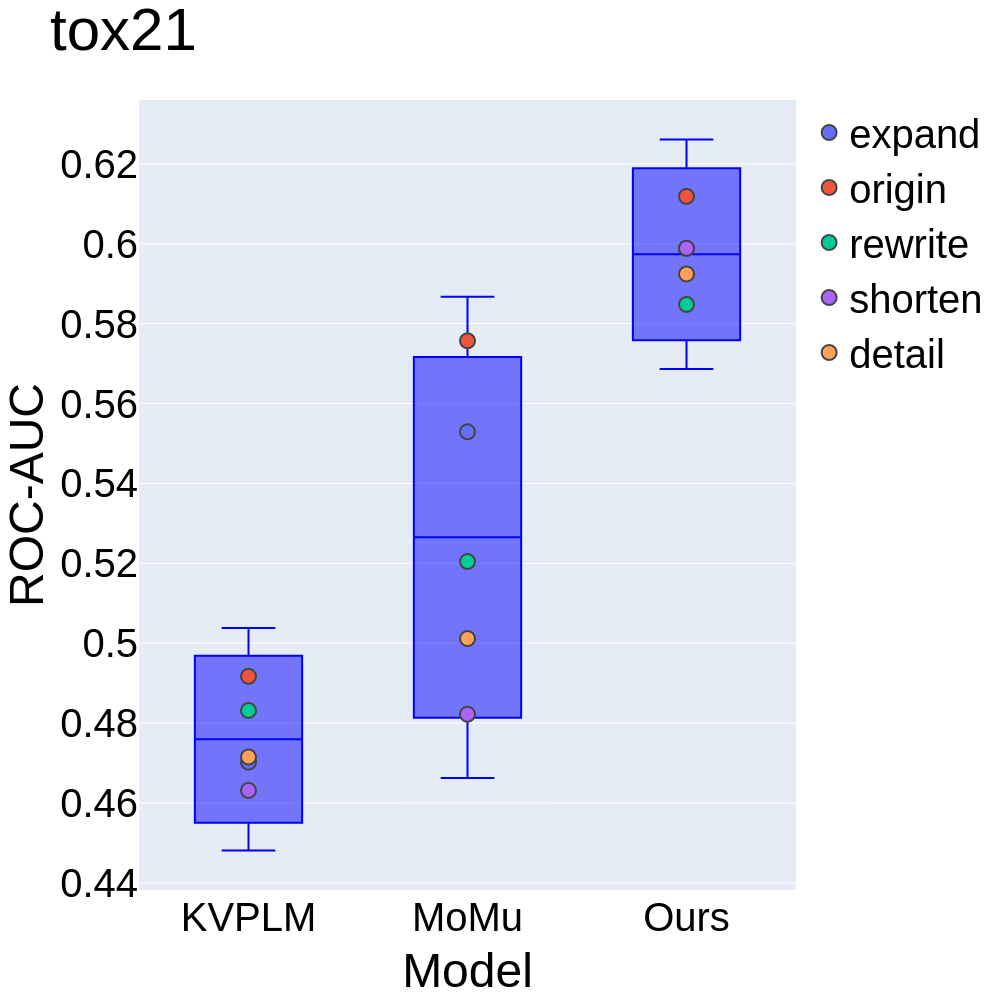}}
    \subfloat{\includegraphics[width=0.25\linewidth]{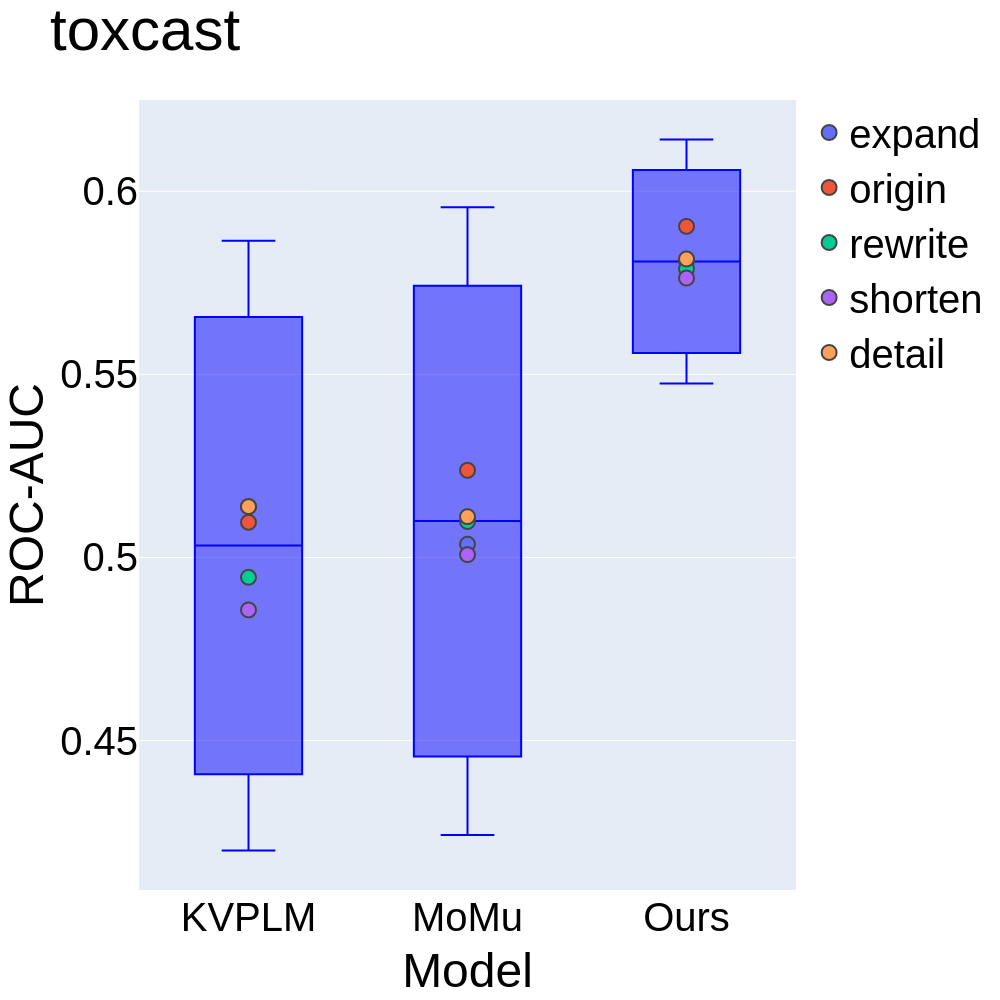}}
    \subfloat{\includegraphics[width=0.25\linewidth]{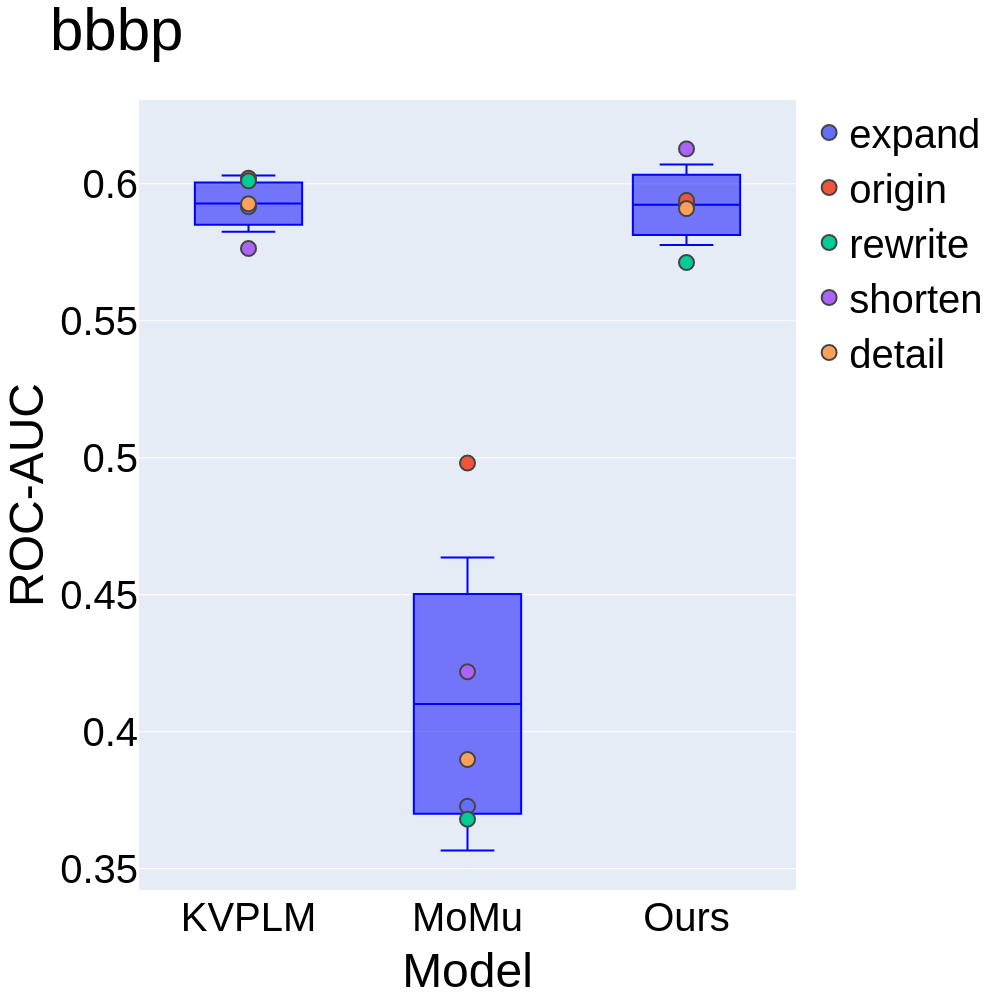}}
    \subfloat{\includegraphics[width=0.25\linewidth]{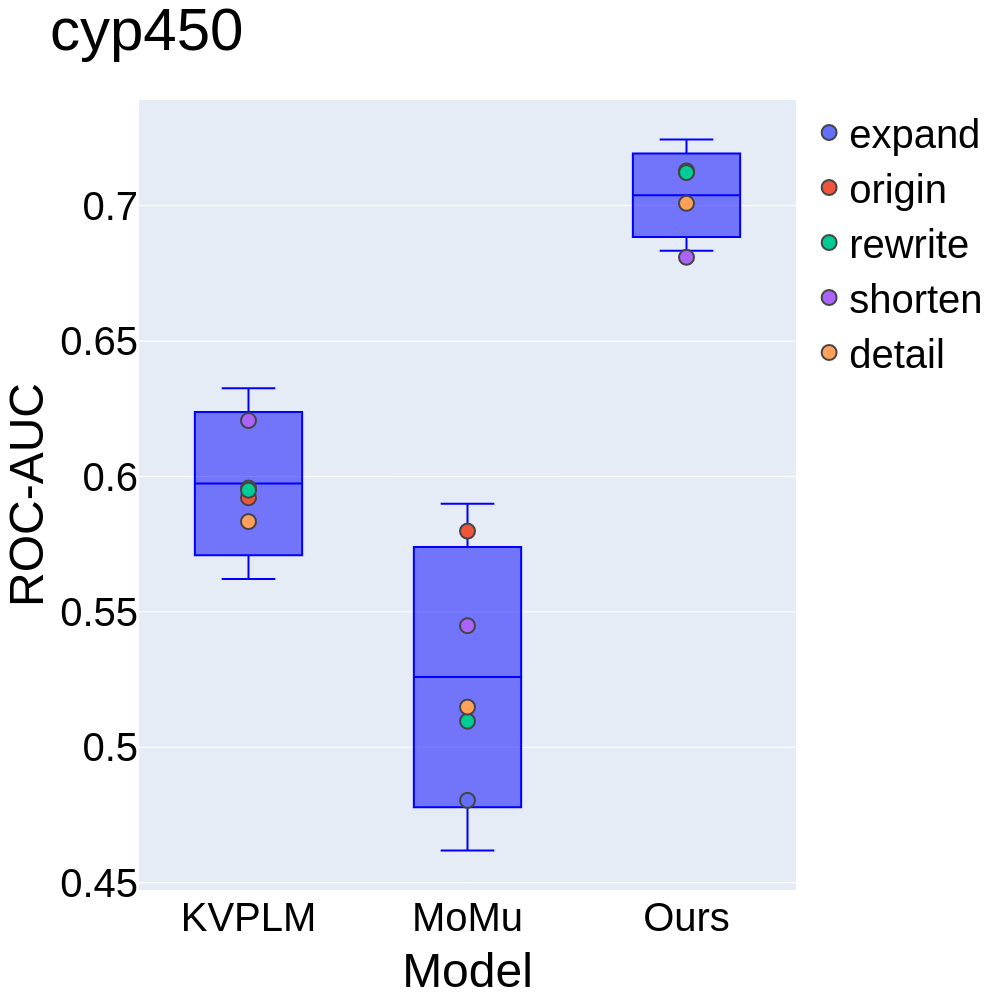}}
    \caption{Robustness to instruction. }
    \label{fig:robustness}
\end{figure*}



\textbf{Instruction Interpretability} We ablate the explanation of the instructions in downstream tasks, to validate whether the explanation in instructions helps \modelname perform downstream tasks. Without explanation, only the task name and question are provided to the model. The examples of ablated instructions are available in Appendix. 
The results presented in Table \ref{ablation prompt} demonstrate a significant drop in performance when task explanations are not included. This finding supports the effectiveness of the explanation and highlights the model's ability to comprehend the explanation of tasks.


\begin{table}[htbp]
\centering
\caption{Ablation study on \modelname instructions.}
\resizebox{\linewidth}{!}{
\setlength{\tabcolsep}{1.5 mm}{
\begin{tabular}{l|cccc|ccc|ccc}
\hline
Method & bace & hiv & muv & Avg. bio & tox21 & toxcast & Avg. tox & bbbp & cyp450 & Avg. pha \\
\hline
name only & 0.5416 & 0.6132 &	\textbf{0.6441} &	0.5996 &	0.5809 &	0.5279 &	0.5544 &	0.4871 &	0.6669 &	0.5770 \\ 
\rowcolor{LightCyan} + explanation & \textbf{0.6957} & \textbf{0.6624} & 0.6439 & \textbf{0.6673} & \textbf{0.6119} & \textbf{0.5904} & \textbf{0.6011} & \textbf{0.5939} & \textbf{0.7125} & \textbf{0.6532} \\
\hline
\end{tabular}%
}
}
\label{ablation prompt}%
\end{table}%


\section{Conclusion and Discussion}

In this work, we propose nature language instruction-based graph zero-shot learning for molecule tasks, and construct a molecule dataset consisting of two thousand tasks with instructions derived from task descriptions. We propose \modelname, which extends large language models to handle graph and text data by applying the transformer mechanism with generalized position embedding and decoupled attention. Instruction-based pretraining is applied for \modelname. Experiments demonstrate promising results in zero-shot learning, exhibit strong robustness to the instruction, and can be further improved by few-shot tuning. We do not consider the tasks with structured output like molecule generation in this study, which is left for further work.

\section {Ethical Consideration}
The work presented here is centered on a paradigm shift in molecule property prediction tasks, transitioning from traditional supervised learning to instruction-based zero-shot learning. Due to the fact that molecule property prediction has been widely explored in prior research, the direct societal impacts may appear limited. However, indirect negative impacts could be caused by excessive reliance on algorithms. We contend that a combination of model predictions and experimental validation is essential for practical implementation.


{
\small

\bibliography{ref}
\bibliographystyle{apalike}

}

\newpage


\appendix

\section{Framework}

\subsection{Details of Datasets and Instructions}

\begin{table}[htbp]
\centering
\caption{Data Overview}
\label{appendix:table:dataset overview}
\resizebox{\linewidth}{!}{
\setlength{\tabcolsep}{1.5 mm}{
\begin{tabular}{lllccll}
\hline
\textbf{Splitting} & \textbf{Data Class} & \textbf{Dataset} & \textbf{No. of Molecules} & \textbf{No. of Tasks} & \textbf{Task Metric} & \textbf{Task Type} \\ \hline

\multirow{2}{*}{Pretraining} & Bioactivity assay & ChEMBL bioassay activity dataset & 365065 & 1048 & ROC\_AUC & Classification \\ 
 & Physico-chemical & CHEMBL Property & 365065 & 13 & RMSE & Regression \\ \cline{1-7}
\multirow{12}{*}{Downstream Zero-Shot} & \multirow{2}{*}{Large Scale} & PCBA PubChem HTS bioAssay & 437929 & 128 & ROC-AUC & Classification \\ 
 & & ChEMBL Zero-Shot & 91266 & 262 & ROC\_AUC & Classification \\ \cline{2-7}
 & \multirow{2}{*}{Pharmacokinetic} & CYP inhibition & 16896 & 5 & ROC\_AUC & Classification \\ 
 &  & BBBP Blood-brain barrier penetration & 2039 & 1 & ROC\_AUC & Classification \\ \cline{2-7}
 & \multirow{3}{*}{Bio-activity} & MUV PubChem bioAssay & 93087 & 17 & ROC\_AUC & Classification \\ 
 &  & BACE-1 benchmark set & 1513 & 1 & ROC\_AUC & Classification \\ 
 &  & HIV replication inhibition & 41127 & 1 & ROC\_AUC & Classification \\ \cline{2-7}
 & \multirow{2}{*}{Toxicity} & Tox21Toxicology in the 21st century & 7831 & 12 & ROC\_AUC & Classification \\
 &  & Toxcast & 8598 & 617 & ROC\_AUC & Classification \\ \cline{2-7}
 & \multirow{3}{*}{Physico-chemical} & ESOL Water solubility & 1128 & 1 & RMSE & Regression \\ 
 &  & FreeSolv Solvation free energy & 642 & 1 & RMSE & Regression \\ 
 &  & Lipo Lipophilicity & 4200 & 1 & RMSE & Regression \\ \hline
\end{tabular}
}}
\end{table}



The datasets used in our study are presented in Table \ref{appendix:table:dataset overview}. These datasets consist of different types of tasks related to molecule property prediction. It should be noted that during the pretraining phase, the loss function is not specific to the task types, but rather encompasses the generative loss of the language model.

We have chosen not to include certain datasets, namely SIDER and ClinTox, in our collection of datasets. The decision was based on the fact that the tasks associated with these datasets are not clearly defined and involve complex systemic phenomena, making it challenging to describe them through instructional texts. For instance, the ClinTox dataset involves determining whether drugs have passed the FDA approval, which is not an objective problem but rather a dynamic and intricate social phenomenon. The SIDER dataset focuses on describing the side effects of drugs on system organ classes, which have intricate mechanisms and a wide range of possible causes, making them difficult to be effectively conveyed through instructions.

For the Chembl property dataset that we have constructed, detailed information can be found in Table \ref{appendix:table:properties}. These properties are sourced from the Chembl database \citep{gaulton2012chembl} through the web API.

\begin{table}[htbp]
\centering
\caption{Chembl property tasks and labels}
\begin{tabular}{ll}
\hline
\textbf{Property} & \textbf{Label type} \\
\hline
Aromatic rings number & Integer  \\
cx\_logd distribution coefficient & Real  \\
cx\_logp partition coefficient & Real  \\
cx\_most\_apka $-\operatorname{log}_{10}$ dissociation constant & Real  \\
Molecular masses & Real  \\
Hydrogen bond donor number & Integer  \\
Heavy atom number & Integer  \\
Lipinski's rule of five violation number & Integer \\
Polar surface area (PSA) & Real \\
Quantitative Estimate of Druglikeness (QED) & Real \\
Rule of three passes & Bool  \\
Rotatable bond number & Integer  \\
\hline
\end{tabular}
\label{appendix:table:properties}
\end{table}

The task explanation is primarily sourced from relevant papers, websites, or databases that introduce and compile the respective datasets. The specific sources utilized depend on the particular datasets under consideration. For Chembl tasks, we obtain task descriptions from the Chembl website. Descriptions for MoleculeNet tasks and PCBA are primarily sourced from the PubChem website. Certain datasets, such as Toxcast, include task descriptions within the dataset files. In the case of other tasks, like Chembl property and Physical-Chemical tasks, instructions are derived from Wiki or other papers. We list the instruction source in Table \ref{appendix:table:data sources}.

\begin{table}[h]
\centering
\caption{Data sources and classes for different stages of the model}
\label{appendix:table:data sources}
\resizebox{\linewidth}{!}{
\setlength{\tabcolsep}{1.5 mm}{
\begin{tabular}{|l|l|}
\hline
Dataset & Instruction Source \\
\hline
ChEMBL Zero-Shot bioassay activity dataset & Chembl Database \\
CHEMBL Property & Wiki \\
PCBA PubChem HTS bioAssay & Pubchem Database \\
ChEMBL Zero-Shot bioassay activity dataset & Chembl Database \\
CYP PubChem BioAssay CYP 1A2, 2C9, 2C19, 2D6, 3A4 inhibition & Pubchem Database \\
BBBP Blood-brain barrier penetration & Paper \citep{gosselet2021central} \\
MUV PubChem bioAssay & Pubchem Database \\
BACE-1 benchmark set & Pubchem Database \\
HIV replication inhibition & Paper \citep{nielsen2005molecular} \\
Tox21 Toxicology in the 21st century & Pubchem Database \\
Toxcast & Toxcast file \\
ESOL Water solubility & Paper \citep{withnall2018matched} \\
FreeSolv Solvation free energy & Paper \citep{casasnovas2014theoretical} \\
Lipo Lipophilicity & Wiki \\
\hline
\end{tabular}
}
}
\end{table}

The description covers a wide range of aspects, including the family, function, and mechanism of the assay target, the assay experiment setting, the approximation method used for determining the property, and others.
We describe regression tasks by introducing the relasionship between the task property and other properties, i.e. how to estimate these properties by other ones. However, this method is still challenging due to the model's capacity to understand complex mathematical relationships.

The instructions for each task are generated automatically by conducting searches on the databases and summarizing the descriptions. We use a mixture strategy of summarizing, combining template-based summarizing and GPT-3.5-turbo-based summarizing methods. The GPT-3.5-turbo-based summarizing method is applied by the prompt 'Summarize the assay: \textbackslash n \{\textit{Descriptions to be summarized}\}'.

The resulting instructions are then concatenated with relevant questions. These instructions are subsequently reviewed and validated by a professional biology Ph.D. student and slightly modified if necessary.


We then list the instructions of each dataset. For datasets with more than one task, we only list the instruction of one task as an illustration.

Chembl

\begin{lstlisting}[style=myStringStyle]
"The assay is PUBCHEM_BIOASSAY: qHTS Assay for Activators of Human Muscle isoform 2 Pyruvate Kinase. (Class of assay: confirmatory)  , and it is Direct single protein target assigned . The assay has properties: assay category is confirmatory ; assay organism is Homo sapiens ; assay type description is Functional . Is the molecule effective to this assay?"
\end{lstlisting}

Chembl property

\begin{lstlisting}[style=myStringStyle]
"The partition coefficient, abbreviated P, is defined as a particular ratio of the concentrations of a solute between the two solvents (a biphase of liquid phases), specifically for un-ionized solutes, and the logarithm of the ratio is thus Log P. When one of the solvents is water and the other is a non-polar solvent, then the log P value is a measure of lipophilicity or hydrophobicity. The defined precedent is for the lipophilic and hydrophilic phase types to always be in the numerator and denominator respectively. What is the logarithm of the partition coefficient of this molecule?"
\end{lstlisting}

PCBA

\begin{lstlisting}[style=myStringStyle]
"The assay tests the inhibition of ALDH1A1 activity using propionaldehyde as an electron donor and NAD+ as an electron acceptor. The conversion of NAD+ to NADH is measured via an increase in fluorescence intensity to determine enzyme activity. ALDH1A1 plays critical roles in the metabolic activation of retinoic acid and may be a target for inhibitor development in metabolic diseases. Is the molecule effective to this assay?"
\end{lstlisting}

CYP450

\begin{lstlisting}[style=myStringStyle]
"Find molecules that can effectively inhibit Cytochrome P450 (CYP450) enzymes, particularly CYP1A2, to help reduce the risk of adverse drug events and drug-drug interactions caused by CYP450-mediated metabolic pathways. Consider the various CYP450 inhibition mechanisms such as occupying active sites or weakening enzyme activity, while keeping in mind the potential for increased side effects due to elevated blood drug concentrations. Is this molecule effective to this assay?"
\end{lstlisting}

BBBP

\begin{lstlisting}[style=myStringStyle]
"In general, molecules that passively diffuse across the brain blood barrier have the molecular weight less than 500, with a LogP of 2-4, and no more than five hydrogen bond donors or acceptors. Does the molecule adhere to the three rules or not?"
\end{lstlisting}

MUV

\begin{lstlisting}[style=myStringStyle]
"Protein kinase A (PKA) is an ubiquitous serine/threonine protein kinase and belongs to the AGC kinase family. It has several functions in the cell, including regulation of immune response, transcription, cell cycle and apoptosis. PKA is a cAMP dependent enzyme that exists in its native inactive form as a 4 subunit enzyme with two regulatory and two catalytic subunits. Binding of cAMP to the regulatory subunit leads to the disassembly of the complex and release of now active catalytic subunits. Is this molecule inhibitor of PKA?"
\end{lstlisting}

BACE

\begin{lstlisting}[style=myStringStyle]
"BACE1 is an aspartic-acid protease important in the pathogenesis of Alzheimer's disease, and in the formation of myelin sheaths. BACE1 is a member of family of aspartic proteases. Same as other aspartic proteases, BACE1 is a bilobal enzyme, each lobe contributing a catalytic Asp residue, with an extended active site cleft localized between the two lobes of the molecule. The assay tests whether the molecule can bind to the BACE1 protein. Is this molecule effective to the assay?"
\end{lstlisting}

HIV

\begin{lstlisting}[style=myStringStyle]
"Human immunodeficiency viruses (HIV) are a type of retrovirus, which induces acquired immune deficiency syndrome (AIDs).  Now there are six main classes of antiretroviral drugs for treating AIDs patients approved by FDA, which are the nucleoside reverse transcriptase inhibitors (NRTIs), the non-nucleoside reverse transcriptase inhibitors (NNRTIs), the protease inhibitors, the integrase inhibitor, the fusion inhibitor, and the chemokine receptor CCR5 antagonist.  Is this molecule effective to this assay?"
\end{lstlisting}

Tox21

\begin{lstlisting}[style=myStringStyle]
"Estrogen receptor alpha (ER aplha) is Nuclear hormone receptor. The steroid hormones and their receptors are involved in the regulation of eukaryotic gene expression and affect cellular proliferation and differentiation in target tissues. Ligand-dependent nuclear transactivation involves either direct homodimer binding to a palindromic estrogen response element (ERE) sequence or association with other DNA-binding transcription factors, such as AP-1/c-Jun, c-Fos, ATF-2, Sp1 and Sp3, to mediate ERE-independent signaling. Is this molecule effective to this assay?"
\end{lstlisting}

Toxcast

\begin{lstlisting}[style=myStringStyle]
"APR_HepG2_CellCycleArrest_24hr, is one of 10 assay component(s) measured or calculated from the APR_HepG2_24hr assay. It is designed to make measurements of cell phenotype, a form of morphology reporter, as detected with fluorescence intensity signals by HCS Fluorescent Imaging technology.Data from the assay component APR_HepG2_CellCycleArrest_24hr was analyzed into 2 assay endpoints. \nThis assay endpoint, APR_HepG2_CellCycleArrest_24h_dn, was analyzed in the negative fitting direction relative to DMSO as the negative control and baseline of activity. \nUsing a type of morphology reporter, measures of all nuclear dna for loss-of-signal activity can be used to understand the signaling at the pathway-level as they relate to the gene . \nFurthermore, this assay endpoint can be referred to as a primary readout, because this assay has produced multiple assay endpoints where this one serves a signaling function. \nTo generalize the intended target to other relatable targets, this assay endpoint is annotated to the \"cell cycle\" intended target family, where the subfamily is \"proliferation\". Is this molecule effective to this assay?"
\end{lstlisting}

ESOL

\begin{lstlisting}[style=myStringStyle]
"Solubility (logS) can be approximated by negative LogP -0.01 * (MPt \u2013 25) + 0.5 . Can you approximate the logS of this molecule by its negative logP and MPt?"
\end{lstlisting}

FreeSolv

\begin{lstlisting}[style=myStringStyle]
"The free energy of hydration can be approximated by \u0394G_hyd = \u0394G_solv,soln - \u0394G_solv,gas + RT ln (10^(-pKa)). Can you tell me the free energy of hydration (by using the negative pka) of this molecule, predicted by using \u0394G_solv and negative pka?"
\end{lstlisting}

Lipo

\begin{lstlisting}[style=myStringStyle]
"Lipophilicity is an important feature of drug molecules that affects both membrane permeability and solubility, measured by octanol/water distribution coefficient (logD at pH 7.4). What's the octanol/water distribution coefficient (logD at pH 7.4) of this molecule?"
\end{lstlisting}

\subsection{Details of Framework Application}

\label{appendix:zero-shot evaluation method}

In our framework, we represent the labels of various tasks as strings. For assay tasks involving classification, the labels are converted to either "Yes" or "No" based on whether the molecule has an effect on the assay. In regression tasks, the labels are transformed into numerical strings. Integer values remain unchanged, while decimal numbers are rounded to two decimal places.


To conduct zero-shot testing on our model, we generate output sequences and extract the answer from the results. For assay classification, we consider the first token generated as the answer and use the scores for the 'Yes' and 'No' tokens to compute the ROC-AUC score for classification. In regression tasks, we extract the number from the generated sequence by performing string matching using a regular expression template: r"-?\textbackslash d+\textbackslash .?\textbackslash d*e??\textbackslash d*?". Notably, we discovered that \modelname consistently generates results in the correct format for all classification tasks and accurately formatted numbers for over 98\% of regression testing samples, without any augmentation of restriction in the vocabulary.

\subsection{Baselines Evaluation}

For the baselines, we apply our instruction-based molecule zero-shot learning to their respective settings. KVPLM employs SMILES for molecule representation and utilizes masked language modeling for molecule-text data. Galactica also represents molecules using SMILES but generates the next sentence in an autoregressive manner. MoMu employs contrastive learning between the GNN-encoded molecule and the corresponding text, allowing it to score each candidate sentence for the target molecule and retrieve the best matching one. Our application of each baseline model aligns with their intended use.

It is important to note that for the baseline models, to avoid baselines generating answers in classification not in our parsing method ('Yes' and 'No'),  we limit the vocabulary during generation to only include 'Yes' and 'No' in classification tasks. This restriction is achieved by utilizing the bias term in huggingface to prevent the generation of other words. However, it is worth mentioning that our model, \modelname, does \textit{not} require this augmentation and is able to generate the desired outputs \textit{without} any additional constraints.


For KVPLM, we mask the answer position in the whole sentence for the model to predict. For example, for molecule CCOc1ccccc1-n1nnnc1SCC(=O)NC(=O)NCc1ccco1 and classification tasks ARE inhibitor, input to KVPLM is:

\begin{lstlisting}[style=myStringStyle]
"CCOc1ccccc1-n1nnnc1SCC(=O)NC(=O)NCc1ccco1
Oxidative stress has been implicated in the pathogenesis of a variety of diseases ranging from cancer to neurodegeneration. The antioxidant response element (ARE) signaling pathway is important in the amelioration of oxidative stress. Is this molecule agonists of antioxidant response element (ARE) signaling pathway? [MASK]"
\end{lstlisting}

For Galactica, the answer is expected to be generated after reading the question. The input example is

\begin{lstlisting}[style=myStringStyle]
"[START_I_SMILES] CCOc1ccccc1-n1nnnc1SCC(=O)NC(=O)NCc1ccco1 [END_I_SMILES]
Question: Oxidative stress has been implicated in the pathogenesis of a variety of diseases ranging from cancer to neurodegeneration. The antioxidant response element (ARE) signaling pathway is important in the amelioration of oxidative stress. Is this molecule agonists of antioxidant response element (ARE) signaling pathway? 
Answer:"
\end{lstlisting}

For MoMu,  we compute the matching score between the molecule graph and the instruction with each answer.
In the example, the classification scores for 'Yes' and 'No' are computed by matching graph feature of molecule CCOc1ccccc1-n1nnnc1SCC(=O)NC(=O)NCc1ccco1 with
\begin{lstlisting}[style=myStringStyle]
"Oxidative stress has been implicated in the pathogenesis of a variety of diseases ranging from cancer to neurodegeneration. The antioxidant response element (ARE) signaling pathway is important in the amelioration of oxidative stress. Is this molecule agonists of antioxidant response element (ARE) signaling pathway? Yes"
\end{lstlisting}

and

\begin{lstlisting}[style=myStringStyle]
"Oxidative stress has been implicated in the pathogenesis of a variety of diseases ranging from cancer to neurodegeneration. The antioxidant response element (ARE) signaling pathway is important in the amelioration of oxidative stress. Is this molecule agonists of antioxidant response element (ARE) signaling pathway? No"
\end{lstlisting}.


\section{Method}

\subsection{Discussion of Individual Encoding Module Method} 

The individual encoding module-based multimodal language model can be formalized as $\operatorname{LLM}(M(G),T)$, where $M$ is the individual encoding module for graph data $G$. For example, the visual module is applied to pre-encode the image data to get the dense representation, then put into the language model as tokens embedding \citep{bao2021beit,chen2022pali,alayrac2022flamingo,li2023blip}. Current works on molecule language models also use a GNN to get the representation of molecules to interact with the language models \citep{edwards2021text2mol,su2022molecular,seidl2023enhancing}. 

This method can be considered as decomposition of the conditional probability $P(\hat{y}|G,T)$

\begin{equation}
    P(\hat{y}|G,T)=\int P_M(z|G)P_{\operatorname{LLM}}(\hat{y}|z,T) dz,
\end{equation}

based on the assumption that the feature distributions $P(z|G)$ should be modeled by modality-specific modules to introduce inductive bias, and be independent of text information to help with adaptation to novel text data.

However, for the molecule-text model, individual pre-encoding modules present problems. First, graph learning relies on structure information, but the dense vectors encoded by GNN have a limited capacity to carry structure information, and language models don't have inductive bias toward graph structure. Furthermore, training the additional module is difficult due to the increased layers, since deep transformers have vanishing gradients in early layers \citep{liu2020understanding,bachlechner2021rezero}, which is a well-known problem of transformer. Lastly, the additional modules increase parameters and training costs.

Our method \modelname not only overcome these issues, our approach \modelname not only directly unifies the standard language model for graph and text \textit{without} introducing additional graph encoder module, but also remains the decoupled graph encoding for better generalization.

\subsection{Model Theoretical Capacity}

In this section, we analyze the theoretical capacity of our modeling method. 

\begin{theorem}
\label{theorem1} Assume for different input features and position embeddings, the transformer layers can output different output features. The transformer with distance-based relative position embedding has a stronger capacity than the 1-WL test for the graph isomorphism problem.
\end{theorem}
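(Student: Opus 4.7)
The plan is to establish the claim in two parts: (a) the distance-based transformer is at least as powerful as 1-WL, and (b) there exists a pair of graphs that 1-WL cannot distinguish but the transformer can. Together these give strictly greater capacity.

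For part (a), I would exploit the fact that the generalized position embedding $b^{D}_{\textsc{pos}(i,j)}$ takes a distinct value for each shortest-path distance, in particular a distinguished value for distance $1$. The plan is to construct attention weights that, at distance $1$, behave as a learnable aggregator over neighbors and suppress contributions from all other distances (formally, take $b^{D}$ at distance $1$ large and at other distances $-\infty$, or use the softmax normalization after a sharpening of the distance-$1$ logit). Under the stated assumption that transformer layers map distinct (input feature, position embedding) inputs to distinct outputs, an attention head of this form implements an injective function of the multiset $\{(h_j,\text{distance}(i,j)=1)\}$, hence an injective update on the multiset of neighbor features. Iterating this across layers simulates the iterative neighbor-aggregation refinement of 1-WL, so any two graphs the transformer cannot distinguish are also indistinguishable by 1-WL.

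For part (b), I would give a concrete separating example. A classical pair that 1-WL fails on is $G_1 = C_6$ (the six-cycle) versus $G_2 = C_3 \sqcup C_3$ (two disjoint triangles), both $2$-regular with identical node features, hence producing identical 1-WL colorings at every round. However, their distance multisets are very different: in $C_6$ the pairwise shortest distances take values in $\{1,2,3\}$, whereas in $C_3 \sqcup C_3$ only $\{1,2,\infty\}$ (or the cross token $\langle\textsc{Cross}\rangle$ one adopts for disconnected nodes) appear. Thus the tensors of position embeddings $\{b(i,j)\}$ differ between the two graphs, and by the injectivity assumption applied to the attention sublayer the resulting output features must differ, so the transformer distinguishes $G_1$ and $G_2$.

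Combining (a) and (b) yields the strict inclusion and proves the theorem. The main obstacle I anticipate is the formal argument in part (a): the assumption about injectivity is quite strong, and one must be careful that simulating 1-WL through a single attention head (restricted to distance-$1$ entries) actually produces a multiset-injective aggregation rather than a sum that loses information. I would address this by invoking a Deep-Sets-style multiset encoding at the value projection $W^{V}$, or by appealing directly to the injectivity hypothesis applied to the concatenation of (position embedding, attended feature). The separating example in part (b) is comparatively routine once one observes that the distance multisets differ, and no subtle case analysis is needed beyond confirming the two graphs are 1-WL-equivalent.
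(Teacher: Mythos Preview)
Your two-part plan matches the paper's structure exactly: show refinement of 1-WL, then exhibit a separating pair. The separating example you chose ($C_6$ versus $C_3\sqcup C_3$) is precisely the kind the paper uses. One small slip: in a triangle every pair of distinct vertices is adjacent, so the distance multiset in $C_3\sqcup C_3$ is $\{0,1,\infty\}$, not $\{1,2,\infty\}$; your conclusion is unaffected.

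The only notable difference is in part (a). You propose to \emph{construct} attention weights that suppress all distances except $1$ and then argue multiset-injectivity via a Deep-Sets-style encoding. The paper avoids this machinery entirely: it invokes the injectivity assumption directly to view one transformer layer as computing
\[
\chi_G^t(i)=\operatorname{hash}\bigl(\{(d_G(i,j),\chi_G^{t-1}(j)):j\in N\}\bigr),
\]
and then observes that this multiset already contains the sub-multisets $\{(0,\chi_G^{t-1}(i))\}$ and $\{(1,\chi_G^{t-1}(j)):j\in\mathcal{N}_G(i)\}$, which are exactly the data 1-WL hashes. Hence the transformer coloring refines the 1-WL coloring at every round, with no need to pick special parameters or worry about whether a single head realizes a multiset-injective aggregator. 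Your route works, but the paper's is shorter and sidesteps the obstacle you flagged.
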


\begin{proof}
The 1-WL test is defined as the following iteration: 
\begin{equation}
\begin{aligned}
& \chi_G^0(i)=\operatorname{hash}(v_i) \\
& \chi_G^t(i):=\operatorname{hash}\left(\chi_G^{t-1}(i),\left\{\chi_G^{t-1}(j): j \in \mathcal{N}_G(i)\right\}\right) (\forall i \in N),
\end{aligned}
\end{equation}

where $\chi_G$ is the label in WL test, $\operatorname{hash}$ is the hash function, $\mathcal{N}_G(i)$ is the neighbor of node $i$.

The transformer with distance-based relative position embedding can be considered as the following mapping:

\begin{equation}
\begin{aligned}
\chi_G^t(i)&:=\operatorname{hash}\left(\left\{\left(d_G(i, j), \chi_G^{t-1}(j)\right): j \in N\right\}\right)\\
&=\operatorname{hash}(\{ (0,\chi_G^{t-1}(i))\}  \\ &  \cup \{(1,\chi_G^{t-1}(j)) : j \in \mathcal{N}_G(i)\}  \\ & \cup \{(d_G(i, k), \chi_G^{t-1}(k)): k \in N- \mathcal{N}_G(i)-\{i\} \})
\end{aligned}
\end{equation}

It can be seen that the iteration of the transformer with distance-based relative position embedding includes both the node $i$ and its neighbors $\mathcal{N}_G(i)$, marked by distance $0$ and $1$, respectively, ensuring the capacity is at least as strong as 1-WL test. It further includes other nodes far away, along with their distance, which constitutes a stronger capacity than 1-WL test. Figure \ref{appendix:fig:theory} are two example graphs that cannot be distinguished by 1-WL test, but can be distinguished by transformer with distance-based relative position embedding.

\begin{figure*}[htbp]
    
    \centering
    \subfloat{\includegraphics[width=0.5\linewidth]{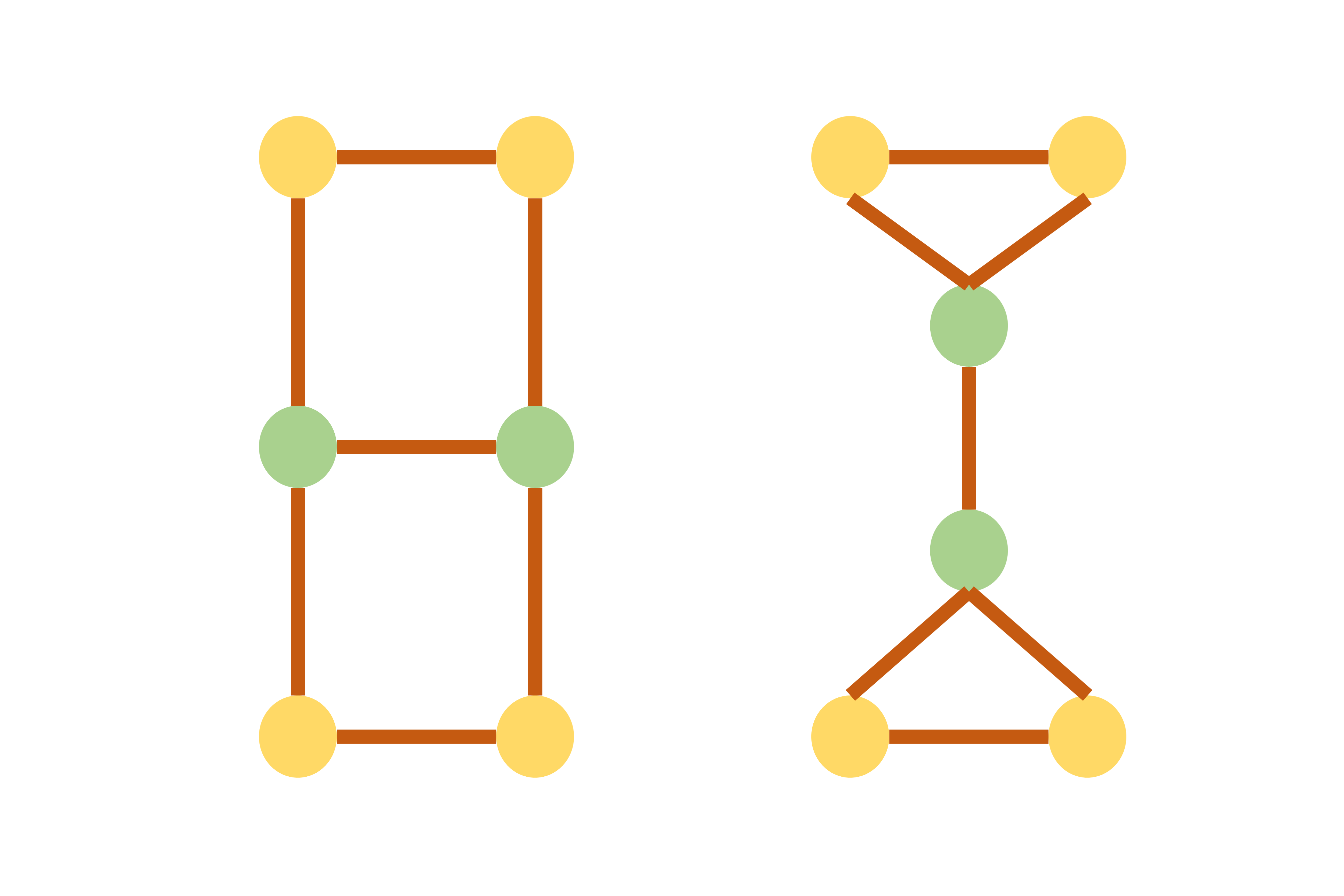}}
    \caption{Two example graphs that cannot be distinguished by 1-WL test, but can be distinguished by transformer with distance-based relative position embedding.}
    \label{appendix:fig:theory}
\end{figure*}

\end{proof}

\begin{theorem}
Assume for different input features and position embeddings, the transformer layers can output different output features. \modelname can distinguish graph-instruction pairs if graphs can be distinguished by transformer with distance-based relative position embedding, or instructions are different.
\end{theorem}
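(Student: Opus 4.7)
The plan is to exploit the unidirectional mask $b^M$ to decouple the graph sub-computation from the text, and then invoke the injectivity assumption of Theorem~\ref{theorem1} at each relevant layer. First I would observe that for any graph index $i \le n$ and any text index $j > n$, the entry $b^M_{i,j} = -\infty$ drives the softmax weight from $i$ to $j$ to zero, so graph tokens effectively attend only to other graph tokens. Restricted to graph indices the overall position bias $b(i,j)$ reduces to the graph shortest-distance embedding plus the shortest-path edge embedding. Consequently, the update rule applied to $h_1, \ldots, h_n$ at every encoder layer coincides exactly with a standard transformer with distance-based relative position embedding acting on $G$ alone.

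For case (a), suppose $G_1$ and $G_2$ are distinguishable by such a graph transformer. By the reduction above, the graph-side hidden states produced by the \modelname encoder must already differ between $(G_1, T)$ and $(G_2, T)$ at some encoder layer. Since text tokens attend to graph tokens through the cross-distance token (and are not masked in that direction), applying the injectivity assumption once more to the subsequent text-token updates yields distinct text hidden states, and hence distinct encoder outputs; the T5 decoder then maps these to distinct generated sequences.

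For case (b), suppose instead that the two instructions $T_1$ and $T_2$ differ as token sequences, so they differ either in token embeddings or in the relative positions $i-j$ used inside the text block. The text portion of the computation fully attends to both modalities, so the features or position-bias indices supplied to the text-token layers differ between the two settings regardless of whether $G_1 = G_2$. The injectivity assumption then gives different text hidden states, and the decoder again converts these into different outputs.

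The main obstacle is making the propagation argument fully rigorous: the assumption only provides single-layer injectivity, whereas we need differences to survive composition across the encoder stack, the encoder-decoder boundary, and the autoregressive generation. I would handle this by a layer-by-layer induction, treating the graph block and the text block separately thanks to the mask, and by appealing to the standard injectivity of an autoregressive decoder given distinct encoder memories. A secondary subtlety is that ``different input features'' must be read to include differences either in the token embeddings or in the position-bias indices; spelling this out precisely is needed so that case (b)'s invocation of the assumption is formally sound.
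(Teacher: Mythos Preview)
Your proposal is correct and follows the same approach as the paper: the key step is observing that the mask $b^M$ forces graph tokens to attend only to graph tokens, so the graph-side computation coincides with a pure distance-based relative position embedding transformer, after which the injectivity assumption handles both cases. In fact your argument is considerably more careful than the paper's two-sentence sketch, which never discusses propagation to the text block, the layer-by-layer induction, or the decoder; those concerns you flag as obstacles are simply glossed over in the original.
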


\begin{proof}
Because \modelname decomposes the attention from graph nodes to text, the graph nodes can only attend to other graph nodes. Thus the encoding capacity of graph data is the same as a single transformer with distance-based relative position embedding for graph data. 

Along with the assumption of transformer layers, \modelname is able to distinguish graph-instruction pairs if graphs can be distinguished by transformer with distance-based relative position embedding, or instructions are different.

\end{proof}

\subsection{Detalied Related Work}




We present a detailed related work here, due to the space limitation of paper.

\textbf{Molecule Representation learning} 
In recent years, there has been a growing interest in developing molecular representation learning for downstream tasks like drug discovery and other applications. One approach that has received considerable attention is utilizing language modeling techniques to acquire molecular representations based on Simplified Molecular Input Line Entry System (SMILES) strings \citep{wang2019smiles,chithrananda2020chemberta}. Although sequence-based representations have demonstrated success in some applications, concerns have been raised about their capability to incorporate all pertinent substructure information. To address this limitation, some researchers have proposed the use of Graph Neural Networks (GNNs) to model molecules as graphs \cite{gilmer2017neural,you2020graph,hu2019strategies}, potentially providing a more comprehensive and accurate representation of the molecular structure.

Existing GNNs follow the message-passing paradigm and suffer from problems like long-range dependency vanishing and over-smoothing. Recently, Graph Transformer~\citep{rong2020self,ying2021transformers} has been proposed to better encode structures of graphs. The Graph Transformer is inspired by the Transformer architecture, which has shown remarkable performance in natural language processing \citep{vaswani2017attention,devlin2019bert,liu2019roberta}. The Graph Transformer extends the Transformer architecture to the graph domain, allowing the model to capture the global structure and long-range dependencies of the graph \citep{zhang2020graph,dwivedi2020generalization,kreuzer2021rethinking,kim2022pure,wu2021representing, park2022grpe,maziarka2020molecule,ying2021transformers,chen2022structure,mialon2021graphit,choromanski2022block,
bastos2022investigating,guo2022unleashing,zhaomore}.

\textbf{Molecule Pretraining} 
To fully explore the inherent structural information of molecules on a large scale and transfer useful information to downstream tasks, significant efforts have been made to address the inadequacies in molecular pre-training. Supervised pretraining is commonly used for learning useful representations \citep{hu2019strategies,ying2021transformers,sun2022does}. As for unsupervised pretraining, one approach involved using an generative pre-training strategy on molecular SMILES strings \citep{wang2019smiles,honda2019smiles,chithrananda2020chemberta,bagal2021molgpt,ross2022molformer} and Graph \citep{hu2019strategies,liu2019n,rong2020self,zhang2021motif}, which was followed by recent works adopting the contrastive paradigm that aligns representation of augmented views of the same graph but keeping  views from other graphs away \citep{velickovic2019deep, sun2020infograph,hassani2020contrastive,you2020graph,you2021graph,suresh2021adversarial,xu2021infogcl,fang2022molecular,sun2021mocl,wang2021molecular,xia2022simgrace,wang2022improving,liu2021pre}.

The pretraining methods mentioned focus on obtaining representations for supervised training. However, for natural language instruction-based zero-shot graph learning, it's necessary to incorporate natural language into the pretraining process. Several studies have explored molecule structure-text multimodal pretraining. One class of method is the SMILES based language model, including KVPLM \citep{zeng2022deep} and MolT5 \citep{edwards2022translation}, which use SMILES strings and text for joint representation and translation. Another work Galactica \citep{taylor2022galactica} explored the multi-task molecule task learning with instruction. Some other works acquire advanced representations for molecules by GNN, such as Text2Mol \citep{edwards2021text2mol}, MoMu \citep{su2022molecular}, MoleculeSTM \citep{liu2022multi}, and CLAMP \citep{seidl2023enhancing}, trained by contrastive learning between molecule graph and text description for molecule retrieval and caption tasks. MoleculeSTM and CLAMP explored molecule editing and property prediction with instructions. However, none of these works address the zero-shot fashion on complex molecule tasks like property prediction, due to constraints imposed by the pretraining methodology that not addressing the instruction-following ability, and their model capacity for representing molecule graphs.




\textbf{Instruction-based zero-shot learning} Instruction-based zero-shot learning is an innovative approach that leverages natural language instructions and definitions to enable neural models to solve a variety of tasks \citep{raffel2020exploring,brown2020language,schick2021exploiting,efrat2020turking,zhong2021adapting,mishra2021reframing,mishra2022cross,parmar2022boxbart}. By providing a human-readable prompt, this method enables easier and more efficient specification of the learning task by utilizing knowledge about the task without data. To enhance the model's ability to follow instructions, some researchers have employed instruction-based pretraining techniques \citep{sanh2021multitask,wang2022super, chung2022scaling,ouyang2022training}, which explicitly train language models to solve tasks with instructions. Besides natural language processing, instruction-based zero-shot learning is also studied in multimodal domains like images \citep{bao2021beit,chen2022pali,alayrac2022flamingo,li2023blip}.

\section{Experiments}

\subsection{Experiment setting}

Our model only utilizes the basic features \citep{hu2019strategies,sun2022does} of molecule graphs, which do not include additional features like ring markers. Specifically, it utilizes the first two dimensions of node features and the first two dimensions of edge features processed by ogb.smiles2graph.
Therefore, the effectiveness of \modelname predominantly stems from its architectural design and pretraining rather than the graph features it incorporates.

Following the standard supervised setting in previous studies \citep{hu2019strategies}, we utilize the scaffold strategy \citep{ramsundar2019deep} to partition datasets into three subsets: the training set, validation set, and testing set with a ratio of 0.8, 0.1, 0.1. The scaffold strategy is a deterministic approach that involves sorting the data based on the scaffold, which represents the molecular structure. While this strategy aids in dataset partitioning, it can introduce a significant domain gap between the training and testing sets, thereby increasing the challenge of generalization.

For zero-shot, we report the results on the testing sets, ensuring the comparability of our results to previous works. For few-shot, we report the result of the best validation model on the testing set, the same as previous works and other supervised baselines \citep{ramsundar2019deep}.

Many datasets encompass multiple tasks. To evaluate these datasets, we conduct separate testing for each task, accompanied by their respective instructions. For datasets with multiple tasks, we report the average ROC-AUC score for each task, following the methodology established in previous works \citep{hu2019strategies}.



\subsection{Detailed Zero-Shot Result}

We list the full zero-shot result of \modelname and baselines in Table \ref{table:performance_comparison_full1}, \ref{table:performance_comparison_full2}, and \ref{table:performance_comparison_full3}. The standard deviation for supervised results are denoted after $\pm$, and the multi-task setting results of Galactica are denoted in parentheses with italic. We also include the instruction-based zero-shot result reported in recent baseline CLAMP \citep{seidl2023enhancing} which is tested by their instruction, denoted by italics too. CLAMP is a contrastive pretrained model with ensembled encoders for molecule and text. The parameter number for CLAMP's result is not clearly stated in their paper but should be larger than 10B as they use sT5 language model \citep{raffel2020exploring} XXL variant (11B) as one of the ensembled language models.

\begin{table}[htbp]
\centering
\caption{Zero shot performance over Bio-activity tasks}
\label{table:performance_comparison_full1}
\resizebox{\textwidth}{!}{%
\setlength{\tabcolsep}{2.0 mm}{
\begin{tabular}{llcccccHHHHHH}
\toprule
Method & \# Parameter & Type & bace & hiv & muv & Avg. bio & tox21 & toxcast & Avg. tox & bbbp & cyp450 & Avg. pha \\
KVPLM & 110M & \multirow{3}{*}{Zero Shot} & 0.5126 & 0.6120 & 0.6172 & 0.5806 & 0.4917 & 0.5096 & 0.5007 & 0.6020 & 0.5922 & 0.5971 \\
MoMu & 113M &  & 0.6656 & 0.5026 & 0.6051 & 0.5911 & 0.5757 & 0.5238 & 0.5498 & 0.4981 & 0.5798 & 0.5390 \\
CLAMP & > 10B &  & \textit{0.6476} & \textit{0.8067} & - & - & \textit{0.6058} & \textit{0.5383} & 0.5721 & \textit{0.4788} & - & - \\
\modelname & 64M &  & 0.6957 & 0.6624 & 0.6439 & 0.6673 & 0.6119 & 0.5904 & 0.6011 & 0.5939 & 0.7125 & 0.6532 \\
\hline
Galactica-125M & 125M & \multirow{2}{*}{Multi Task} & 0.4451(\textit{0.561}) & 0.3671(\textit{0.702}) & 0.4986 & 0.4369 & 0.4964(\textit{0.543}) & 0.5106(\textit{0.518}) & 0.5035 & 0.6052(\textit{0.393}) & 0.5369 & 0.5711 \\
Galactica-1.3B & 1.3B &  & 0.5648(\textit{0.576}) & 0.3385(\textit{0.724}) & 0.5715 & 0.4916 & 0.4946(\textit{0.606}) & 0.5123(\textit{0.589}) & 0.5035 & 0.5394(\textit{0.604}) & 0.4686 & 0.5040 \\
\hline
GCN & 0.5M & \multirow{5}{*}{Supervised}  & \textit{0.736$\pm$0.030} & \textit{0.757$\pm$0.011} & \textit{0.732$\pm$0.014} & 0.742 & \textit{0.749$\pm$0.008} & \textit{0.633$\pm$0.009} & 0.691 & \textit{0.649$\pm$0.030} & \textit{0.8041$\pm$0.005}	& 0.7266 \\
GAT & 1.0M &  & \textit{0.697$\pm$0.064} & \textit{0.729$\pm$0.018} & \textit{0.666$\pm$0.022} & 0.697 & \textit{0.754$\pm$0.005} & \textit{0.646$\pm$0.006} & 0.700 & \textit{0.662$\pm$0.026} & 0.8281$\pm$0.004 & 0.7451 \\
GIN & 1.8M &  & \textit{0.701$\pm$0.054} & \textit{0.753$\pm$0.019} & \textit{0.718$\pm$0.025} & 0.724 & \textit{0.740$\pm$0.008} & \textit{0.634$\pm$0.006} & 0.687 & \textit{0.658$\pm$0.045} & 0.8205$\pm$0.012 & 0.7392 \\
Graphormer & 48M &  & 0.7760$\pm$0.015 & 0.7452$\pm$0.014 & 0.7061$\pm$0.027 & 0.7424 & 0.7589$\pm$0.004 & 0.6470$\pm$0.008 & 0.7029 & 0.7015$\pm$0.013 & 0.8436$\pm$0.003 & 0.7725 \\
Graphormer-p & 48M &  & 0.8575$\pm$0.006 & 0.7788$\pm$0.012 & 0.7480$\pm$0.020 & 0.7948 & 0.7729$\pm$0.006 & 0.6649$\pm$0.006 & 0.7189 & 0.7163$\pm$0.009 & 0.8877$\pm$0.004 & 0.8020 \ \\
\hline
\end{tabular}%
}
}
\end{table}

\begin{table}[htbp]
\centering
\caption{Zero shot performance over Toxicity  tasks}
\label{table:performance_comparison_full2}
\resizebox{\textwidth}{!}{%
\setlength{\tabcolsep}{4.0 mm}{
\begin{tabular}{llcHHHHcccHHH}
\toprule
Method & \# Parameter & Type & bace & hiv & muv & Avg. bio & tox21 & toxcast & Avg. tox & bbbp & cyp450 & Avg. pha \\
KVPLM & 110M & \multirow{3}{*}{Zero Shot} & 0.5126 & 0.6120 & 0.6172 & 0.5806 & 0.4917 & 0.5096 & 0.5007 & 0.6020 & 0.5922 & 0.5971 \\
MoMu & 113M &  & 0.6656 & 0.5026 & 0.6051 & 0.5911 & 0.5757 & 0.5238 & 0.5498 & 0.4981 & 0.5798 & 0.5390 \\
CLAMP & > 10B &  & \textit{0.6476} & \textit{0.8067} & - & - & \textit{0.6058} & \textit{0.5383} & 0.5721 & \textit{0.4788} & - & - \\
\modelname & 64M &  & 0.6957 & 0.6624 & 0.6439 & 0.6673 & 0.6119 & 0.5904 & 0.6011 & 0.5939 & 0.7125 & 0.6532 \\
\hline
Galactica-125M & 125M & \multirow{2}{*}{Multi Task} & 0.4451(\textit{0.561}) & 0.3671(\textit{0.702}) & 0.4986 & 0.4369 & 0.4964(\textit{0.543}) & 0.5106(\textit{0.518}) & 0.5035 & 0.6052(\textit{0.393}) & 0.5369 & 0.5711 \\
Galactica-1.3B & 1.3B &  & 0.5648(\textit{0.576}) & 0.3385(\textit{0.724}) & 0.5715 & 0.4916 & 0.4946(\textit{0.606}) & 0.5123(\textit{0.589}) & 0.5035 & 0.5394(\textit{0.604}) & 0.4686 & 0.5040 \\
\hline
GCN & 0.5M & \multirow{5}{*}{Supervised} & \textit{0.736$\pm$0.030} & \textit{0.757$\pm$0.011} & \textit{0.732$\pm$0.014} & 0.742 & \textit{0.749$\pm$0.008} & \textit{0.633$\pm$0.009} & 0.691 & \textit{0.649$\pm$0.030} & \textit{0.8041$\pm$0.005}	& 0.7266 \\
GAT & 1.0M &  & \textit{0.697$\pm$0.064} & \textit{0.729$\pm$0.018} & \textit{0.666$\pm$0.022} & 0.697 & \textit{0.754$\pm$0.005} & \textit{0.646$\pm$0.006} & 0.700 & \textit{0.662$\pm$0.026} & 0.8281$\pm$0.004 & 0.7451 \\
GIN & 1.8M &  & \textit{0.701$\pm$0.054} & \textit{0.753$\pm$0.019} & \textit{0.718$\pm$0.025} & 0.724 & \textit{0.740$\pm$0.008} & \textit{0.634$\pm$0.006} & 0.687 & \textit{0.658$\pm$0.045} & 0.8205$\pm$0.012 & 0.7392 \\
Graphormer & 48M &  & 0.7760$\pm$0.015 & 0.7452$\pm$0.014 & 0.7061$\pm$0.027 & 0.7424 & 0.7589$\pm$0.004 & 0.6470$\pm$0.008 & 0.7029 & 0.7015$\pm$0.013 & 0.8436$\pm$0.003 & 0.7725 \\
Graphormer-p & 48M &  & 0.8575$\pm$0.006 & 0.7788$\pm$0.012 & 0.7480$\pm$0.020 & 0.7948 & 0.7729$\pm$0.006 & 0.6649$\pm$0.006 & 0.7189 & 0.7163$\pm$0.009 & 0.8877$\pm$0.004 & 0.8020 \ \\
\hline
\end{tabular}%
}
}
\end{table}

\begin{table}[htbp]
\centering
\caption{Zero shot performance over  Pharmacokinetic tasks}
\label{table:performance_comparison_full3}
\resizebox{\textwidth}{!}{%
\setlength{\tabcolsep}{4.0 mm}{
\begin{tabular}{llcHHHHHHHccc}
\toprule
Method & \# Parameter & Type & bace & hiv & muv & Avg. bio & tox21 & toxcast & Avg. tox & bbbp & cyp450 & Avg. pha \\
KVPLM & 110M & \multirow{3}{*}{Zero Shot} & 0.5126 & 0.6120 & 0.6172 & 0.5806 & 0.4917 & 0.5096 & 0.5007 & 0.6020 & 0.5922 & 0.5971 \\
MoMu & 113M &  & 0.6656 & 0.5026 & 0.6051 & 0.5911 & 0.5757 & 0.5238 & 0.5498 & 0.4981 & 0.5798 & 0.5390 \\
CLAMP & > 10B &  & \textit{0.6476} & \textit{0.8067} & - & - & \textit{0.6058} & \textit{0.5383} & 0.5721 & \textit{0.4788} & - & - \\
\modelname & 64M &  & 0.6957 & 0.6624 & 0.6439 & 0.6673 & 0.6119 & 0.5904 & 0.6011 & 0.5939 & 0.7125 & 0.6532 \\
\hline
Galactica-125M & 125M & \multirow{2}{*}{Multi Task} & 0.4451(\textit{0.561}) & 0.3671(\textit{0.702}) & 0.4986 & 0.4369 & 0.4964(\textit{0.543}) & 0.5106(\textit{0.518}) & 0.5035 & 0.6052(\textit{0.393}) & 0.5369 & 0.5711 \\
Galactica-1.3B & 1.3B &  & 0.5648(\textit{0.576}) & 0.3385(\textit{0.724}) & 0.5715 & 0.4916 & 0.4946(\textit{0.606}) & 0.5123(\textit{0.589}) & 0.5035 & 0.5394(\textit{0.604}) & 0.4686 & 0.5040 \\
\hline
GCN & 0.5M & \multirow{5}{*}{Supervised} & \textit{0.736$\pm$0.030} & \textit{0.757$\pm$0.011} & \textit{0.732$\pm$0.014} & 0.742 & \textit{0.749$\pm$0.008} & \textit{0.633$\pm$0.009} & 0.691 & \textit{0.649$\pm$0.030} & \textit{0.8041$\pm$0.005}	& 0.7266 \\
GAT & 1.0M &  & \textit{0.697$\pm$0.064} & \textit{0.729$\pm$0.018} & \textit{0.666$\pm$0.022} & 0.697 & \textit{0.754$\pm$0.005} & \textit{0.646$\pm$0.006} & 0.700 & \textit{0.662$\pm$0.026} & 0.8281$\pm$0.004 & 0.7451 \\
GIN & 1.8M &  & \textit{0.701$\pm$0.054} & \textit{0.753$\pm$0.019} & \textit{0.718$\pm$0.025} & 0.724 & \textit{0.740$\pm$0.008} & \textit{0.634$\pm$0.006} & 0.687 & \textit{0.658$\pm$0.045} & 0.8205$\pm$0.012 & 0.7392 \\
Graphormer & 48M &  & 0.7760$\pm$0.015 & 0.7452$\pm$0.014 & 0.7061$\pm$0.027 & 0.7424 & 0.7589$\pm$0.004 & 0.6470$\pm$0.008 & 0.7029 & 0.7015$\pm$0.013 & 0.8436$\pm$0.003 & 0.7725 \\
Graphormer-p & 48M &  & 0.8575$\pm$0.006 & 0.7788$\pm$0.012 & 0.7480$\pm$0.020 & 0.7948 & 0.7729$\pm$0.006 & 0.6649$\pm$0.006 & 0.7189 & 0.7163$\pm$0.009 & 0.8877$\pm$0.004 & 0.8020 \ \\
\hline
\end{tabular}%
}
}
\end{table}

\begin{figure*}[htbp]
    
    \centering
    \subfloat{\label{f0b}\includegraphics[width=0.4\linewidth]{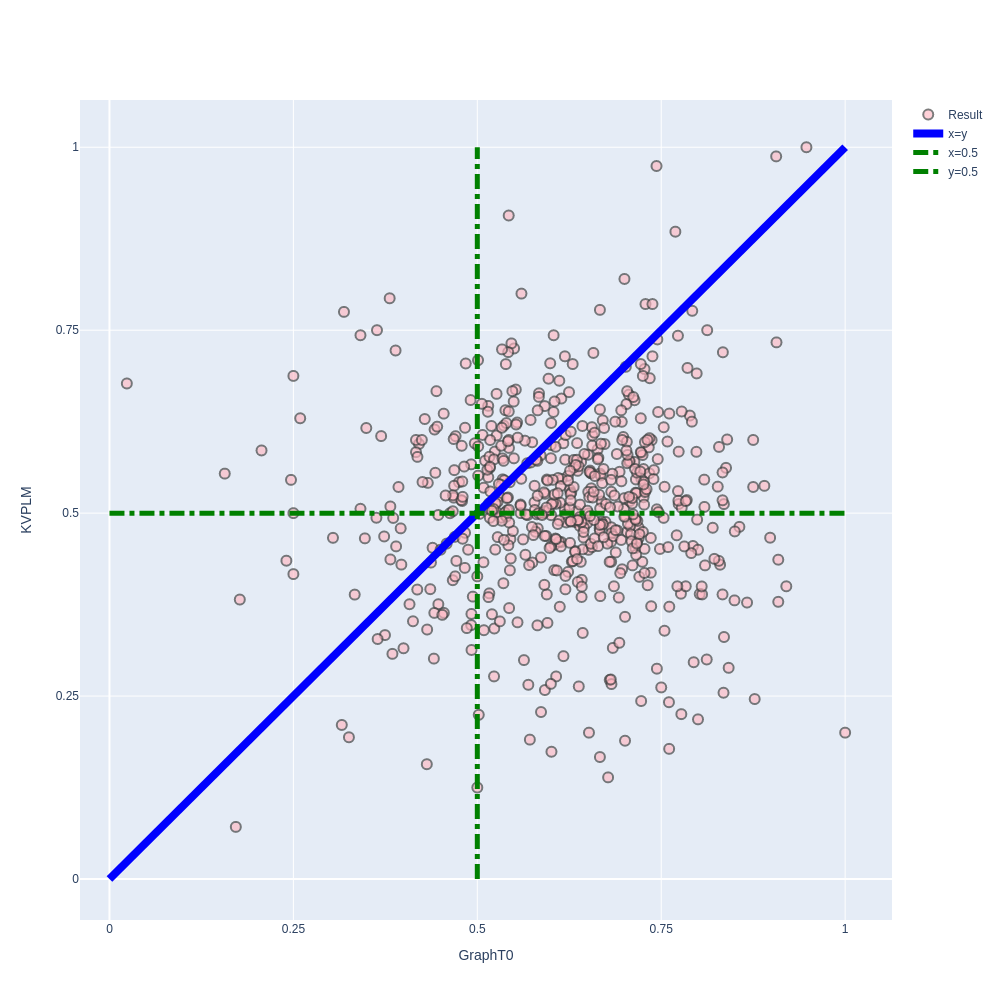}}
    \subfloat{\label{f0a}\includegraphics[width=0.4\linewidth]{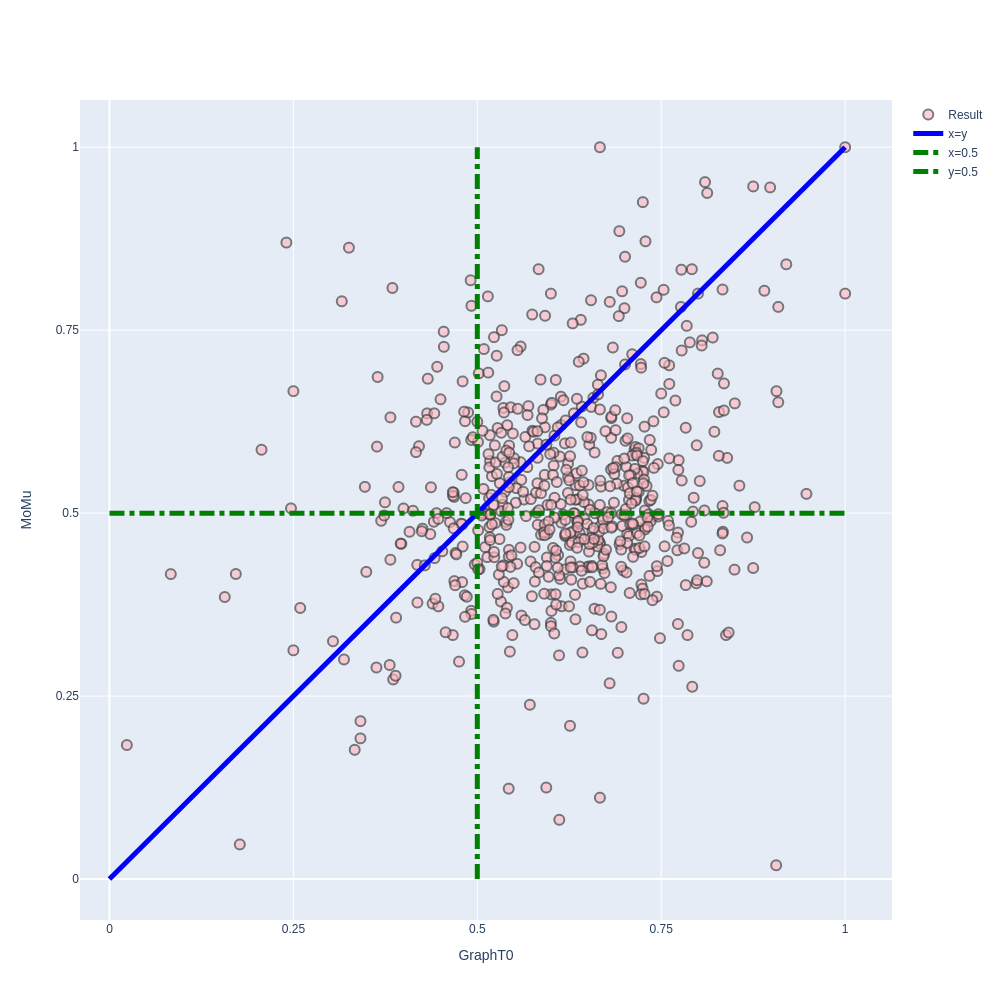}}
    
    \caption{Scatter of \modelname over baselines. Below the  diagonal line x=y means our method performs better.}
    \label{appendix:fig:per_task_moleculenet}
\end{figure*}

The result in parentheses represents the outcome of the multitask setting, also referred to as weakly supervised in the original paper, where the same instructions are used for both pretraining and testing. While Galactica has been exposed to the same task instructions, it actually employs multitask learning with instructions serving as task identity. 

Even in comparison to Galactica's multitask result, \modelname demonstrates comparable or superior performance on most datasets. This highlights the ability of \modelname to perform zero-shot tasks with high quality.

The disparity between the multitask result and the tested result with our instructions is due to the gap between their instructions and ours, which indicates that Galactica relies on specific task instructions for task recognition, without a true understanding of the instructions. As a result, it exhibits poor generalization to other instruction forms. Note that Galactica even do not surpass KVPLM and MoMu which are also zero-shot learning methods.

\modelname exhibits superior performance compared to the larger model CLAMP on the majority of datasets, with the exception of HIV. It is important to highlight that our model is significantly smaller in size than CLAMP, underscoring the effectiveness of our unified graph-text language model. Additionally, it should be noted that CLAMP lacks the capability to handle regression tasks due to its contrastive model architecture, whereas our encoder-decoder architecture enables us to successfully tackle a wide range of task types.

Significantly, the supervised results shed light on the task difficulties associated with each dataset. This showcases \modelname's capability to effectively solve molecule tasks in a zero-shot manner, approaching the performance of supervised results. Furthermore, our pretraining tasks yield an average performance improvement of 3 percent for Graphormer, with the largest gains observed in Bioactivity tasks and the smallest in Toxicity tasks. This suggests that there still exist gaps between the pretraining data and our downstream tasks, addressing the zero-shot setting of our dataset.

In Figure \ref{appendix:fig:per_task_moleculenet}, we present scatter plots comparing \modelname with KVPLM and MoMu across all tasks. The diagonal line represents the equality line where x=y indicates our method outperforms the baseline. Notably, it is evident that \modelname consistently performs significantly better than random guessing and surpasses the baselines on all tasks.

We plot the scatter of regression tasks in Figure \ref{appendix:fig:regression}. The plot clearly demonstrates a strong correlation between the predicted and actual values for ESOL and Lipo.




 

\begin{figure*}[htbp]
    
    \centering
    \subfloat{\includegraphics[width=0.33\linewidth]{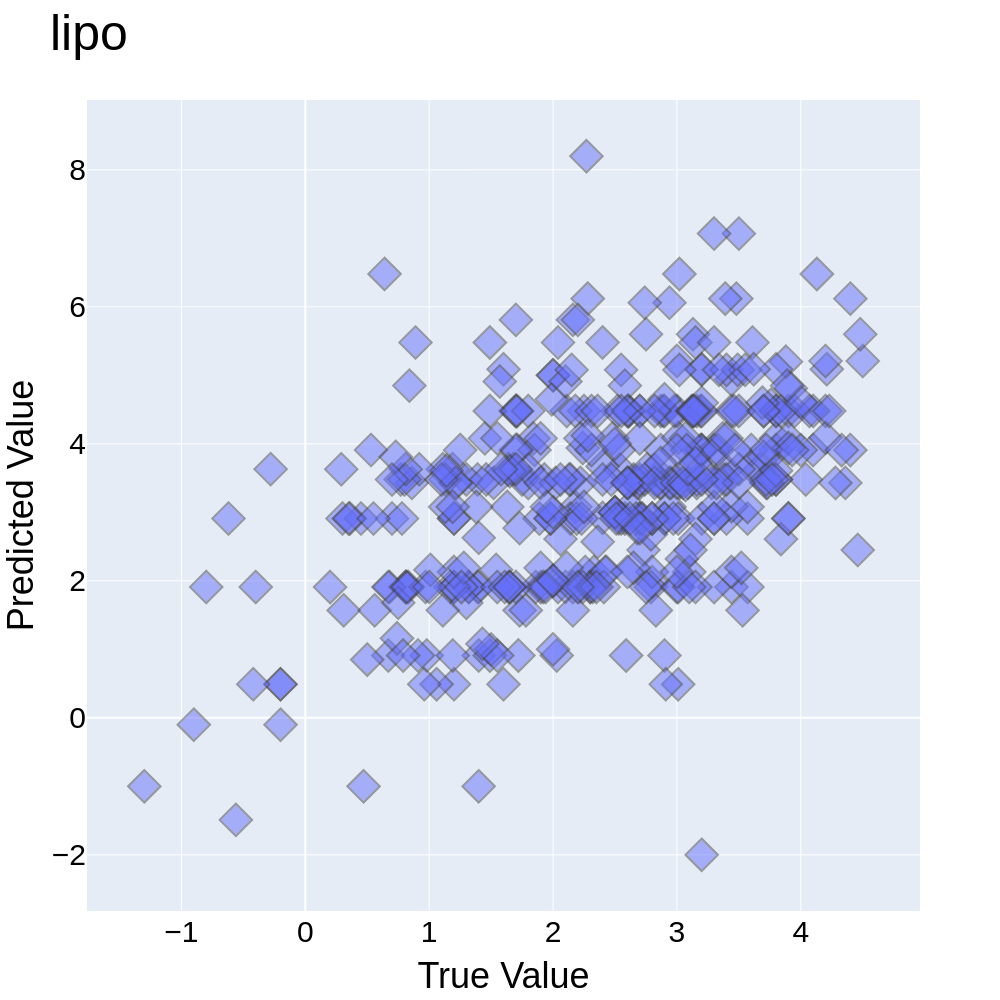}}
    \subfloat{\includegraphics[width=0.33\linewidth]{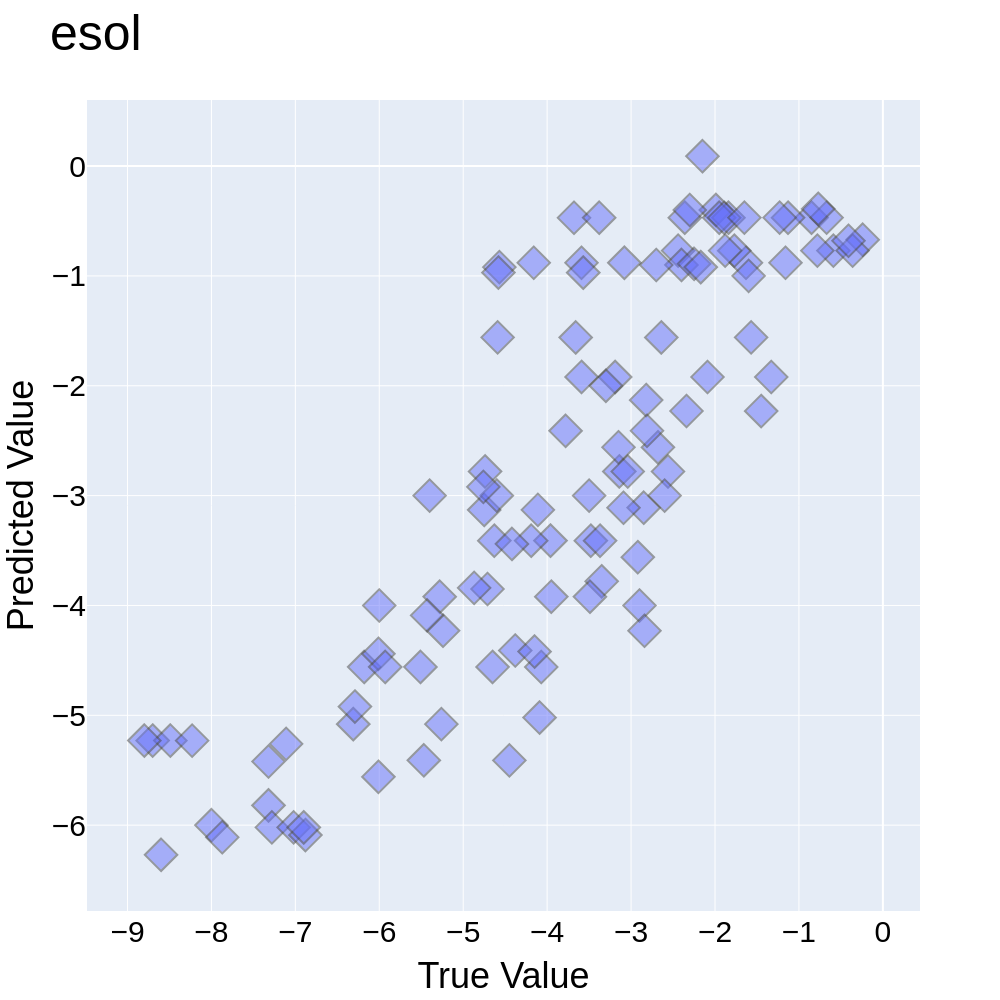}}
    \subfloat{\includegraphics[width=0.33\linewidth]{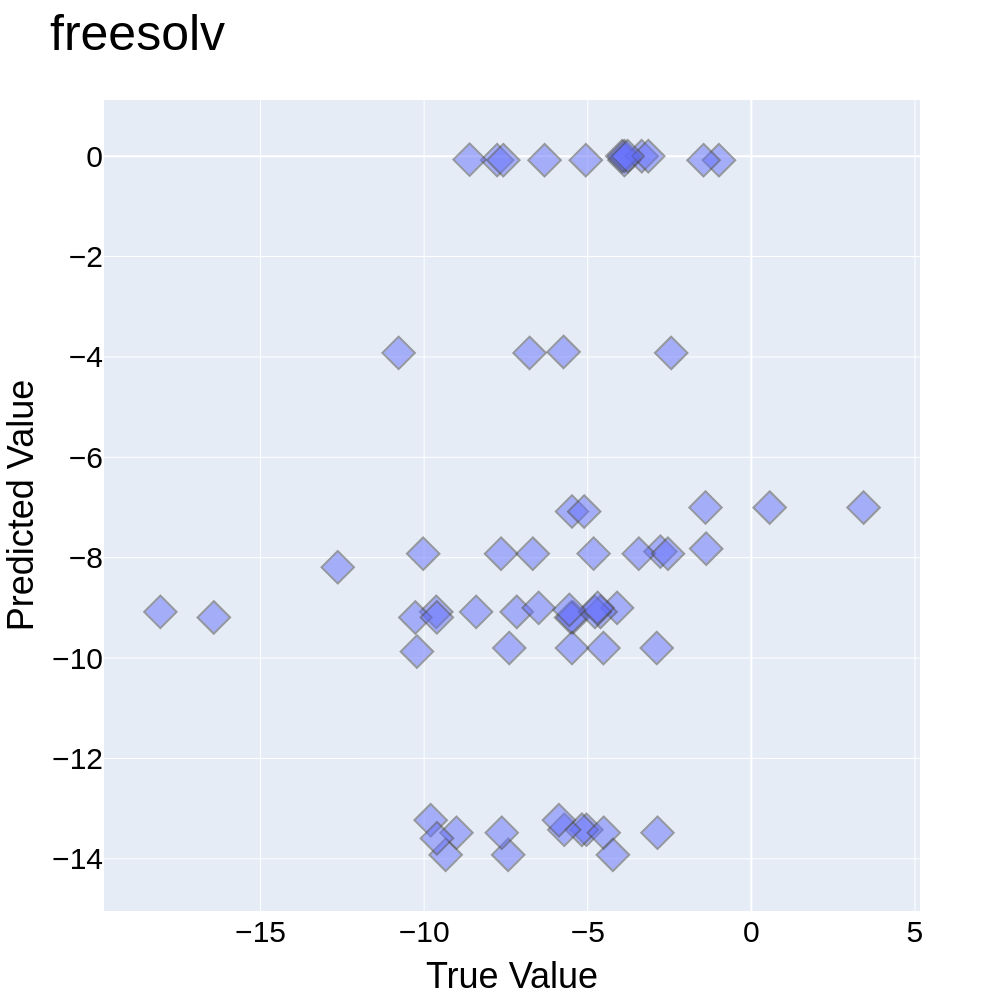}}
    
    \caption{Scatter of \modelname on generative tasks.}
    \label{appendix:fig:regression}
\end{figure*}

\subsection{Detailed Few-Shot Results}

In both classification tasks and regression tasks, we fine-tune the last linear layer of all models using their respective modeling loss.

It is important to note that the instruction-based few-shot approach is trained on each task individually, while supervised baselines are trained on multiple tasks from the dataset. Therefore, comparing these two approaches may not be strictly fair, as the multitask learning of the supervised baseline can contribute to improved task performance.

The results for few-shot learning on each dataset are presented in Figure \ref{appendix:fig:few shot each}. It is evident that, across the majority of datasets, \modelname demonstrates improvement as the number of few-shot examples increases. In fact, it even outperforms or matches the performance of the supervised GIN on several datasets, such as bace, bbbp, and esol. There is also observable enhancement in performance across various datasets when employing few-shot learning, including tox21, toxcast, lipo, and freesolv.

There is not result of MoMu on regression tasks, because MoMu is a contrastive model between graph and text, which cannot handle regression tasks.

\begin{figure*}[htbp]
    \centering
    \subfloat{\includegraphics[width=0.33\linewidth]{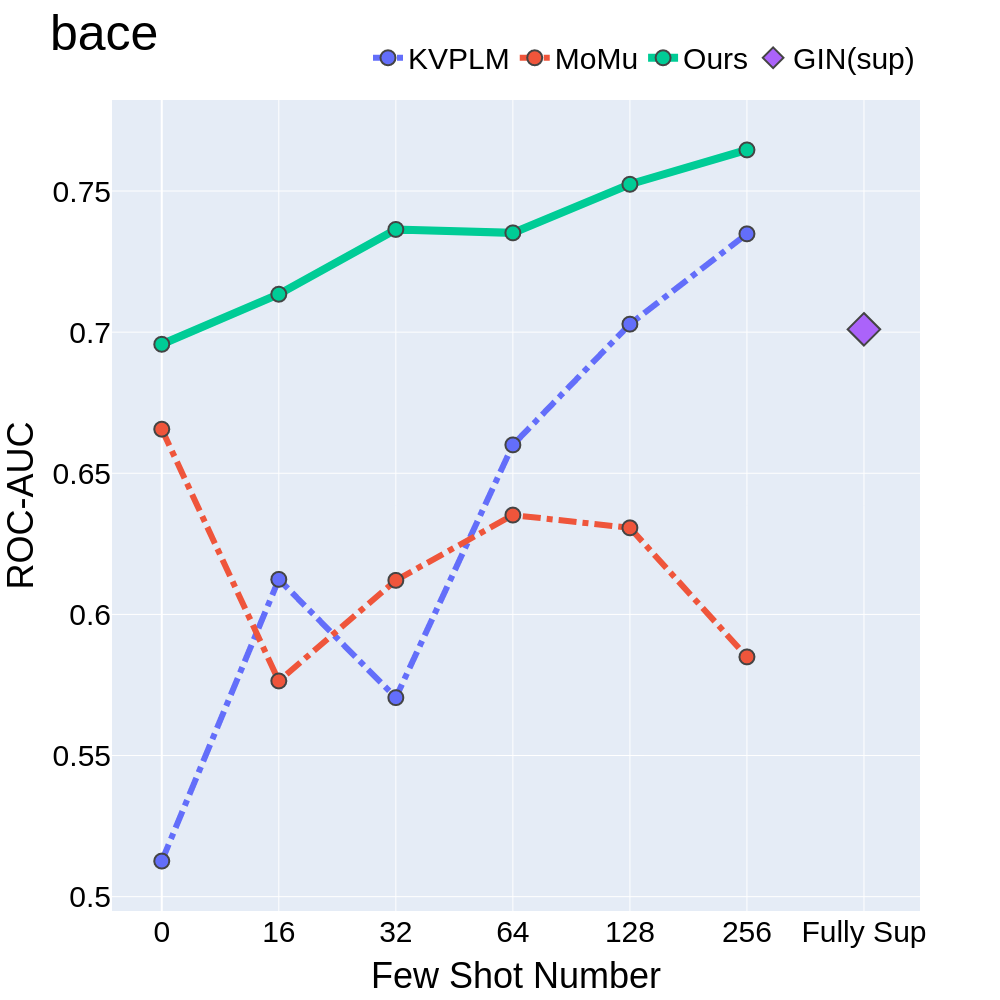}}
    \subfloat{\includegraphics[width=0.33\linewidth]{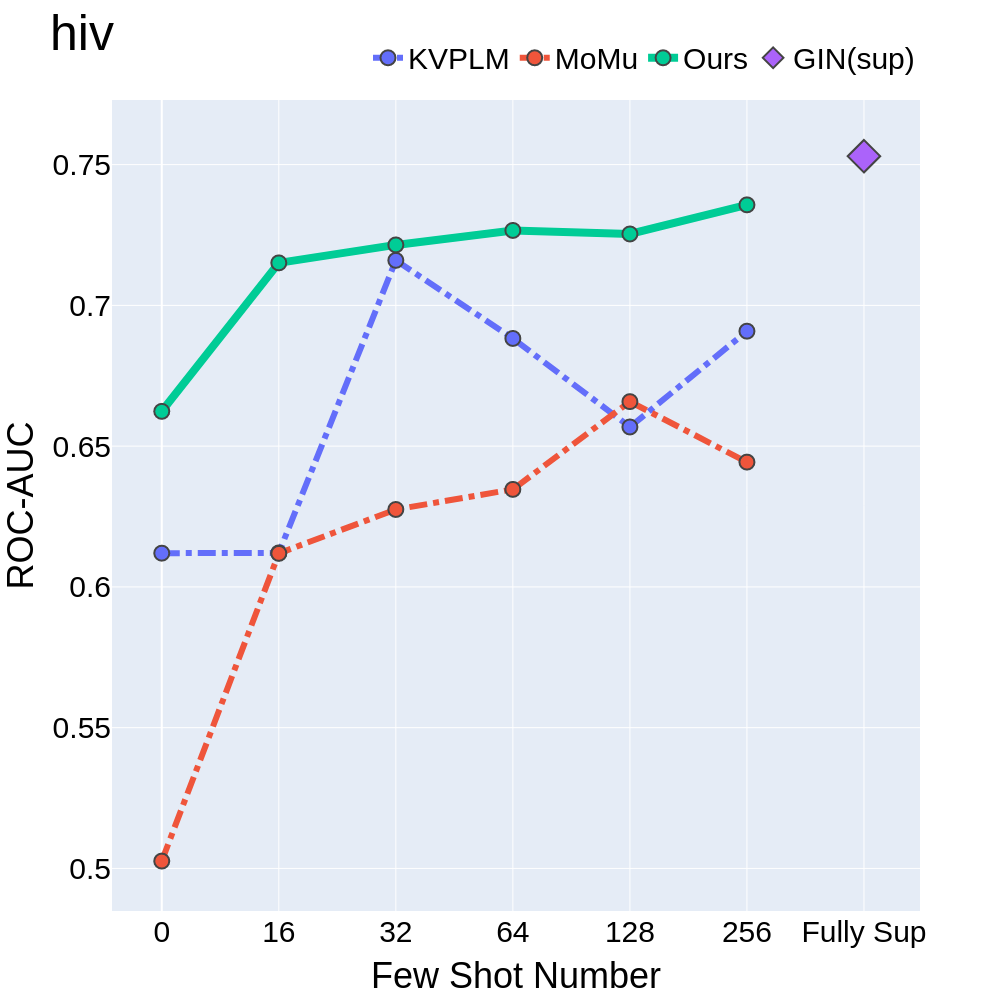}}
    \subfloat{\includegraphics[width=0.33\linewidth]{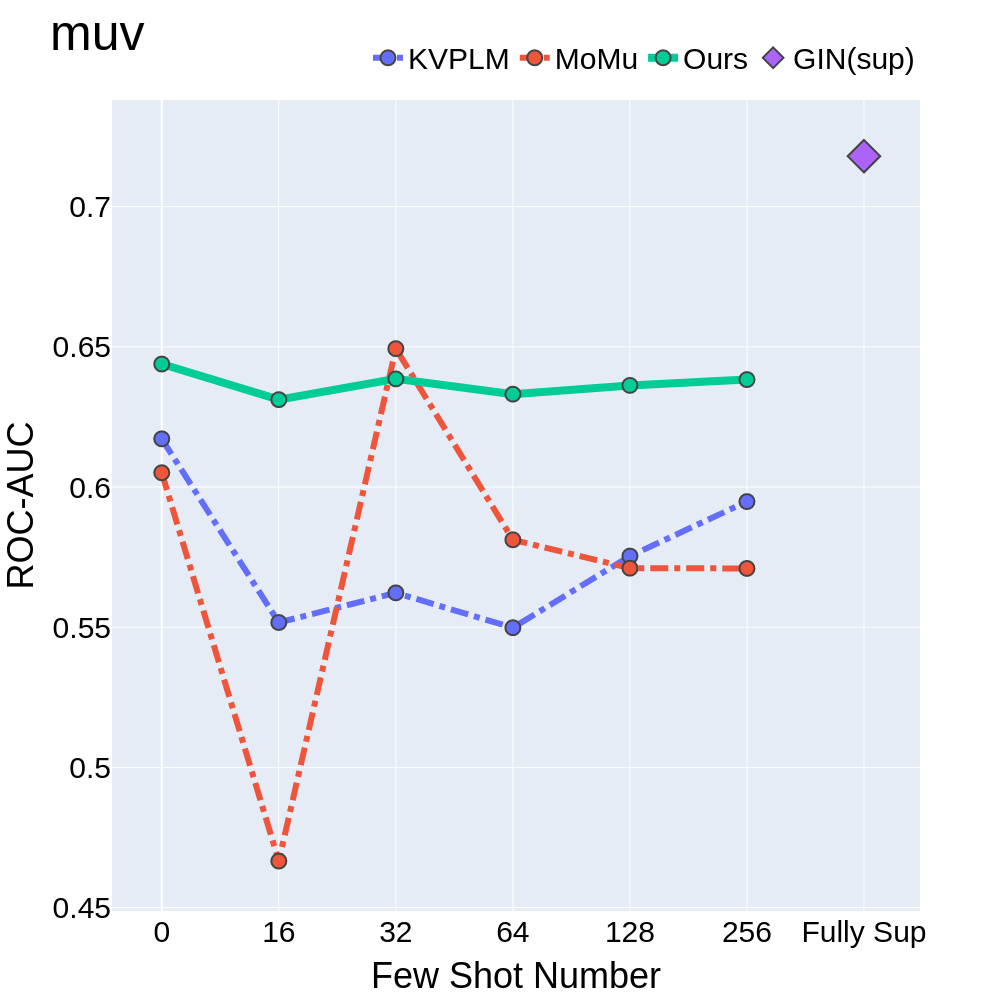}}\\
    \subfloat{\includegraphics[width=0.33\linewidth]{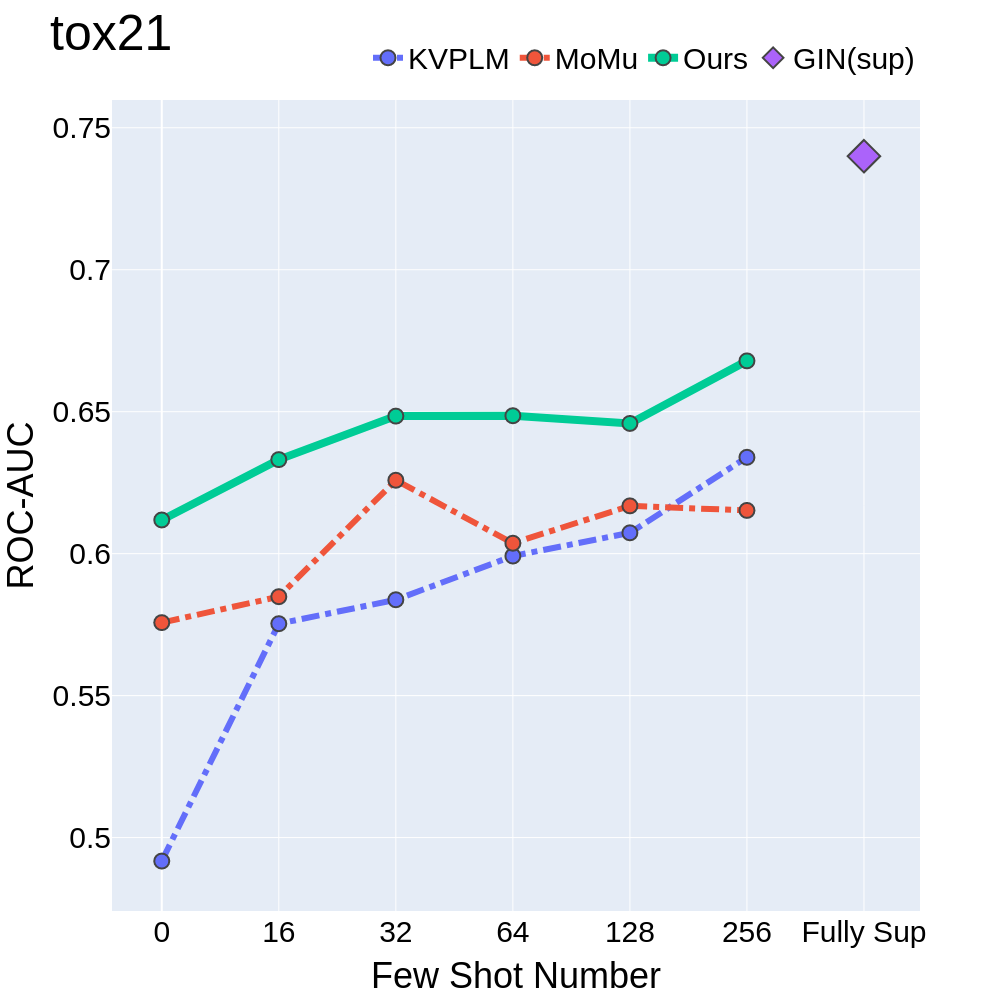}}
    \subfloat{\includegraphics[width=0.33\linewidth]{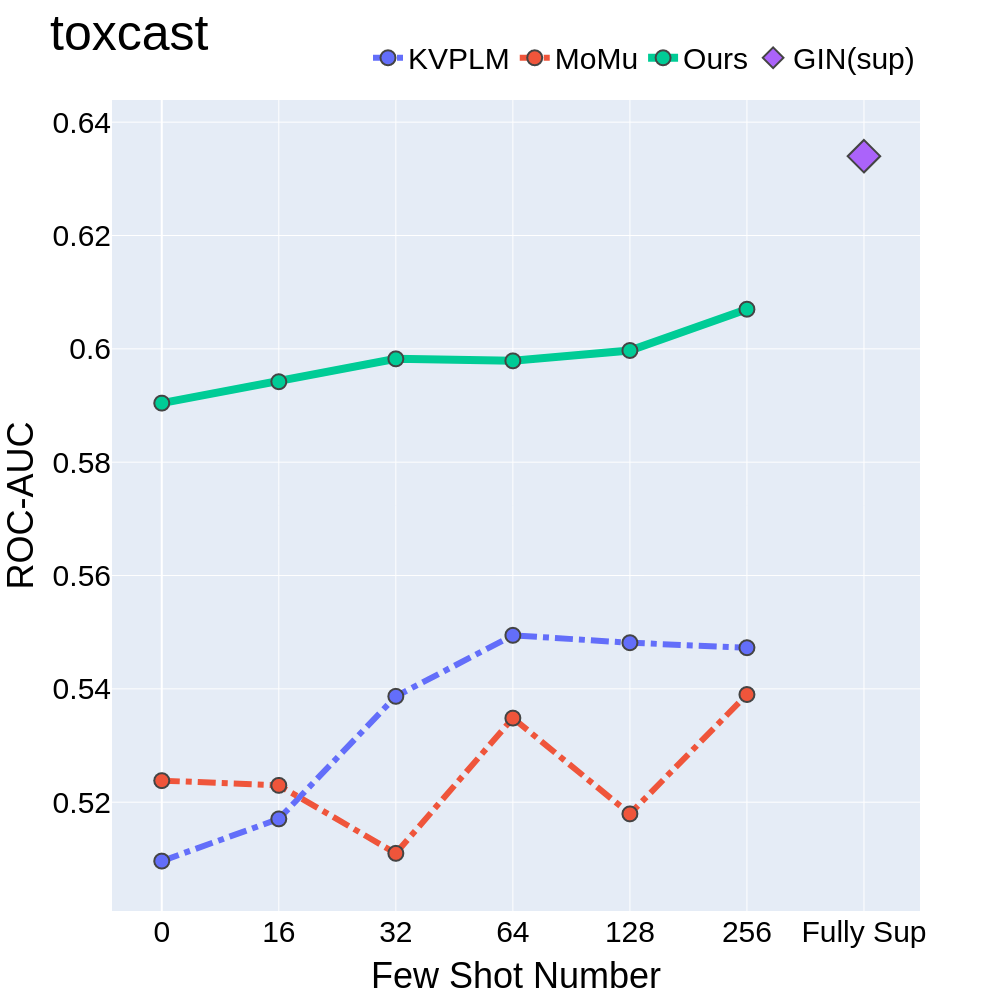}}
    \subfloat{\includegraphics[width=0.33\linewidth]{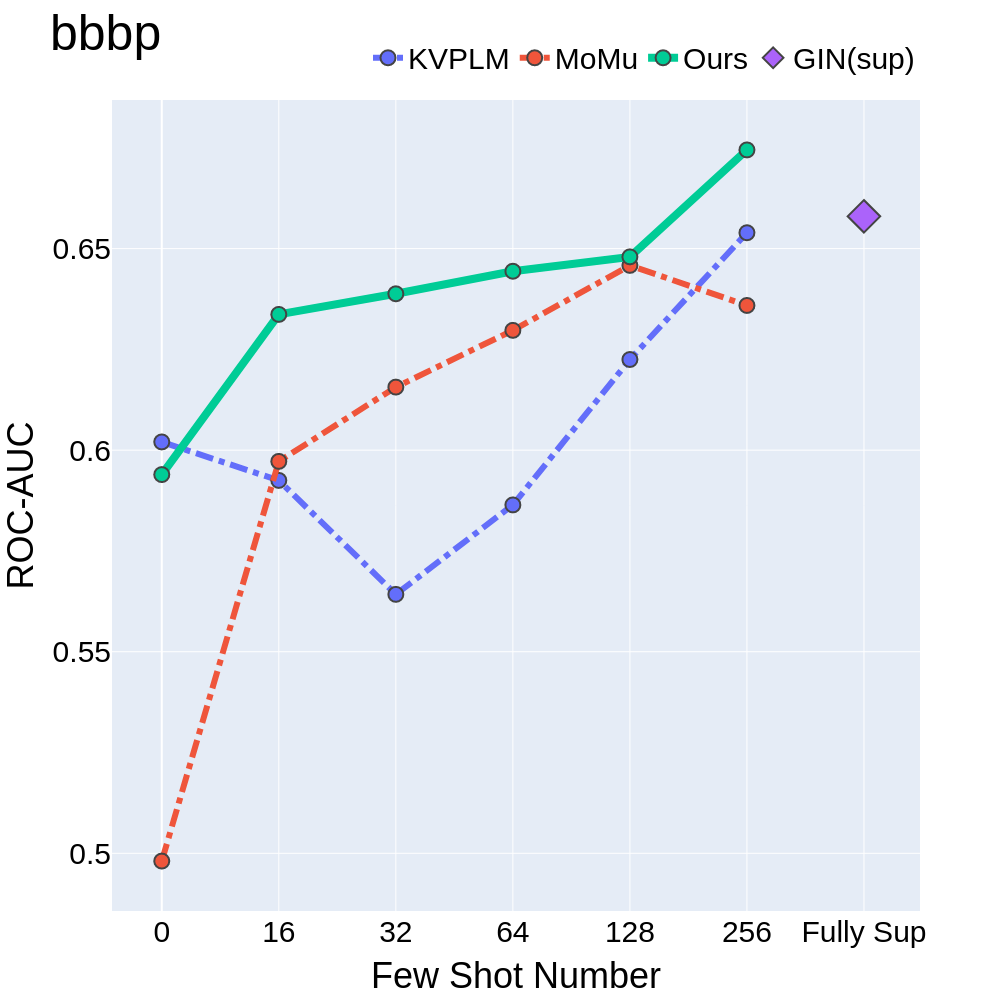}}\\
    \subfloat{\includegraphics[width=0.33\linewidth]{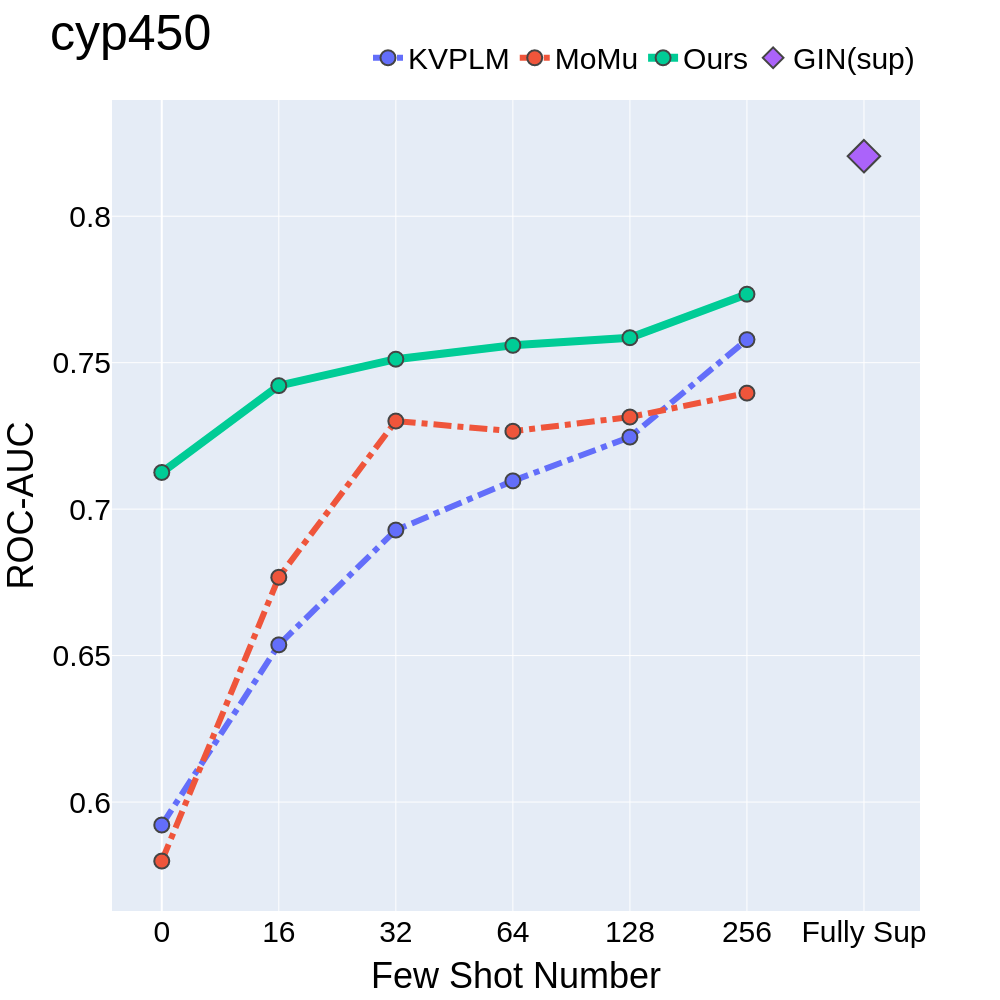}}
    \subfloat{\includegraphics[width=0.33\linewidth]{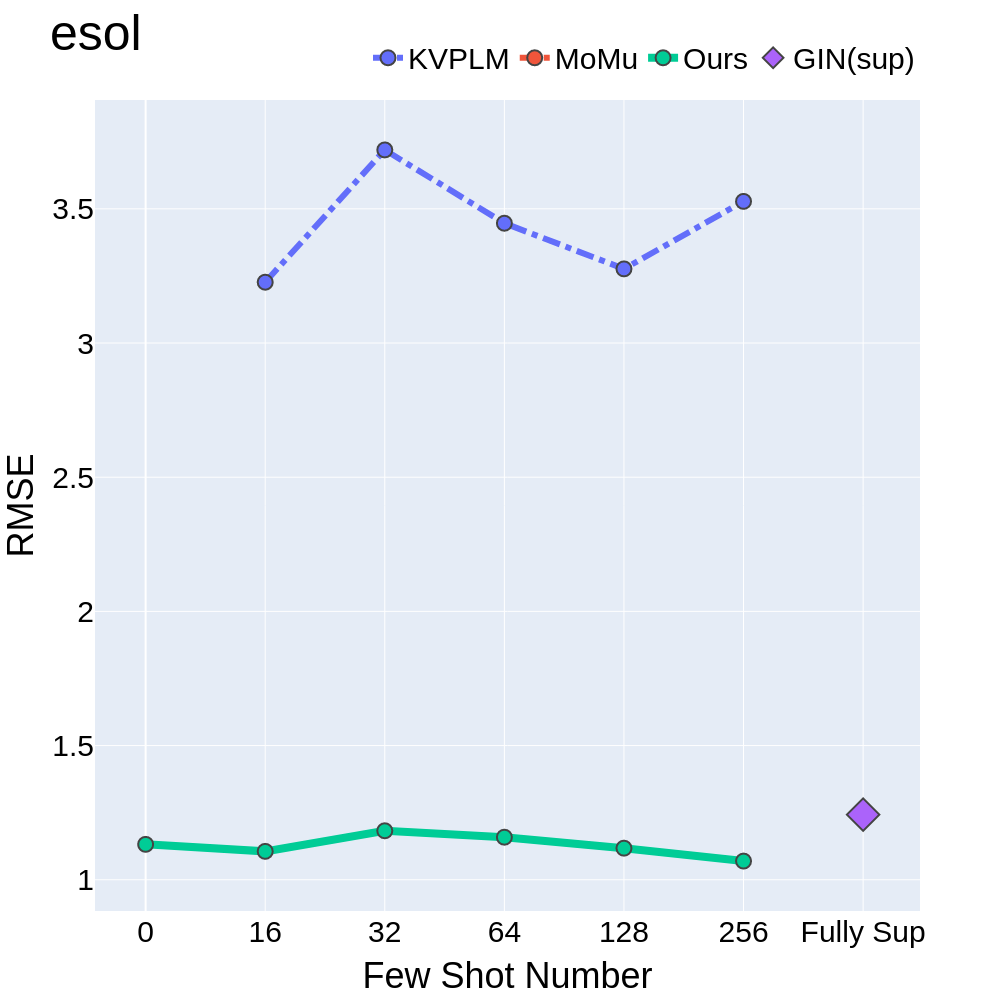}}
    \subfloat{\includegraphics[width=0.33\linewidth]{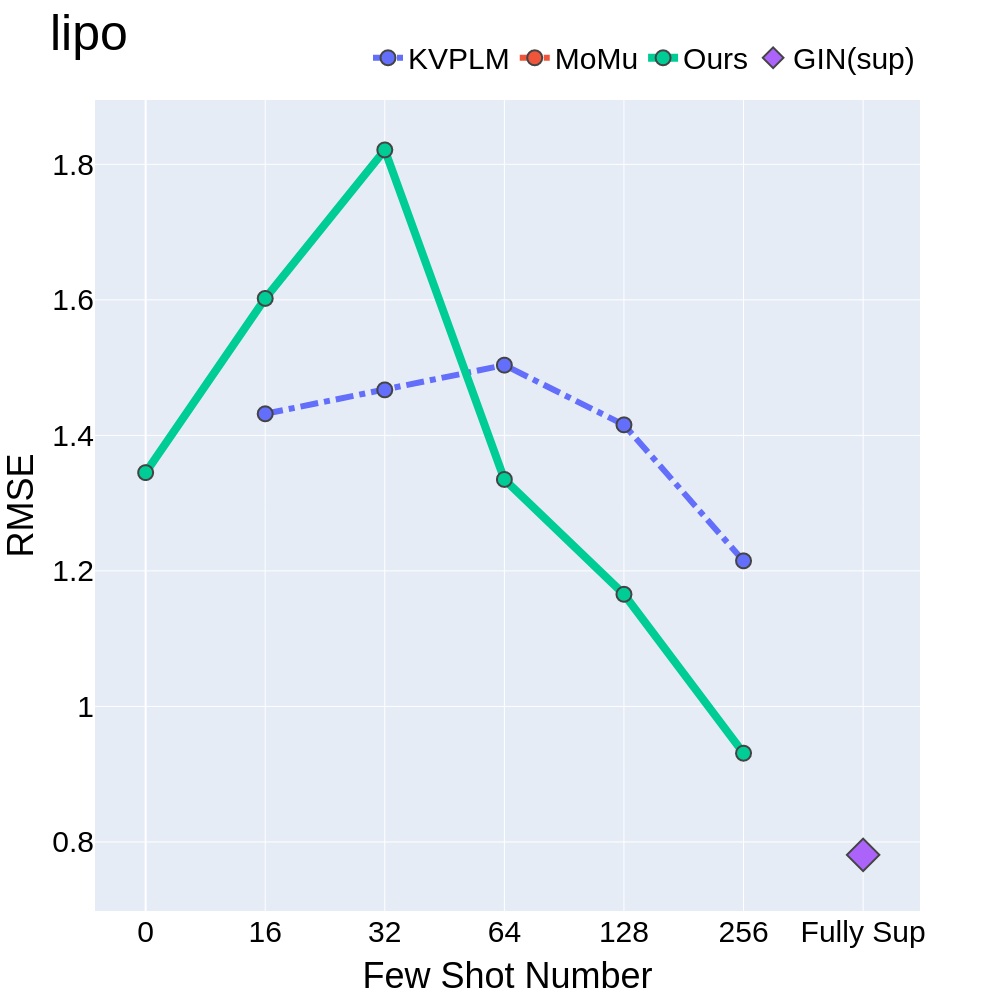}}\\
    \subfloat{\includegraphics[width=0.33\linewidth]{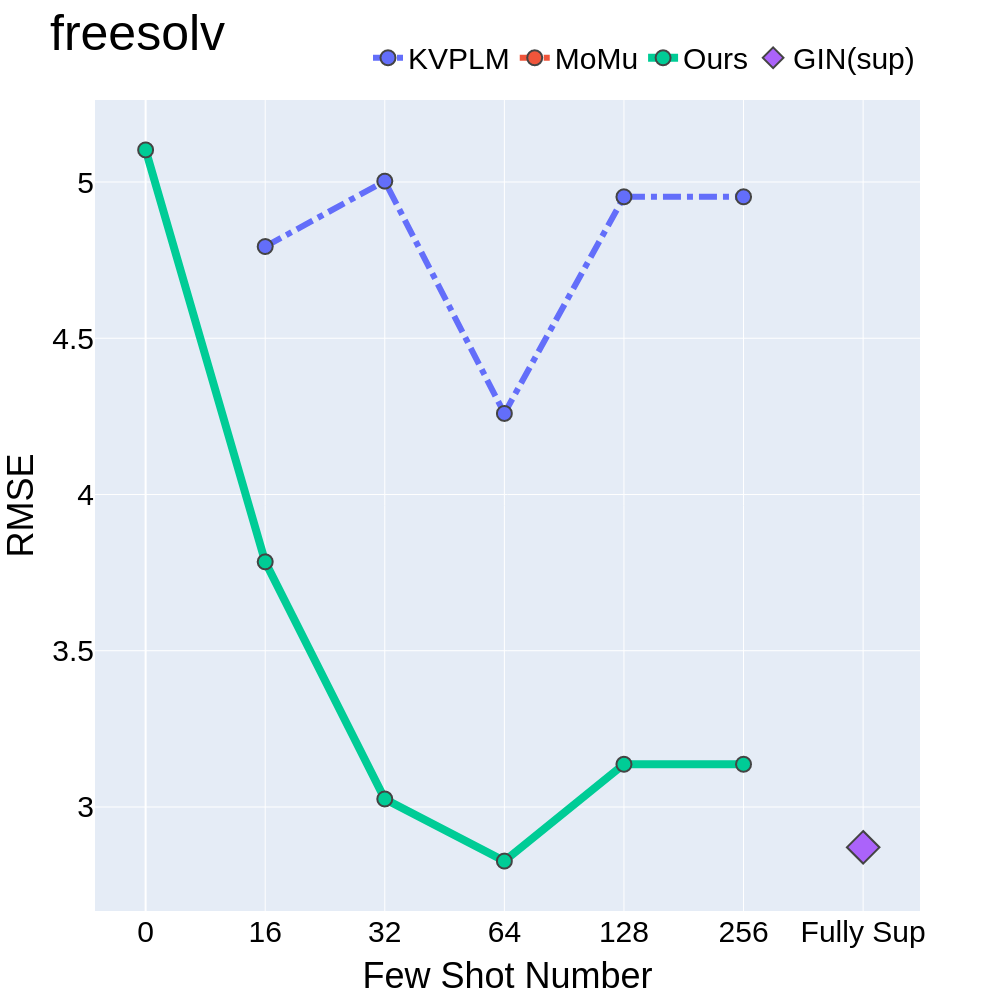}}

    \caption{Few-shot performance on each dataset}
    \label{appendix:fig:few shot each}
\end{figure*}

\subsection{Detailed Ablation Results of Pretraining}

The results of pretraining ablation for each dataset are presented in Table \ref{appendix:table:ablation pretraining detail1}, \ref{appendix:table:ablation pretraining detail2}, \ref{appendix:table:ablation pretraining detail3}, and \ref{appendix:table:ablation pretraining detail4}. The findings indicate that both bioactivity assay and physico-chemical properties offer significant benefits for all the downstream tasks, demonstrating positive transfer across different domains.

\begin{table}[htbp]
\centering
\caption{Pretraining ablation study on Bio-activity tasks }
\label{appendix:table:ablation pretraining detail1}
\resizebox{\linewidth}{!}{
\setlength{\tabcolsep}{7.0 mm}{
\begin{tabular}{lccccHHHHHHHHHH}
\hline
 & bace & hiv & muv & Average\_bio & tox21 & toxcast & Average\_tox & bbbp & cyp450 & Average\_pha & esol & freesolv & lipo & Average\_phy \\ \hline
bioactivity assay only & 0.6390 & 0.6772 & 0.6044 & 0.6402 & 0.5726 & 0.5625 & 0.5676 & 0.5313 & 0.6829 & 0.6071 & - & - & - \\
physico-chemical only & 0.4648 & 0.5461 & 0.4572 & 0.4894 & 0.4478 & 0.5017 & 0.4748 & 0.5932 & 0.4976 & 0.5454 & 1.1822 & 5.2935 & 1.3778 & 2.6178 \\
both & 0.6957 & 0.6624 & 0.6439 & 0.6673 & 0.6119 & 0.5904 & 0.6011 & 0.5939 & 0.7125 & 0.6532 & 1.1320 & 5.1027 & 1.3450 & 2.5266 \\ \hline
\end{tabular}
}
}
\end{table}

\begin{table}[htbp]
\centering
\caption{Pretraining ablation study on Toxicity tasks}
\label{appendix:table:ablation pretraining detail2}
\resizebox{\linewidth}{!}{
\setlength{\tabcolsep}{10.0 mm}{
\begin{tabular}{lHHHHcccHHHHHHH}
\hline
 & bace & hiv & muv & Average\_bio & tox21 & toxcast & Average\_tox & bbbp & cyp450 & Average\_pha & esol & freesolv & lipo & Average\_phy \\ \hline
bioactivity assay only & 0.6390 & 0.6772 & 0.6044 & 0.6402 & 0.5726 & 0.5625 & 0.5676 & 0.5313 & 0.6829 & 0.6071 & - & - & - \\
physico-chemical only & 0.4648 & 0.5461 & 0.4572 & 0.4894 & 0.4478 & 0.5017 & 0.4748 & 0.5932 & 0.4976 & 0.5454 & 1.1822 & 5.2935 & 1.3778 & 2.6178 \\
both & 0.6957 & 0.6624 & 0.6439 & 0.6673 & 0.6119 & 0.5904 & 0.6011 & 0.5939 & 0.7125 & 0.6532 & 1.1320 & 5.1027 & 1.3450 & 2.5266 \\ \hline
\end{tabular}
}
}
\end{table}

\begin{table}[htbp]
\centering
\caption{Pretraining ablation study on Pharmacokinetic tasks}
\label{appendix:table:ablation pretraining detail3}
\resizebox{\linewidth}{!}{
\setlength{\tabcolsep}{10.0 mm}{
\begin{tabular}{lHHHHHHHcccHHHH}
\hline
 & bace & hiv & muv & Average\_bio & tox21 & toxcast & Average\_tox & bbbp & cyp450 & Average\_pha & esol & freesolv & lipo & Average\_phy \\ \hline
bioactivity assay only & 0.6390 & 0.6772 & 0.6044 & 0.6402 & 0.5726 & 0.5625 & 0.5676 & 0.5313 & 0.6829 & 0.6071 & - & - & - \\
physico-chemical only & 0.4648 & 0.5461 & 0.4572 & 0.4894 & 0.4478 & 0.5017 & 0.4748 & 0.5932 & 0.4976 & 0.5454 & 1.1822 & 5.2935 & 1.3778 & 2.6178 \\
both & 0.6957 & 0.6624 & 0.6439 & 0.6673 & 0.6119 & 0.5904 & 0.6011 & 0.5939 & 0.7125 & 0.6532 & 1.1320 & 5.1027 & 1.3450 & 2.5266 \\ \hline
\end{tabular}
}
}
\end{table}

\begin{table}[htbp]
\centering
\caption{Pretraining ablation study on Physical-chemical tasks}
\label{appendix:table:ablation pretraining detail4}
\resizebox{\linewidth}{!}{
\setlength{\tabcolsep}{7.0 mm}{
\begin{tabular}{lHHHHHHHHHHcccc}
\hline
 & bace & hiv & muv & Average\_bio & tox21 & toxcast & Average\_tox & bbbp & cyp450 & Average\_pha & esol & freesolv & lipo & Average\_phy \\ \hline
bioactivity assay only & 0.6390 & 0.6772 & 0.6044 & 0.6402 & 0.5726 & 0.5625 & 0.5676 & 0.5313 & 0.6829 & 0.6071 & - & - & - \\
physico-chemical only & 0.4648 & 0.5461 & 0.4572 & 0.4894 & 0.4478 & 0.5017 & 0.4748 & 0.5932 & 0.4976 & 0.5454 & 1.1822 & 5.2935 & 1.3778 & 2.6178 \\
both & 0.6957 & 0.6624 & 0.6439 & 0.6673 & 0.6119 & 0.5904 & 0.6011 & 0.5939 & 0.7125 & 0.6532 & 1.1320 & 5.1027 & 1.3450 & 2.5266 \\ \hline
\end{tabular}
}
}
\end{table}


\subsection{Instruction Robustness}

 To test the robustness of \modelname, the Instructions are rephrased by GPT-3.5-turbo. There are four types of rephrasing, realized by the following prompts:

rewrite

\begin{lstlisting}[style=myStringStyle]
'Rephrase the text  of the following prompt: \n'
\end{lstlisting}

expand

\begin{lstlisting}[style=myStringStyle]
'Rephrase the text  of the following prompt longer: \n'
\end{lstlisting}

detail

\begin{lstlisting}[style=myStringStyle]
'Rephrase the text  of the following prompt by adding more explanation: \n'
\end{lstlisting}

short

\begin{lstlisting}[style=myStringStyle]
'Rephrase the text  of the following prompt shorter: \n'
\end{lstlisting}

Given a task instruction, we rephrase the instruction by the prompts above. Here is an example of four types of rephrased task instruction from Toxcast:

origin

\begin{lstlisting}[style=myStringStyle]
"CEETOX_H295R_ANDR, is one of 23 assay component(s) measured or calculated from the CEETOX_H295R assay. It is designed to make measurements of hormone induction, a form of inducible reporter, as detected with absorbance signals by HPLC-MS-MS technology.Data from the assay component CEETOX_H295R_ANDR was analyzed into 2 assay endpoints. This assay endpoint, CEETOX_H295R_ANDR_dn, was analyzed in the positive fitting direction relative to DMSO as the negative control and baseline of activity. Using a type of inducible reporter, loss-of-signal activity using HPLC-MS-MS was used to understand synthesis of Androstenedione in H295R cell line at 48hr of chemical exposure. To generalize the intended target to other relatable targets, this assay endpoint is annotated to the steroid hormone intended target family, where the subfamily is androgens. Is this molecule effective to this assay?"
\end{lstlisting}

rewrite

\begin{lstlisting}[style=myStringStyle]
"The CEETOX_H295R_ANDR is a component of the CEETOX_H295R assay, which measures hormone induction through absorbance signals detected by HPLC-MS-MS technology. The data from CEETOX_H295R_ANDR has been analyzed into two endpoints, with CEETOX_H295R_ANDR_dn being analyzed in relation to DMSO as the negative control and baseline of activity. HPLC-MS-MS was used to detect loss-of-signal activity and understand the synthesis of Androstenedione in H295R cell line after 48 hours of chemical exposure. This assay endpoint is related to the steroid hormone intended target family, specifically the subfamily of androgens, and can be generalized to other similar targets.Can this assay be effectively performed using this molecule?"
\end{lstlisting}

expand

\begin{lstlisting}[style=myStringStyle]
"The CEETOX_H295R_ANDR assay component is just one of the 23 assay components that are measured or calculated from the CEETOX_H295R assay. The assay is specifically designed to measure hormone induction, which is a form of inducible reporter, and is detected using absorbance signals by means of HPLC-MS-MS technology. The data obtained from the CEETOX_H295R_ANDR assay component was analyzed into two assay endpoints. The CEETOX_H295R_ANDR_dn assay endpoint was analyzed in the positive fitting direction in relation to DMSO as the negative control and activity baseline. To understand the synthesis of Androstenedione in the H295R cell line after 48 hours of chemical exposure, loss-of-signal activity was used with HPLC-MS-MS technology. This endpoint is annotated to the steroid hormone intended target family to help other related targets, where the subfamily is androgens. Can it be determined if this particular molecule exhibits desirable efficacy to be utilized in this particular assay?"
\end{lstlisting}

detail

\begin{lstlisting}[style=myStringStyle]
"The CEETOX_H295R_ANDR is an assay component that is one of the 23 components that are measured or calculated from the CEETOX_H295R assay. It is intended to measure hormone induction, which is a form of inducible reporter, and the measurement is done with the help of absorbance signals using HPLC-MS-MS technology. The data obtained from the measurement of assay component CEETOX_H295R_ANDR is analyzed into two assay endpoints. One of these endpoints, CEETOX_H295R_ANDR_dn, is analyzed in the positive fitting direction, relative to DMSO, which is used as the negative control and baseline for activity. The HPLC-MS-MS technology is used to detect the loss-of-signal activity, which helps in understanding the synthesis of Androstenedione in H295R cell line after 48 hours of chemical exposure. To make the intended target more comprehensive and relatable to other targets, the assay endpoint is annotated to the steroid hormone intended target family, where the subfamily is androgens. Can this molecule be used for this assay?"

\end{lstlisting}

short

\begin{lstlisting}[style=myStringStyle]
"CEETOX_H295R_ANDR is one of 23 components in the CEETOX_H295R assay, measuring hormone induction detected with absorbance signals by HPLC-MS-MS. It's analyzed into 2 endpoints, with CEETOX_H295R_ANDR_dn being the positive fitting direction relative to the negative control. It analyzes the loss-of-signal activity to understand Androstenedione synthesis in H295R cell line after 48hr chemical exposure. It's annotated as a steroid hormone intended target in androgens sub-family. Is molecule suitable for assay?"
\end{lstlisting}

\subsection{Instruction Ablation}

To ablate the explanation-based instruction, we remove the explanation and only keep the assay name. The ablated instruction for the instruction above is:

\begin{lstlisting}[style=myStringStyle]
"The assay name is CEETOX_H295R_ANDR. Is this molecule effective to this assay?"
\end{lstlisting}

\subsection{Attention Visualization}

We present visualizations of the attention of text tokens to molecule graphs, demonstrating how our unified transformer incorporates molecule information using various instructions. We randomly sample molecules and attention heads for visualization. To emphasize high-level features, we focus on visualizing the attention patterns of the last layer. The redder means the larger attention value.

For BACE instruction, we visualize the attention of several keywords marked in red to molecules:

"\textcolor{red}{BACE1} is an \textcolor{red}{aspartic-acid} protease important in the pathogenesis of \textcolor{red}{Alzheimer}'s disease, and in the formation of \textcolor{red}{myelin} sheaths. BACE1 is a member of family of \textcolor{red}{aspartic} proteases. Same as other aspartic proteases, BACE1 is a \textcolor{red}{bilobal} enzyme, each lobe contributing a \textcolor{red}{catalytic} Asp residue, with an extended active site \textcolor{red}{cleft} localized between the two lobes of the molecule. The assay tests whether the molecule can bind to the BACE1 protein. Is this molecule \textcolor{red}{effective} to the assay?"

For BBBP instruction:

'In general, molecules that passively diffuse across the \textcolor{red}{brain blood barrier} have the \textcolor{red}{molecular weight} less than \textcolor{red}{500}, with a \textcolor{red}{LogP} of \textcolor{red}{2-4}, and no more than \textcolor{red}{five} \textcolor{red}{hydrogen bond donors} or \textcolor{red}{acceptors}. Does the molecule adhere to the \textcolor{red}{three rules} or not?'

\begin{figure*}[htbp]
    
    \centering
    \subfloat[BACE1]{\includegraphics[width=0.33\linewidth]{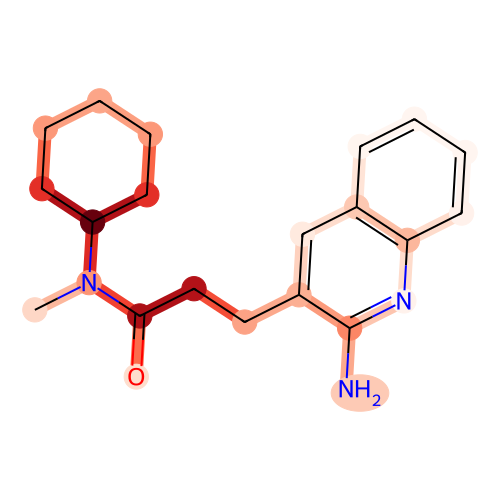}}
    \subfloat[aspartic-acid]{\includegraphics[width=0.33\linewidth]{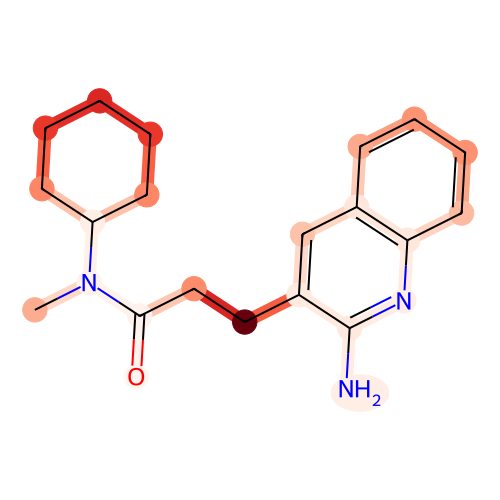}}
    \subfloat[Alzheimer]{\includegraphics[width=0.33\linewidth]{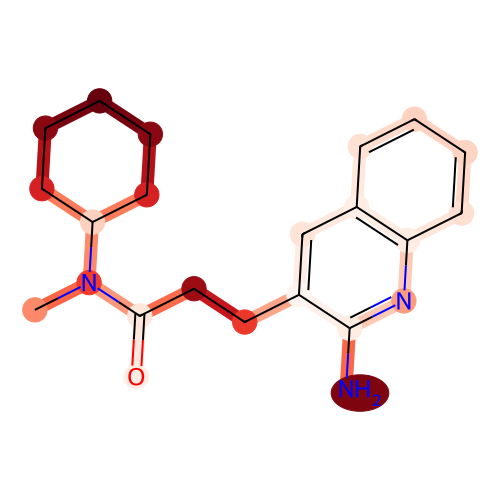}}\\
    \subfloat[myelin]{\includegraphics[width=0.33\linewidth]{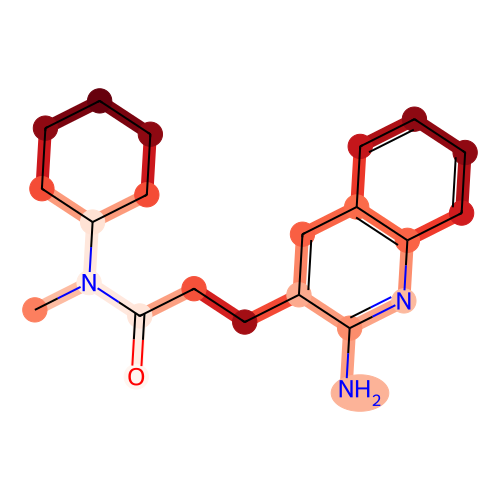}}
    \subfloat[aspartic]{\includegraphics[width=0.33\linewidth]{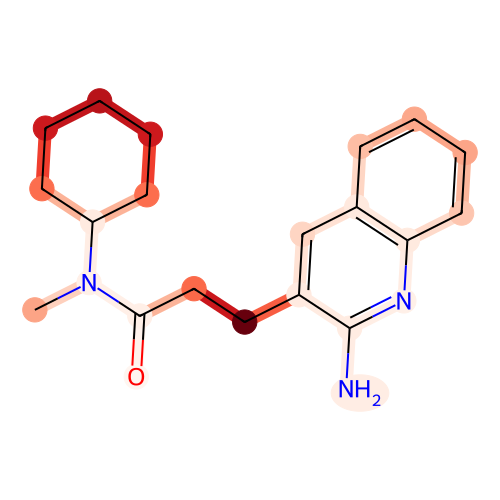}}
    \subfloat[bilobal]{\includegraphics[width=0.33\linewidth]{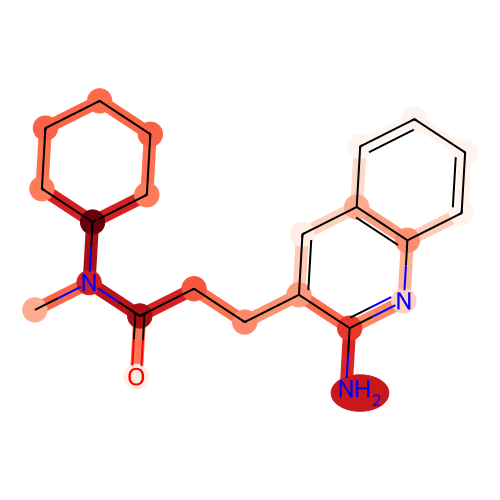}}\\
    \subfloat[catalytic]{\includegraphics[width=0.33\linewidth]{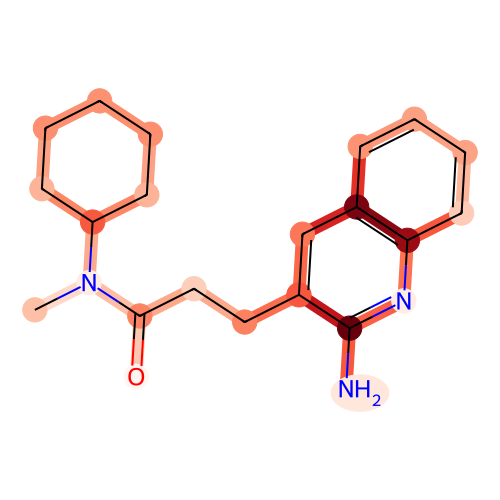}}
    \subfloat[cleft]{\includegraphics[width=0.33\linewidth]{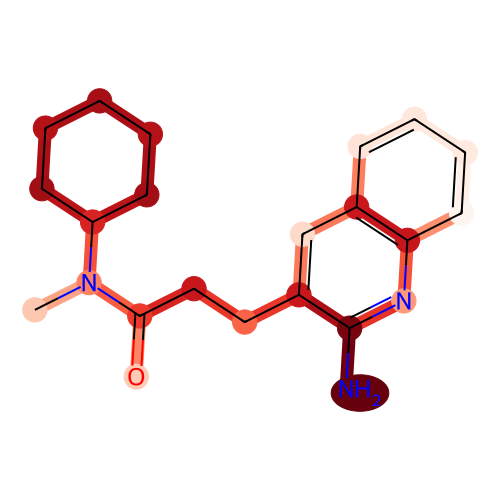}}
    \subfloat[effective]{\includegraphics[width=0.33\linewidth]{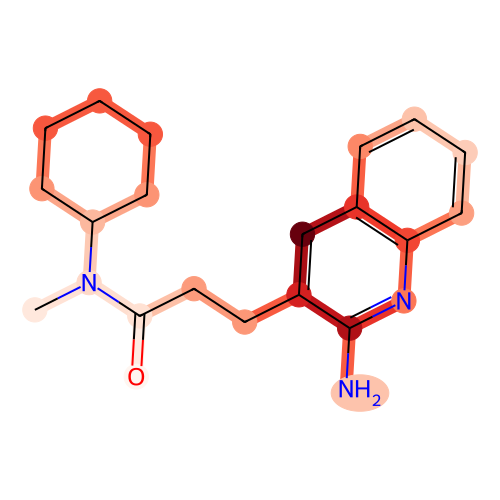}}\\

    \caption{Visualization of attention for BACE on molecule \\
    O=C(N(C)C1CCCCC1)CCc1cc2c(nc1N)cccc2}
    \label{appendix:fig:visualization}
\end{figure*}

\begin{figure*}[htbp]
    
    \centering
    \subfloat[BACE1]{\includegraphics[width=0.33\linewidth]{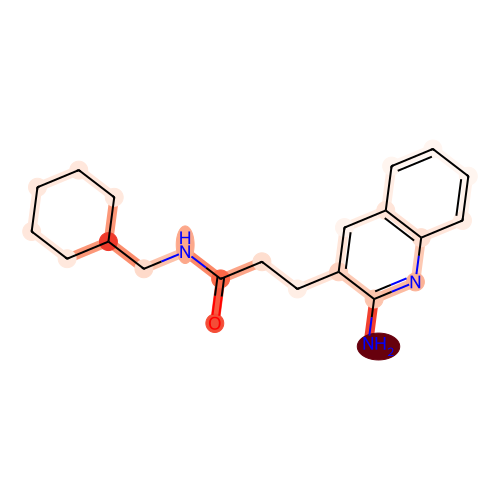}}
    \subfloat[aspartic-acid]{\includegraphics[width=0.33\linewidth]{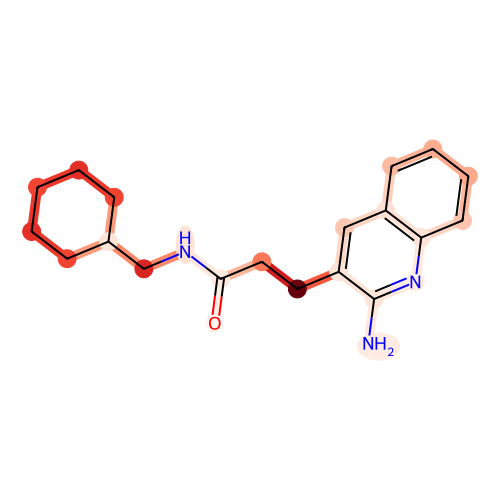}}
    \subfloat[Alzheimer]{\includegraphics[width=0.33\linewidth]{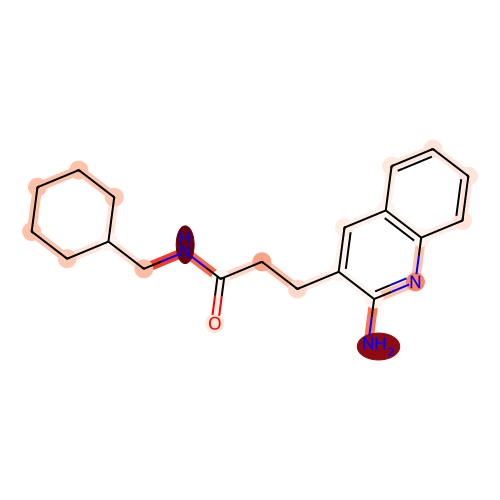}}\\
    \subfloat[myelin]{\includegraphics[width=0.33\linewidth]{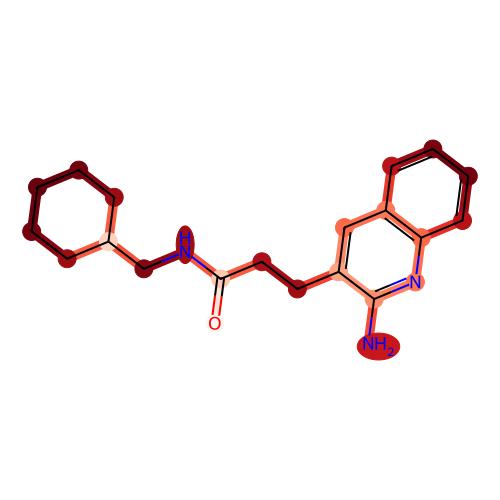}}
    \subfloat[aspartic]{\includegraphics[width=0.33\linewidth]{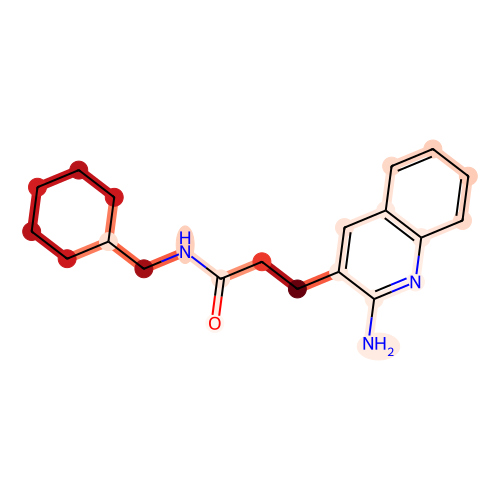}}
    \subfloat[bilobal]{\includegraphics[width=0.33\linewidth]{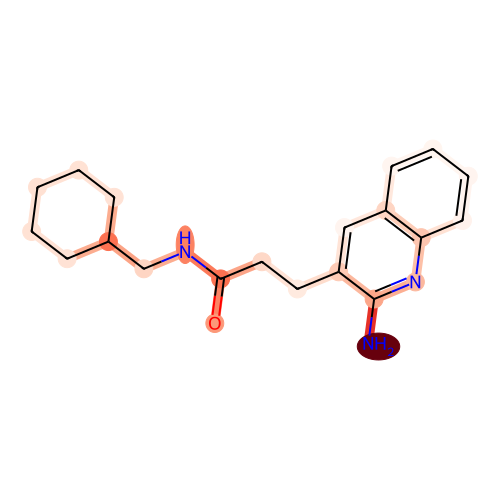}}\\
    \subfloat[catalytic]{\includegraphics[width=0.33\linewidth]{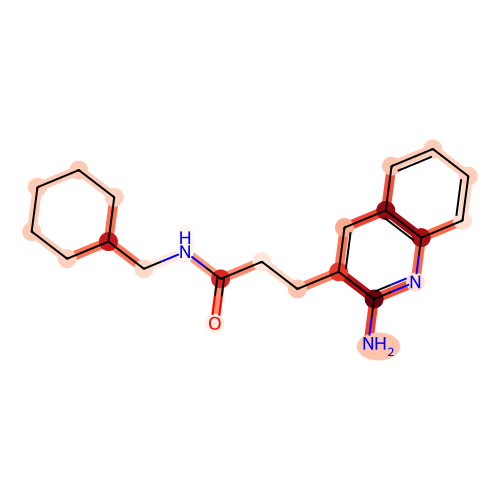}}
    \subfloat[cleft]{\includegraphics[width=0.33\linewidth]{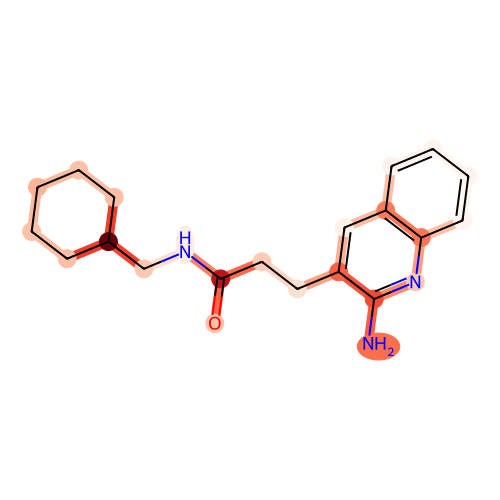}}
    \subfloat[effective]{\includegraphics[width=0.33\linewidth]{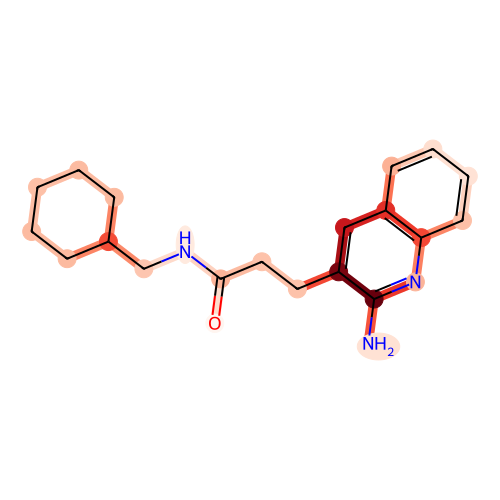}}\\

    \caption{Visualization of attention for BACE on molecule \\
    O=C(NCC1CCCCC1)CCc1cc2c(nc1N)cccc2}
    \label{appendix:fig:visualization}
\end{figure*}

\begin{figure*}[htbp]
    
    \centering
    \subfloat[BACE1]{\includegraphics[width=0.33\linewidth]{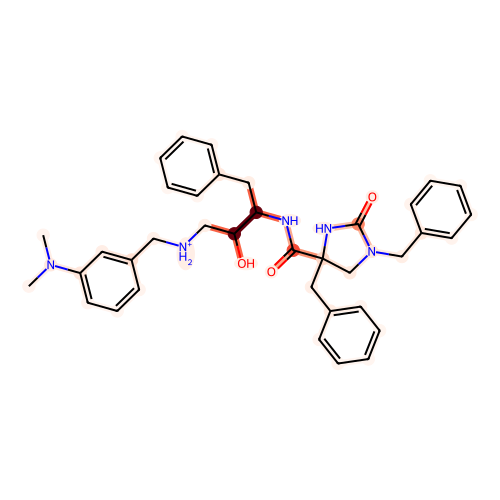}}
    \subfloat[aspartic-acid]{\includegraphics[width=0.33\linewidth]{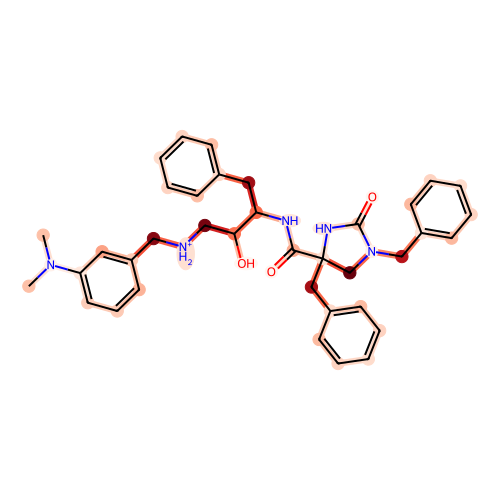}}
    \subfloat[Alzheimer]{\includegraphics[width=0.33\linewidth]{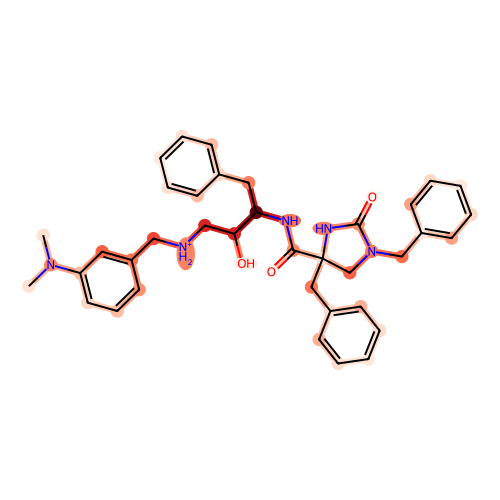}}\\
    \subfloat[myelin]{\includegraphics[width=0.33\linewidth]{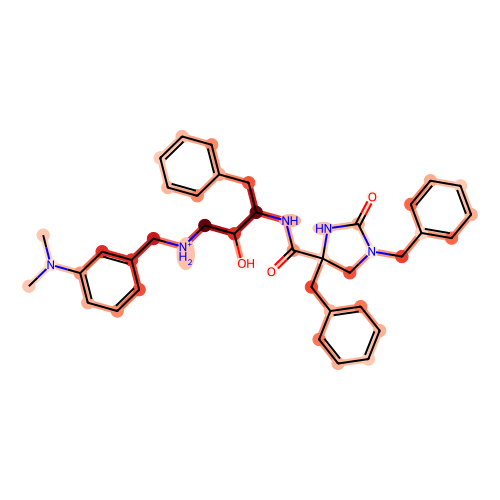}}
    \subfloat[aspartic]{\includegraphics[width=0.33\linewidth]{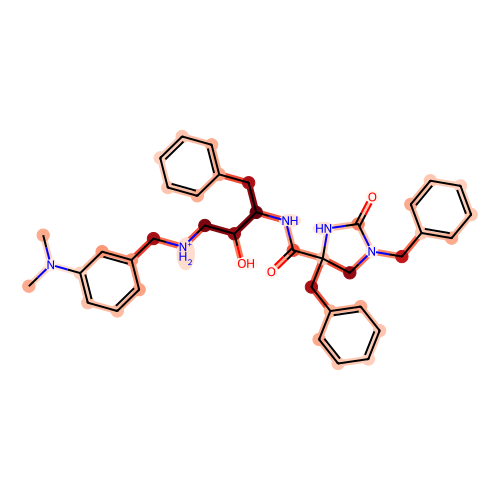}}
    \subfloat[bilobal]{\includegraphics[width=0.33\linewidth]{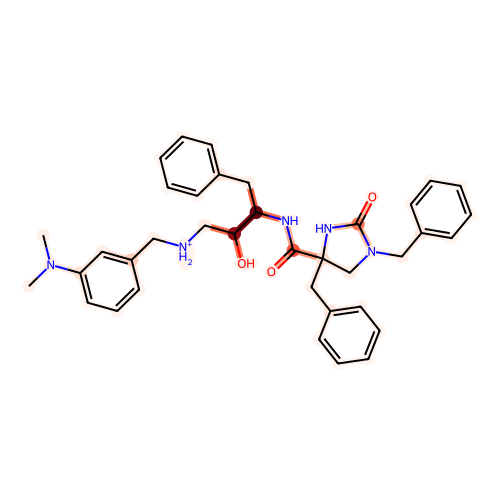}}\\
    \subfloat[catalytic]{\includegraphics[width=0.33\linewidth]{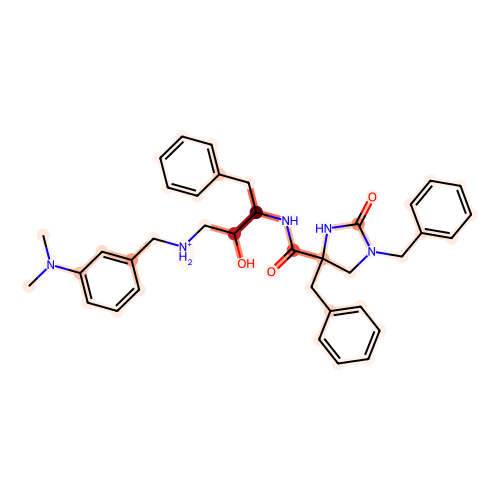}}
    \subfloat[cleft]{\includegraphics[width=0.33\linewidth]{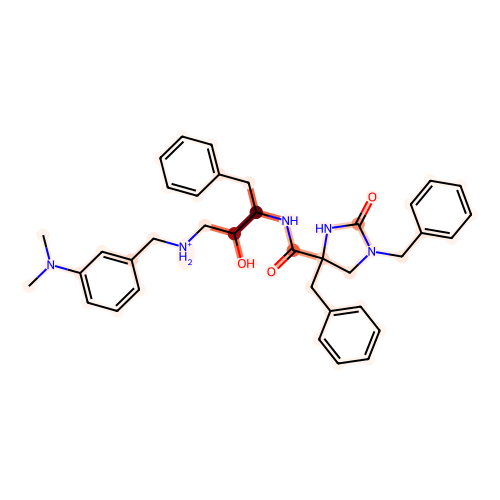}}
    \subfloat[effective]{\includegraphics[width=0.33\linewidth]{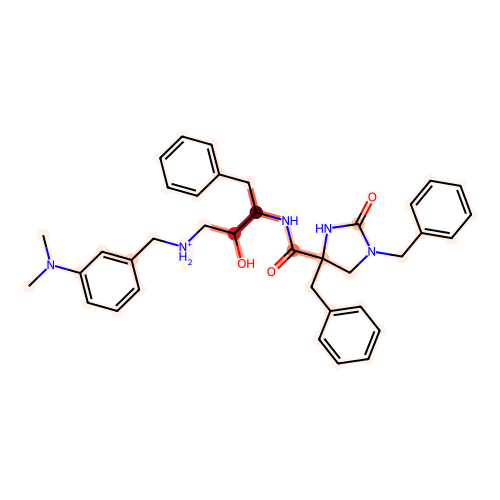}}\\

    \caption{Visualization of attention for BACE on molecule \\ O=C1NC(CN1Cc1ccccc1)(Cc1ccccc1)C(=O)NC(Cc1ccccc1)C(O)C[NH2+]Cc1cc(N(C)C)ccc1}
    \label{appendix:fig:visualization}
\end{figure*}

\begin{figure*}[htbp]
    
    \centering
    \subfloat[BACE1]{\includegraphics[width=0.33\linewidth]{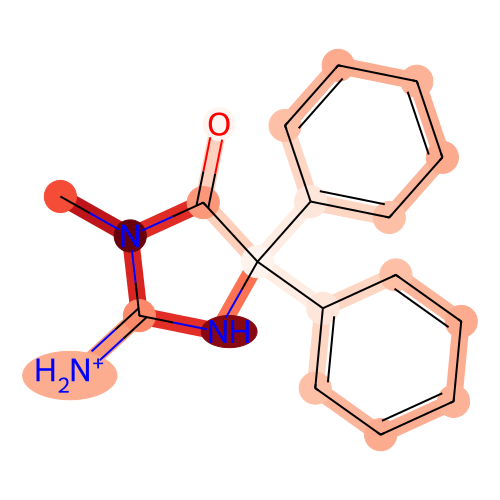}}
    \subfloat[aspartic-acid]{\includegraphics[width=0.33\linewidth]{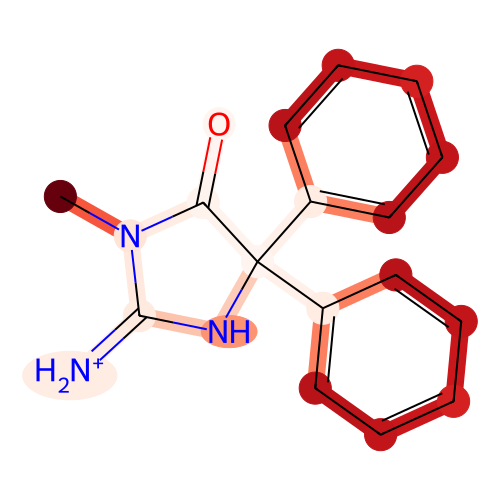}}
    \subfloat[Alzheimer]{\includegraphics[width=0.33\linewidth]{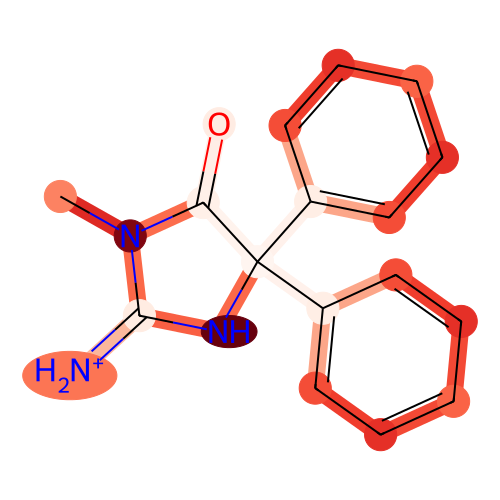}}\\
    \subfloat[myelin]{\includegraphics[width=0.33\linewidth]{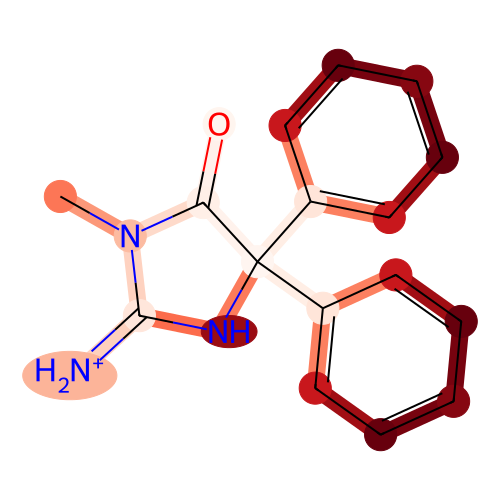}}
    \subfloat[aspartic]{\includegraphics[width=0.33\linewidth]{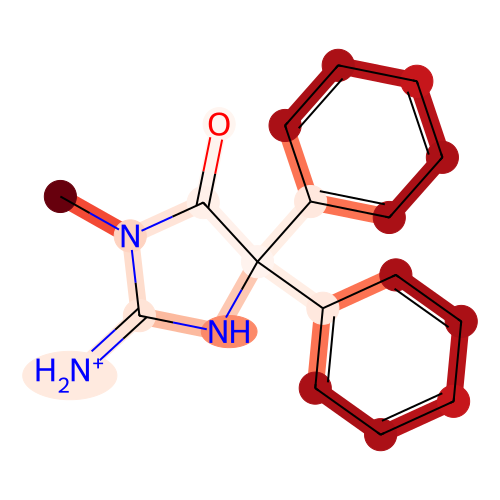}}
    \subfloat[bilobal]{\includegraphics[width=0.33\linewidth]{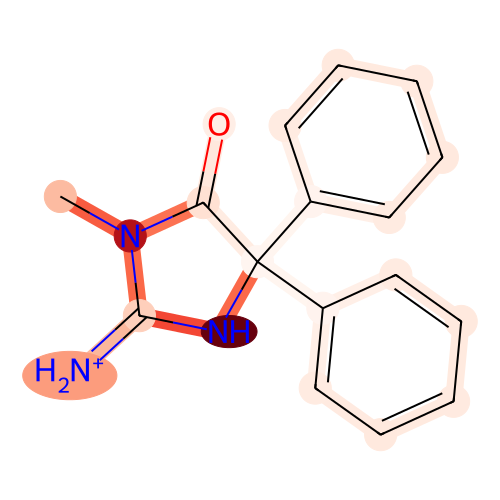}}\\
    \subfloat[catalytic]{\includegraphics[width=0.33\linewidth]{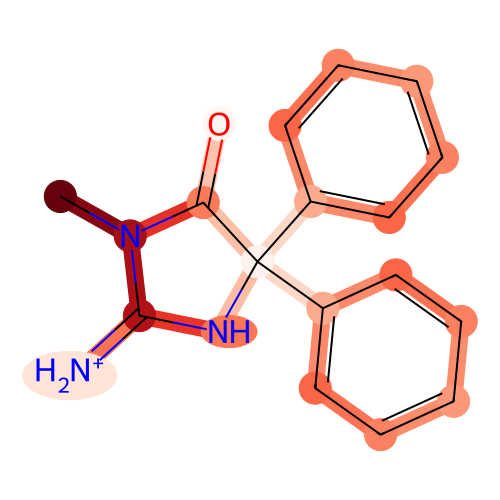}}
    \subfloat[cleft]{\includegraphics[width=0.33\linewidth]{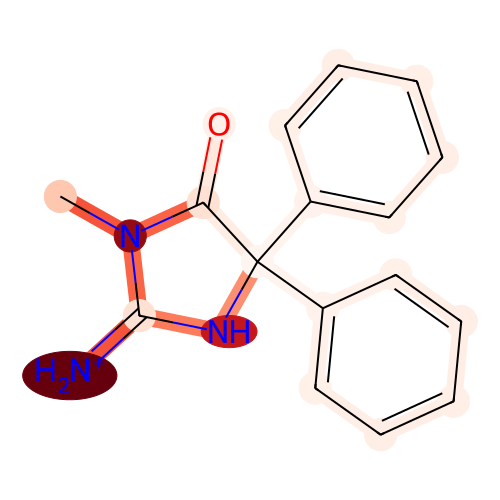}}
    \subfloat[effective]{\includegraphics[width=0.33\linewidth]{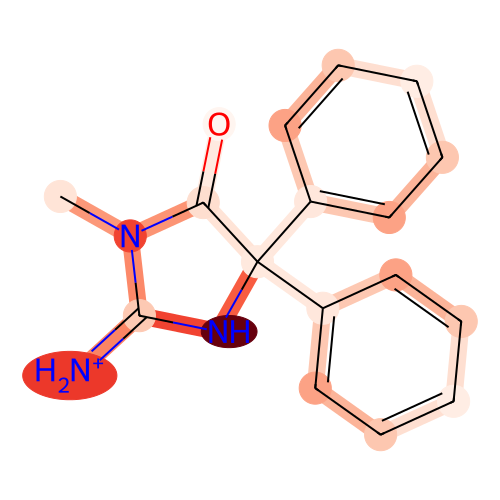}}\\

    \caption{Visualization of attention for BACE on molecule \\ O=C1N(C)C(=[NH2+])NC1(c1ccccc1)c1ccccc1}
    \label{appendix:fig:visualization}
\end{figure*}

\begin{figure*}[htbp]
    
    \centering
    \subfloat[BACE1]{\includegraphics[width=0.33\linewidth]{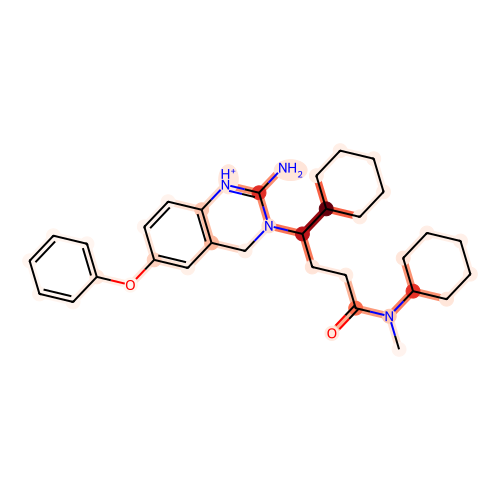}}
    \subfloat[aspartic-acid]{\includegraphics[width=0.33\linewidth]{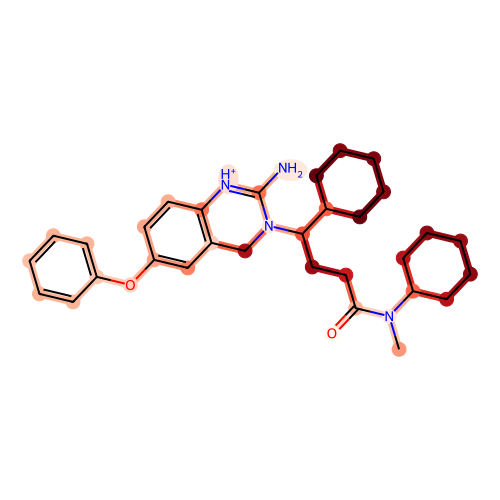}}
    \subfloat[Alzheimer]{\includegraphics[width=0.33\linewidth]{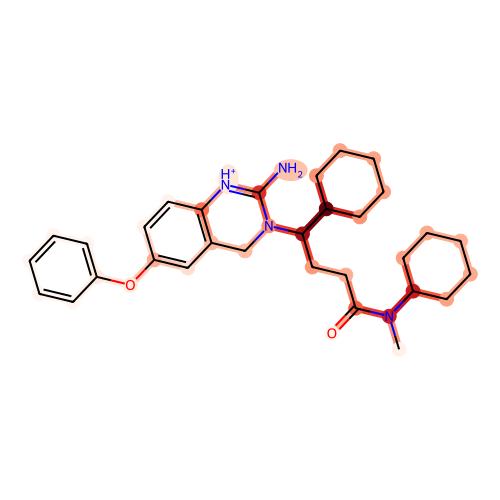}}\\
    \subfloat[myelin]{\includegraphics[width=0.33\linewidth]{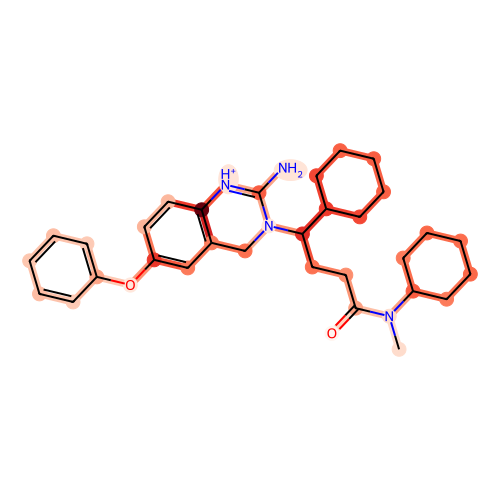}}
    \subfloat[aspartic]{\includegraphics[width=0.33\linewidth]{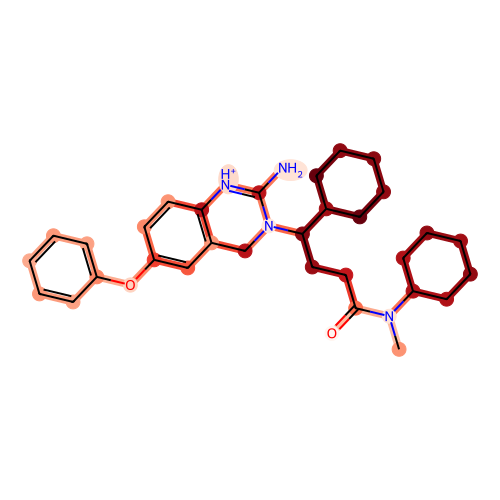}}
    \subfloat[bilobal]{\includegraphics[width=0.33\linewidth]{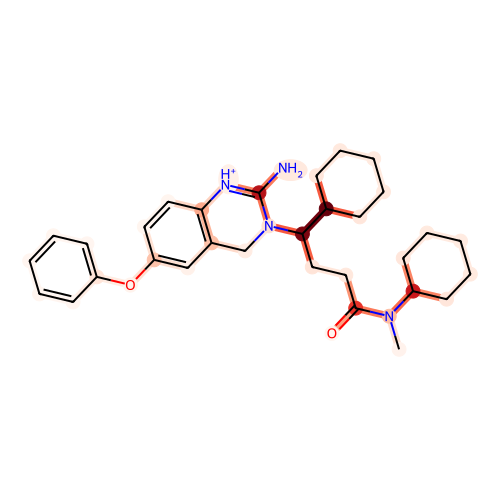}}\\
    \subfloat[catalytic]{\includegraphics[width=0.33\linewidth]{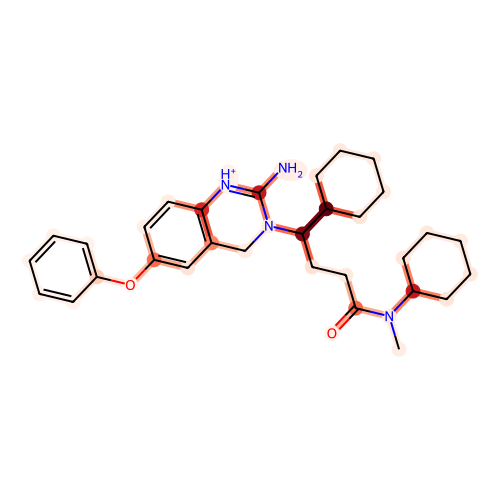}}
    \subfloat[cleft]{\includegraphics[width=0.33\linewidth]{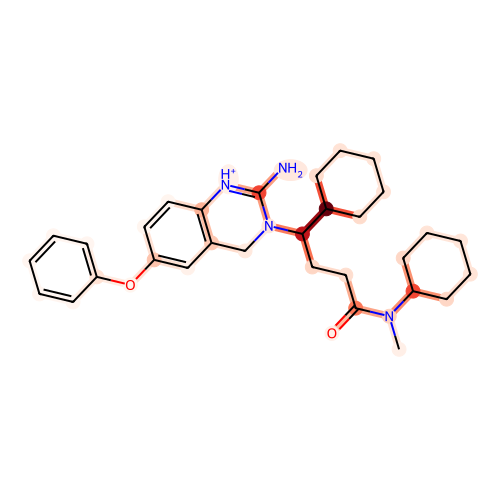}}
    \subfloat[effective]{\includegraphics[width=0.33\linewidth]{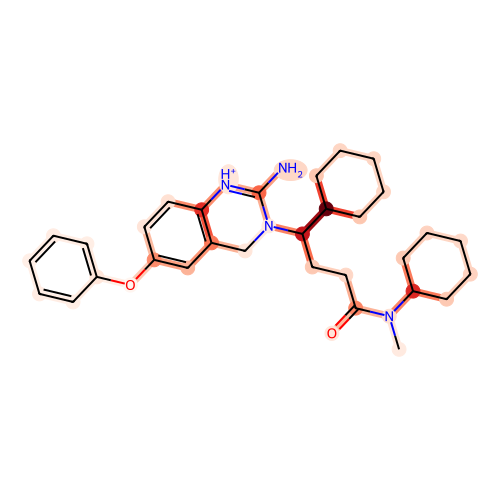}}\\

    \caption{Visualization of attention for BACE on molecule \\ O(c1cc2CN(C(CCC(=O)N(C)C3CCCCC3)C3CCCCC3)C(=[NH+]c2cc1)N)c1ccccc1}
    \label{appendix:fig:visualization}
\end{figure*}

\begin{figure*}[htbp]
    
    \centering
    \subfloat[brain blood barrier]{\includegraphics[width=0.33\linewidth]{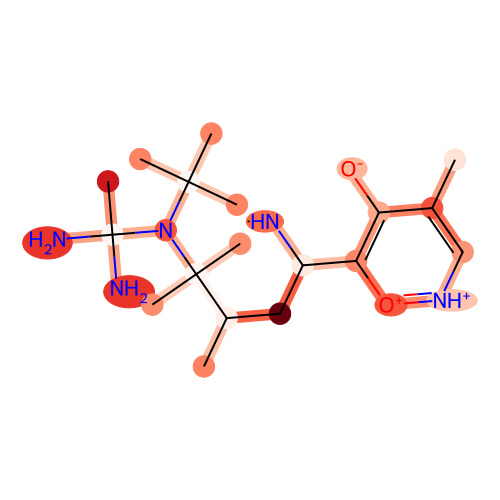}}
    \subfloat[molecular weight]{\includegraphics[width=0.33\linewidth]{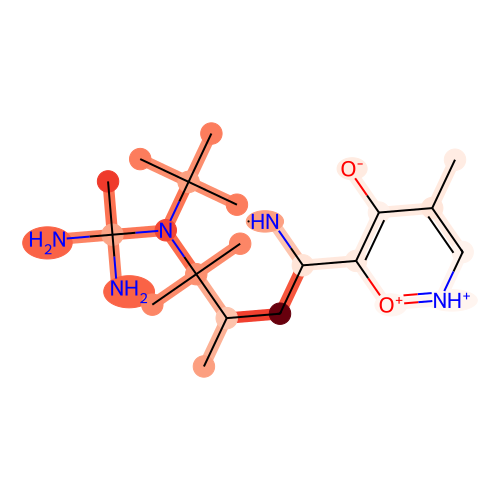}}
    \subfloat[500]{\includegraphics[width=0.33\linewidth]{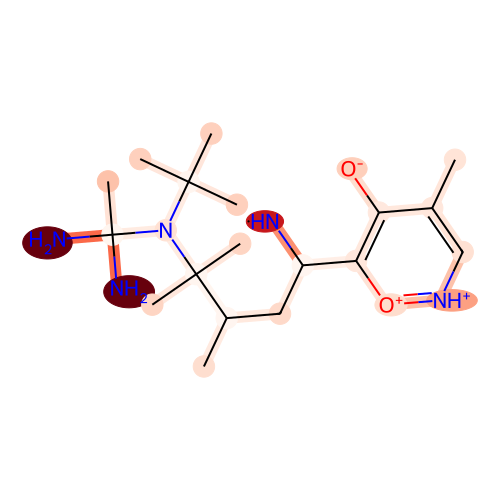}}\\
    \subfloat[LogP]{\includegraphics[width=0.33\linewidth]{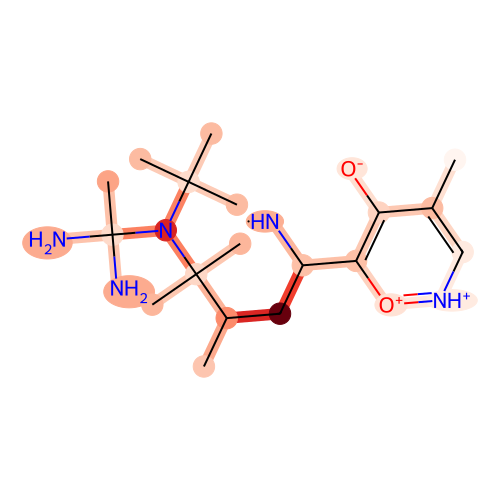}}
    \subfloat[2-4]{\includegraphics[width=0.33\linewidth]{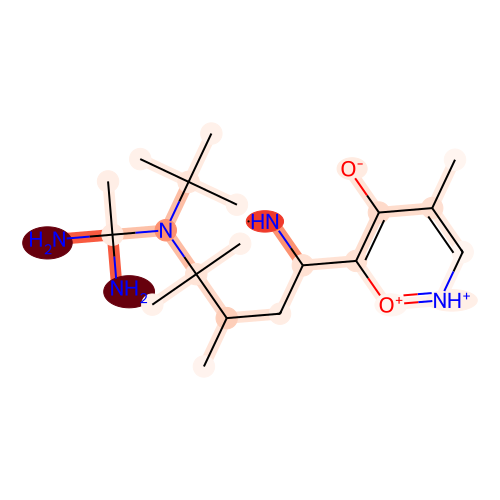}}
    \subfloat[five]{\includegraphics[width=0.33\linewidth]{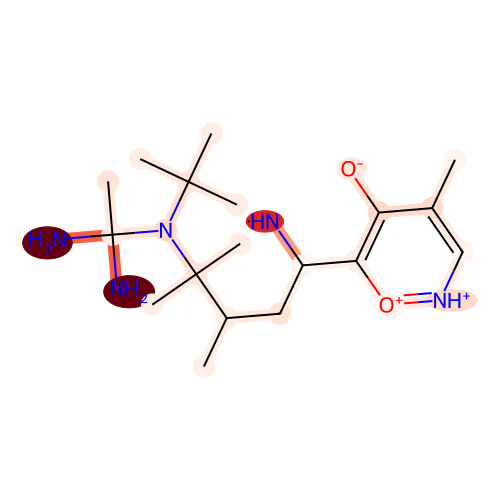}}\\
    \subfloat[hydrogen bond donors]{\includegraphics[width=0.33\linewidth]{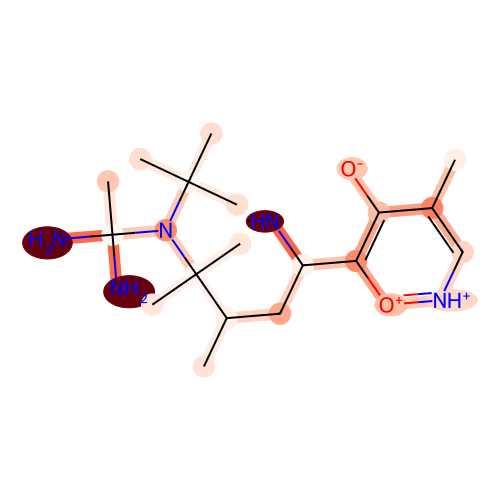}}
    \subfloat[acceptors]{\includegraphics[width=0.33\linewidth]{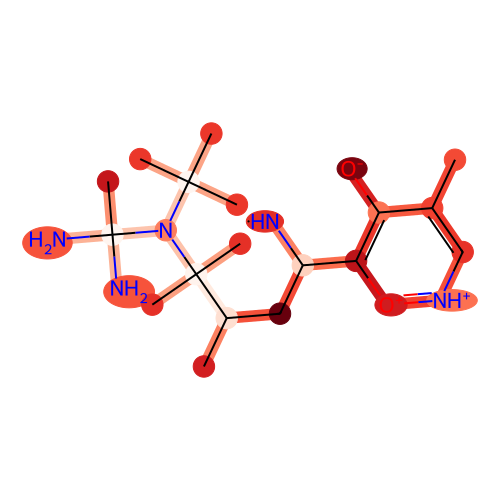}}
    \subfloat[three rules]{\includegraphics[width=0.33\linewidth]{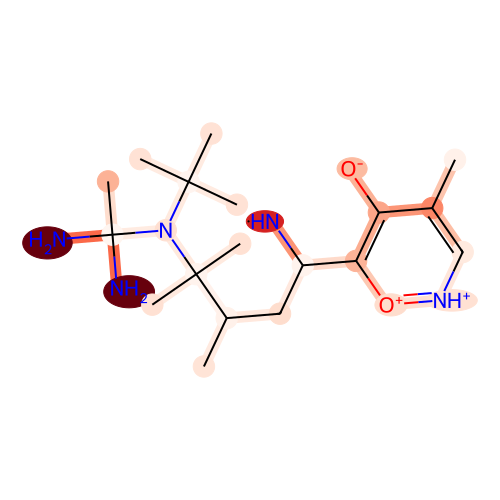}}\\

    \caption{Visualization of attention for BBBP on molecule \\ $[\text{NH}]$C(CC(C)C([N\symbol{64}\symbol{64}](C(C)(C)C)C(N)(C)N)(C)C)c1c(c(c[nH+][o+]1)C)[O-]}
    \label{appendix:fig:visualization}
\end{figure*}

\begin{figure*}[htbp]
    
    \centering
    \subfloat[brain blood barrier]{\includegraphics[width=0.33\linewidth]{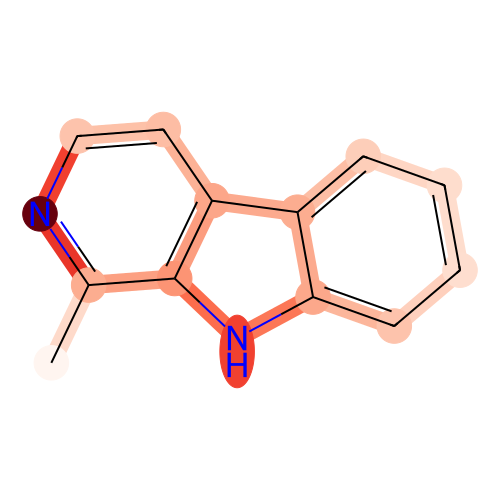}}
    \subfloat[molecular weight]{\includegraphics[width=0.33\linewidth]{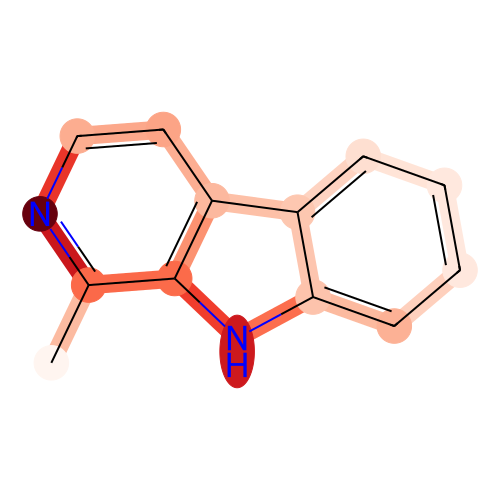}}
    \subfloat[500]{\includegraphics[width=0.33\linewidth]{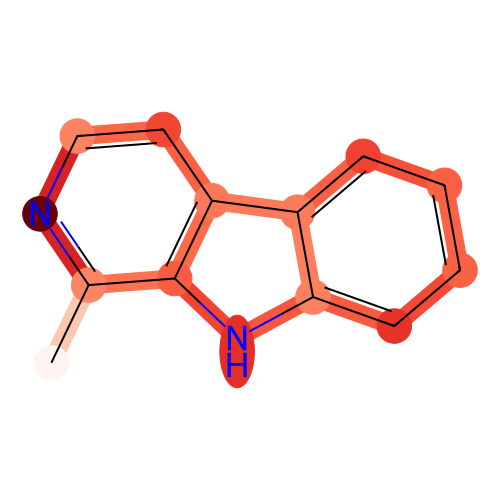}}\\
    \subfloat[LogP]{\includegraphics[width=0.33\linewidth]{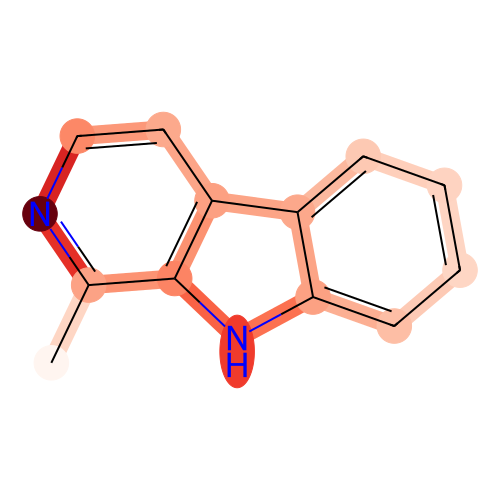}}
    \subfloat[2-4]{\includegraphics[width=0.33\linewidth]{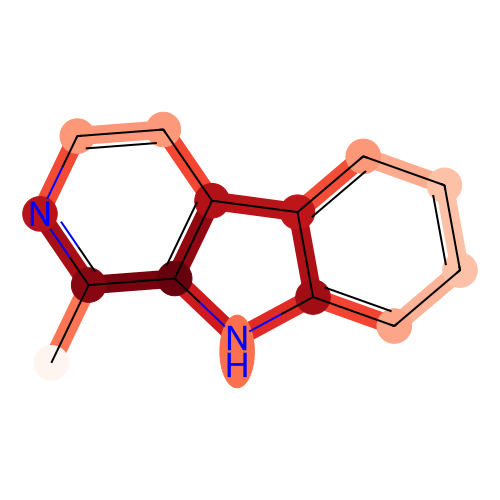}}
    \subfloat[five]{\includegraphics[width=0.33\linewidth]{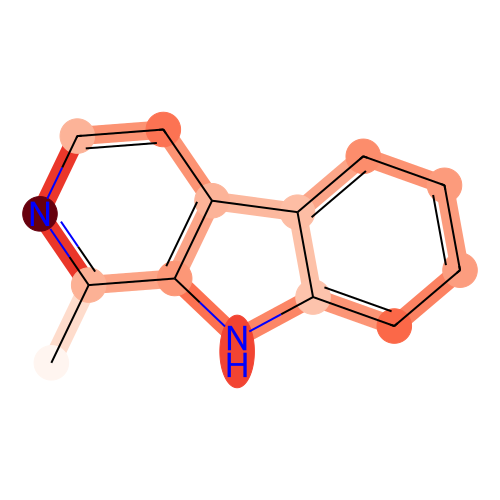}}\\
    \subfloat[hydrogen bond donors]{\includegraphics[width=0.33\linewidth]{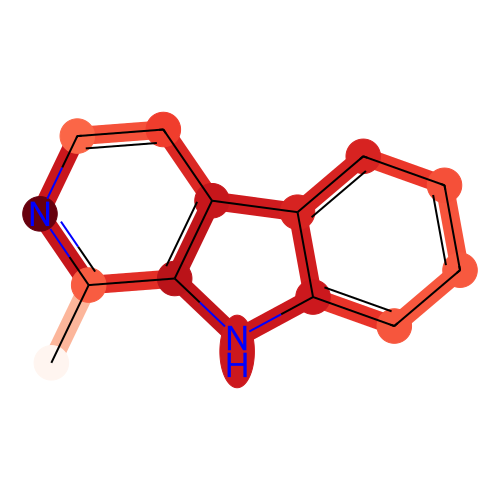}}
    \subfloat[acceptors]{\includegraphics[width=0.33\linewidth]{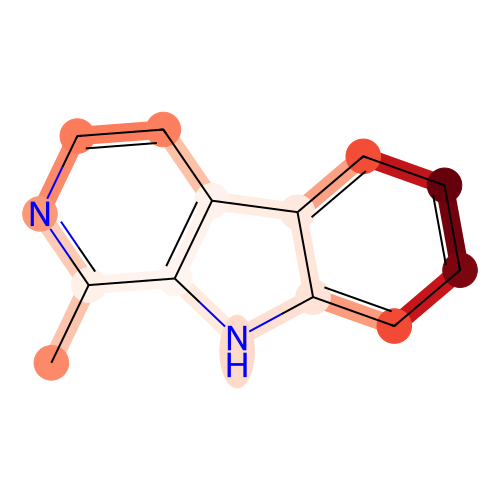}}
    \subfloat[three rules]{\includegraphics[width=0.33\linewidth]{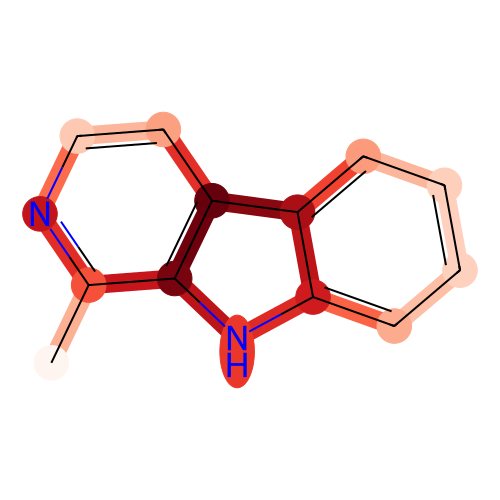}}\\

    \caption{Visualization of attention for BBBP on molecule \\ Cc1nccc2c1[nH]c3ccccc23}
    \label{appendix:fig:visualization}
\end{figure*}

\begin{figure*}[htbp]
    
    \centering
    \subfloat[brain blood barrier]{\includegraphics[width=0.33\linewidth]{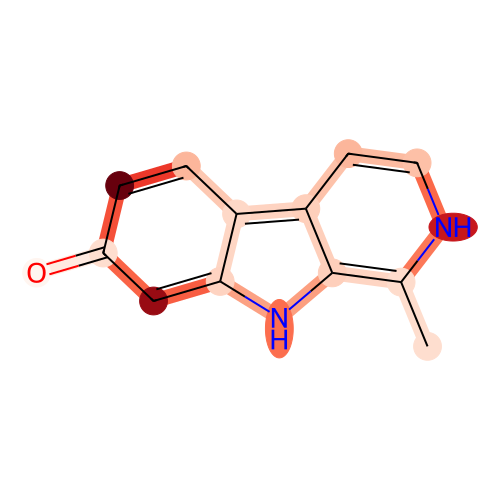}}
    \subfloat[molecular weight]{\includegraphics[width=0.33\linewidth]{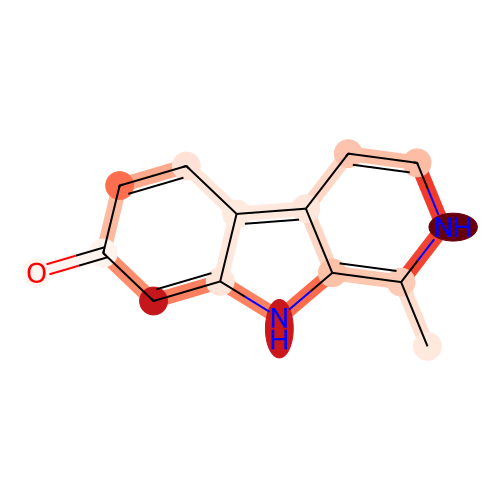}}
    \subfloat[500]{\includegraphics[width=0.33\linewidth]{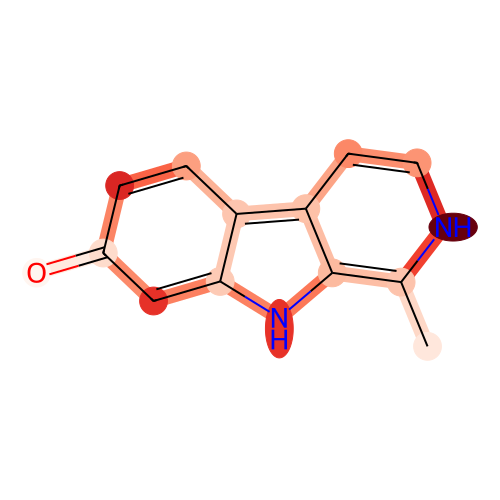}}\\
    \subfloat[LogP]{\includegraphics[width=0.33\linewidth]{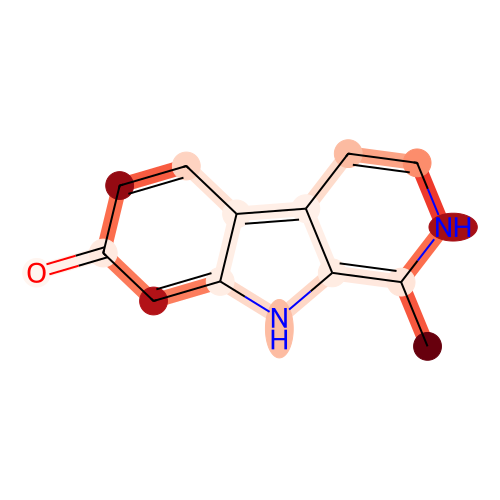}}
    \subfloat[2-4]{\includegraphics[width=0.33\linewidth]{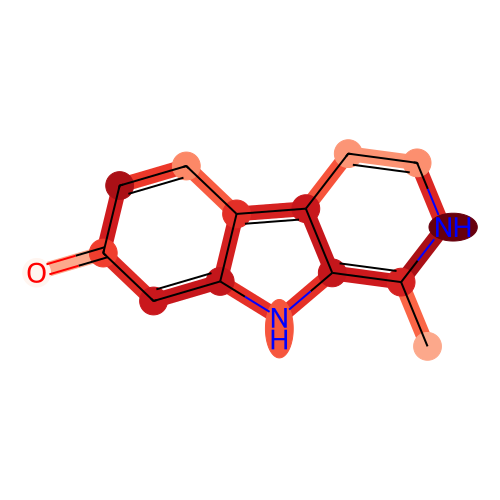}}
    \subfloat[five]{\includegraphics[width=0.33\linewidth]{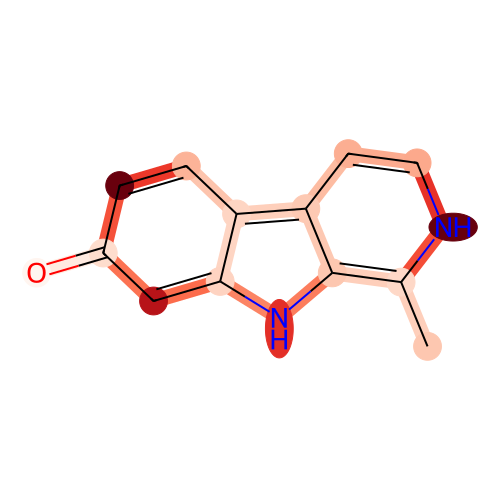}}\\
    \subfloat[hydrogen bond donors]{\includegraphics[width=0.33\linewidth]{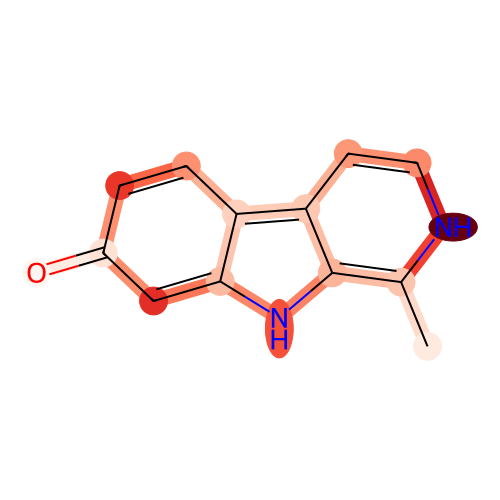}}
    \subfloat[acceptors]{\includegraphics[width=0.33\linewidth]{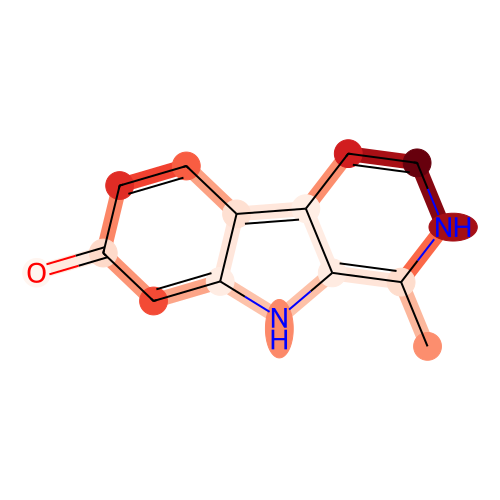}}
    \subfloat[three rules]{\includegraphics[width=0.33\linewidth]{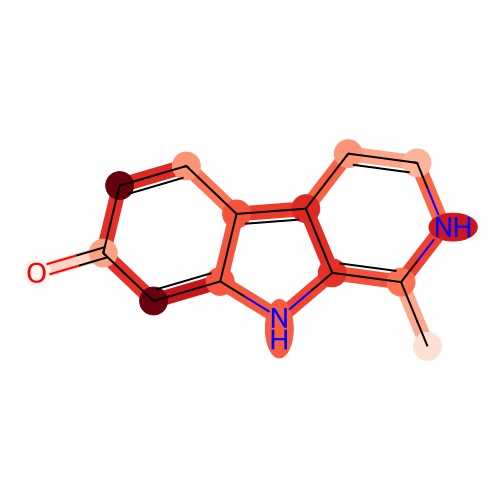}}\\

    \caption{Visualization of attention for BBBP on molecule \\ CC1=C2NC3=CC(=O)C=CC3=C2C=CN1}
    \label{appendix:fig:visualization}
\end{figure*}

\begin{figure*}[htbp]
    
    \centering
    \subfloat[brain blood barrier]{\includegraphics[width=0.33\linewidth]{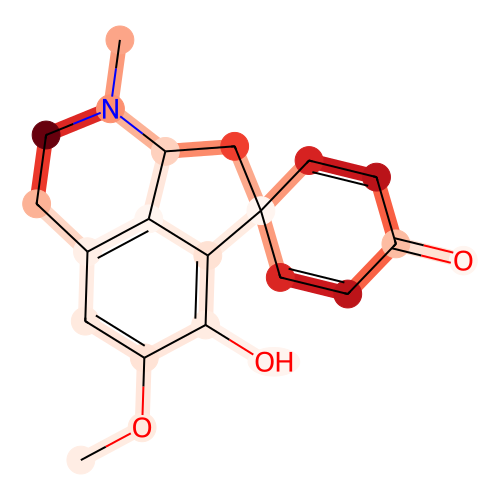}}
    \subfloat[molecular weight]{\includegraphics[width=0.33\linewidth]{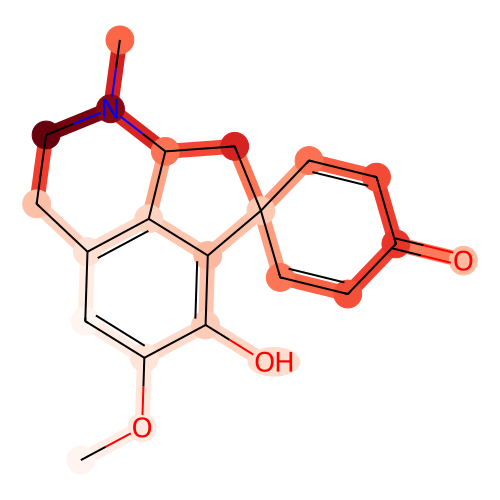}}
    \subfloat[500]{\includegraphics[width=0.33\linewidth]{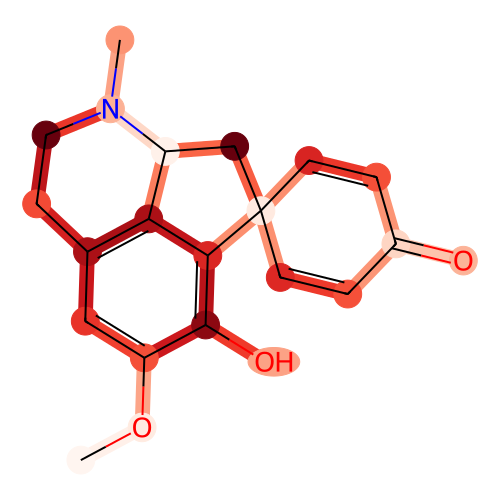}}\\
    \subfloat[LogP]{\includegraphics[width=0.33\linewidth]{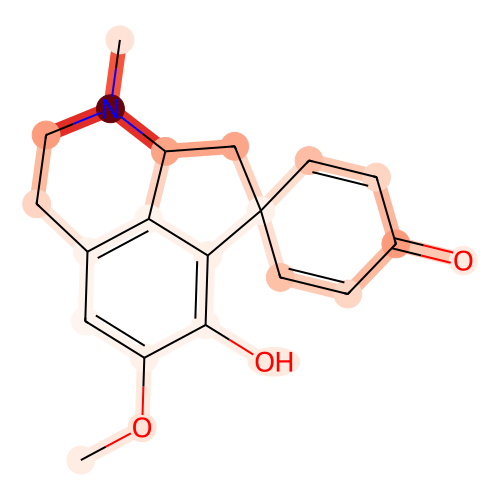}}
    \subfloat[2-4]{\includegraphics[width=0.33\linewidth]{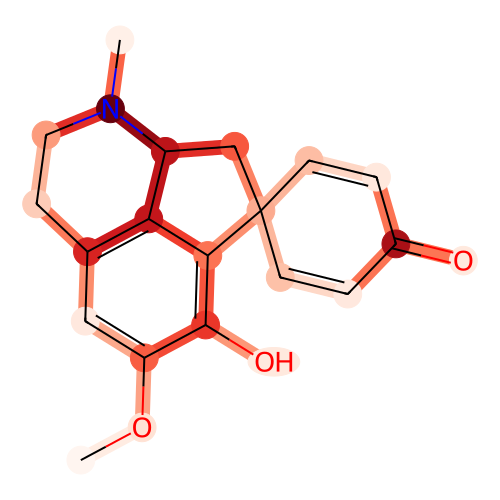}}
    \subfloat[five]{\includegraphics[width=0.33\linewidth]{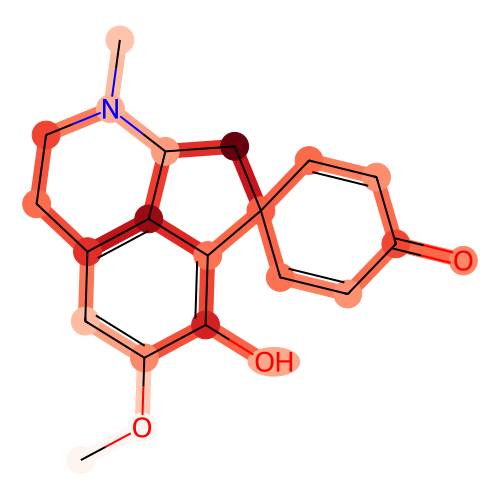}}\\
    \subfloat[hydrogen bond donors]{\includegraphics[width=0.33\linewidth]{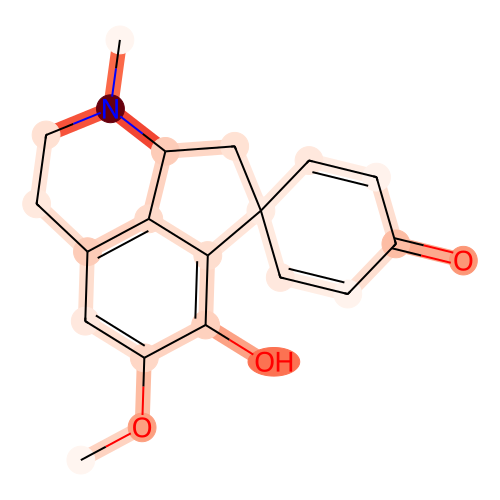}}
    \subfloat[acceptors]{\includegraphics[width=0.33\linewidth]{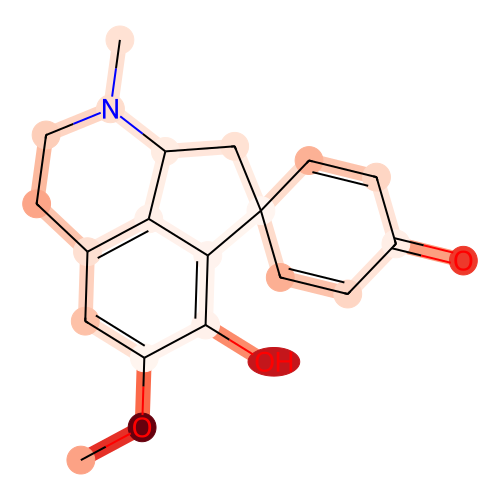}}
    \subfloat[three rules]{\includegraphics[width=0.33\linewidth]{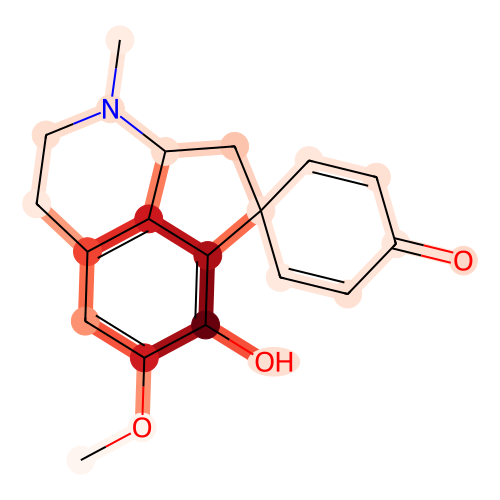}}\\

    \caption{Visualization of attention for BBBP on molecule \\ COc1cc2CCN(C)C3CC4(C=CC(=O)C=C4)c(c1O)c23}
    \label{appendix:fig:visualization}
\end{figure*}

\begin{figure*}[htbp]
    
    \centering
    \subfloat[brain blood barrier]{\includegraphics[width=0.33\linewidth]{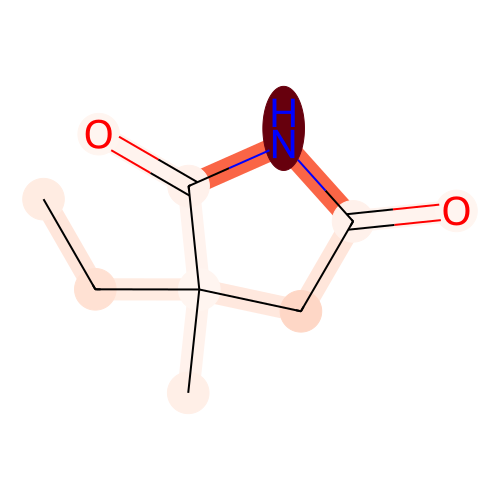}}
    \subfloat[molecular weight]{\includegraphics[width=0.33\linewidth]{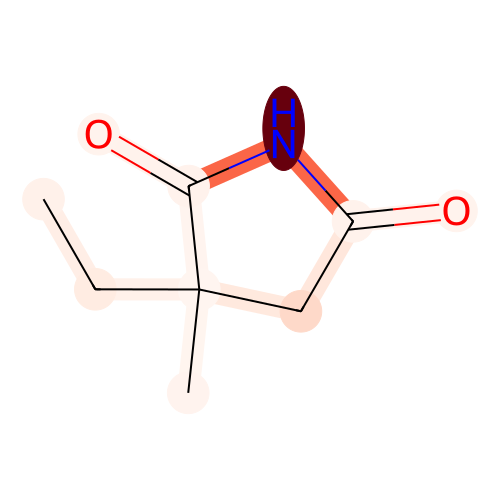}}
    \subfloat[500]{\includegraphics[width=0.33\linewidth]{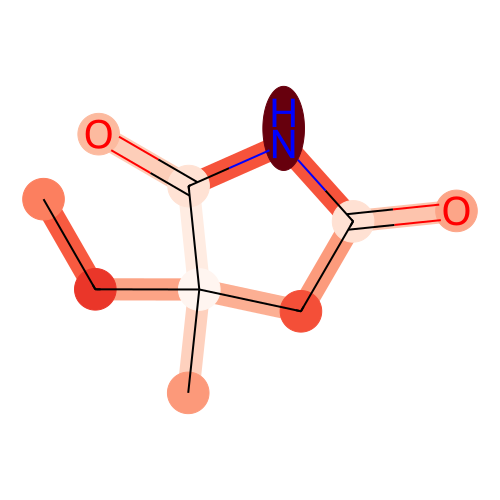}}\\
    \subfloat[LogP]{\includegraphics[width=0.33\linewidth]{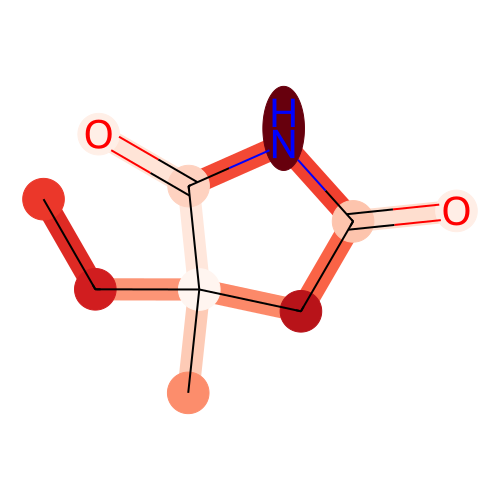}}
    \subfloat[2-4]{\includegraphics[width=0.33\linewidth]{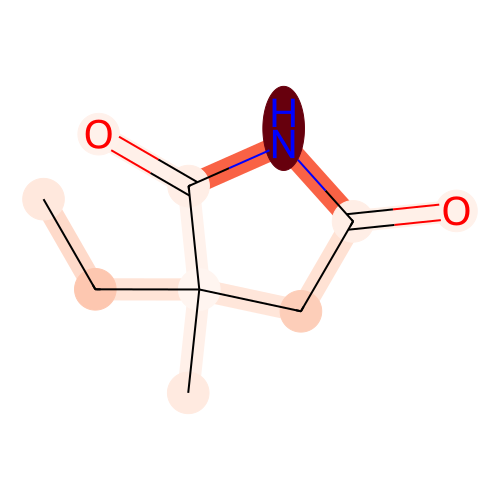}}
    \subfloat[five]{\includegraphics[width=0.33\linewidth]{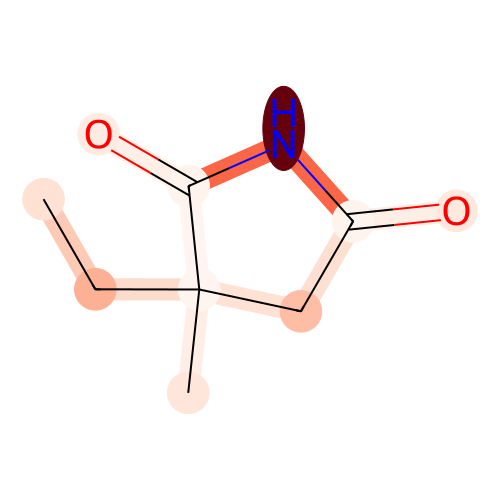}}\\
    \subfloat[hydrogen bond donors]{\includegraphics[width=0.33\linewidth]{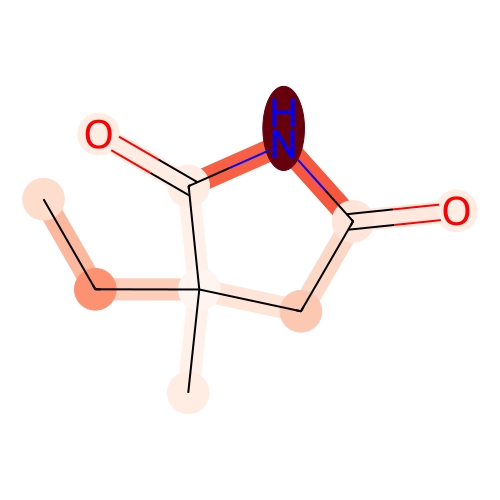}}
    \subfloat[acceptors]{\includegraphics[width=0.33\linewidth]{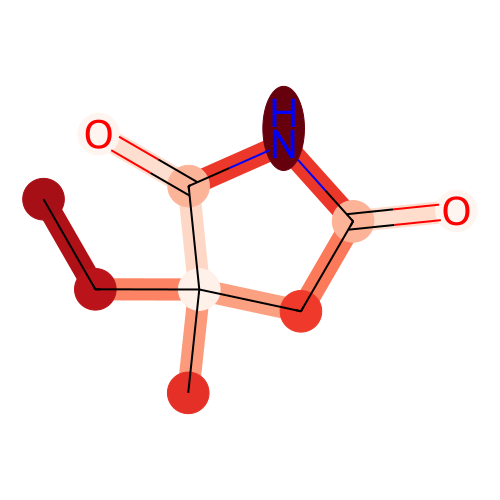}}
    \subfloat[three rules]{\includegraphics[width=0.33\linewidth]{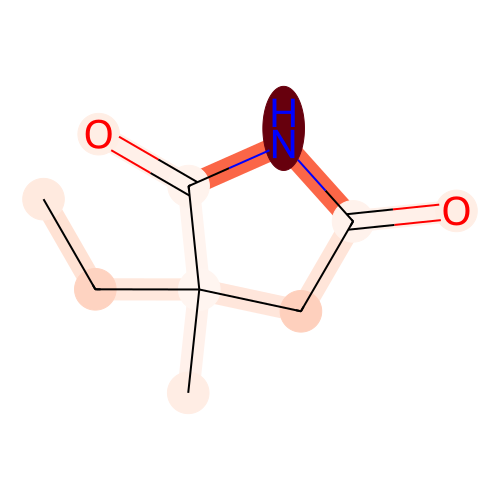}}\\

    \caption{Visualization of attention for BBBP on molecule \\ CCC1(C)CC(=O)NC1=O}
    \label{appendix:fig:visualization}
\end{figure*}

\end{document}